\pgfplotsset{compat=1.5}
\newtheorem{theorem}{Theorem}[section]
\newtheorem{corollary}[theorem]{Corollary}
\newtheorem{lemma}[theorem]{Lemma}
\newtheorem{definition}[theorem]{Definition}
\newtheorem{example}[theorem]{Example}
\newenvironment{proofof}[1]{\begin{trivlist} \item {\bf Proof
#1:~~}}
  {\qed\end{trivlist}}
\newcommand{\namedref}[2]{\hyperref[#2]{#1~\ref*{#2}}}
\newcommand{\thmlab}[1]{\label{thm:#1}}
\newcommand{\thmref}[1]{\namedref{Theorem}{thm:#1}}
\newcommand{\lemlab}[1]{\label{lem:#1}}
\newcommand{\lemref}[1]{\namedref{Lemma}{lem:#1}}
\newcommand{\corlab}[1]{\label{cor:#1}}
\newcommand{\corref}[1]{\namedref{Corollary}{cor:#1}}
\newcommand{\seclab}[1]{\label{sec:#1}}
\newcommand{\secref}[1]{\namedref{Section}{sec:#1}}
\newcommand{\figlab}[1]{\label{fig:#1}}
\newcommand{\figref}[1]{\namedref{Figure}{fig:#1}}
\newcommand{\alglab}[1]{\label{alg:#1}}
\renewcommand{\algref}[1]{\namedref{Algorithm}{alg:#1}}
\newcommand{\deflab}[1]{\label{def:#1}}
\newcommand{\defref}[1]{\namedref{Definition}{def:#1}}
\newcommand{\exlab}[1]{\label{ex:#1}}
\newcommand{\exref}[1]{\namedref{Example}{ex:#1}}
\newcommand{\Ex}[1]{\ensuremath{\mathbb{E}\left[#1\right]}}
\renewcommand{\O}[1]{\ensuremath{\mathcal{O}\left(#1\right)}}
\newcommand{\tO}[1]{\ensuremath{\tilde{\mathcal{O}}\left(#1\right)}}
\newcommand{\diam}[1]{\ensuremath{\mathrm{diam}\left(#1\right)}}
\newcommand{\ddim}[1]{\ensuremath{\mathrm{ddim}\left(#1\right)}}
\newcommand{\eps}{\varepsilon}
\def \calA    {\mdef{\mathcal{A}}}
\def \calB    {\mdef{\mathcal{B}}}
\def \calE    {\mdef{\mathcal{E}}}
\def \calF    {\mdef{\mathcal{F}}}
\def \calN    {\mdef{\mathcal{N}}}
\def \calP    {\mdef{\mathcal{P}}}
\def \calQ    {\mdef{\mathcal{Q}}}
\def \calR    {\mdef{\mathcal{R}}}
\def \calS    {\mdef{\mathcal{S}}}
\def \calV    {\mdef{\mathcal{V}}}
\def \calX    {\mdef{\mathcal{X}}}
\def \calY    {\mdef{\mathcal{Y}}}
\def \be    {\mdef{\mathbf{e}}}
\def \dist    {\mdef{\mathrm{d}}}
\newcommand{\mdef}[1]{{\ensuremath{#1}}\xspace}  
\DeclareMathOperator*{\argmin}{argmin}
\DeclareMathOperator*{\polylog}{polylog}
\DeclareMathOperator*{\poly}{poly}
\newcommand{\ignore}[1]{}
\newif\ifnotes\notestrue 
\newcommand{\samson}[1]{\textcolor{blue}{{\bf (Samson:} {#1}{\bf ) }} \marginpar{\tiny\bf
             \begin{minipage}[t]{0.5in}
               \raggedright S:
            \end{minipage}}}
\newcommand{\david}[1]{\textcolor{purple}{{\bf (David:} {#1}{\bf ) }} \marginpar{\tiny\bf
             \begin{minipage}[t]{0.5in}
               \raggedright D:
            \end{minipage}}} 
\newcommand{\shenghao}[1]{\textcolor{brown}{{\bf (Shenghao:} {#1}{\bf ) }} \marginpar{\tiny\bf
             \begin{minipage}[t]{0.5in}
               \raggedright D:
            \end{minipage}}} 
\newcommand{\samson}[1]{}
\newcommand{\david}[1]{}
\newcommand{\shenghao}[1]{}
\renewcommand*{\@fnsymbol}[1]{\textcolor{mahogany}{\ensuremath{\ifcase#1\or *\or \dagger\or \ddagger\or
 \mathsection\or \triangledown\or \mathparagraph\or \|\or **\or \dagger\dagger
   \or \ddagger\ddagger \else\@ctrerr\fi}}}
\providecommand{\email}[1]{\href{mailto:#1}{\nolinkurl{#1}\xspace}}
\definecolor{mahogany}{rgb}{0.75, 0.25, 0.0}
\definecolor{darkblue}{rgb}{0.0, 0.0, 0.55}
\definecolor{darkpastelgreen}{rgb}{0.01, 0.75, 0.24}
\definecolor{darkgreen}{rgb}{0.0, 0.2, 0.13}
\definecolor{darkgoldenrod}{rgb}{0.72, 0.53, 0.04}
\definecolor{darkred}{rgb}{0.55, 0.0, 0.0}
\definecolor{forestgreenweb}{rgb}{0.13, 0.55, 0.13}
\definecolor{greencss}{rgb}{0.0, 0.5, 0.0}
\definecolor{bleudefrance}{rgb}{0.19, 0.55, 0.91}
  \DeclareFontShape{T1}{lmr}{m}{scit}{<->ssub*lmr/m/scsl}{}%
\begin{document}

\allowdisplaybreaks

\title{Transductive and Learning-Augmented Online Regression}
\author{
Vinod Raman\thanks{University of Michigan.
E-mail: \email{vkraman@umich.edu}. The author acknowledges support from the NSF Graduate Research Fellowship.}
\and
Shenghao Xie\thanks{Texas A\&M University. 
E-mail: \email{xsh1302@gmail.com}. 
Supported in part by NSF CCF-2335411.}
\and
Samson Zhou\thanks{Texas A\&M University. 
E-mail: \email{samsonzhou@gmail.com}. 
Supported in part by NSF CCF-2335411. 
The author gratefully acknowledges funding provided by the Oak Ridge Associated Universities (ORAU) Ralph E. Powe Junior Faculty Enhancement Award.}
}
\maketitle

\begin{abstract}
Motivated by the predictable nature of real-life in data streams, we study online regression when the learner has access to predictions about future examples. In the extreme case, called transductive online learning, the sequence of examples is revealed to the learner before the game begins. For this setting, we fully characterize the minimax expected regret in terms of the fat-shattering dimension, establishing a separation between transductive online regression and (adversarial) online regression. Then, we generalize this setting by allowing for noisy or \emph{imperfect} predictions about future examples. Using our results for the transductive online setting, we develop an online learner whose minimax expected regret matches the worst-case regret, improves smoothly with prediction quality, and significantly outperforms the worst-case regret when future example predictions are precise, achieving performance similar to the transductive online learner. This enables learnability for previously unlearnable classes under predictable examples, aligning with the broader learning-augmented model paradigm.
\end{abstract}

\section{Introduction}
Online learning is framed as a sequential game between a learner and an adversary. 
In each round, the learner first makes a prediction after which the adversary evaluates the learner's prediction, typically by producing a ground-truth label. 
In contrast to the classical batch learning setting, where one places distributional assumptions on the data-generating process, in online learning, we place no assumptions on the data-generating process, even allowing the adversary to be adaptive to the past actions of the learner. 
Due to its level of generality, online learning has received substantial attention over the years. 
While online learning literature is too vast to review comprehensively, we include a detailed discussion of the most relevant works in \secref{sec:related:work}.

In this work, we focus on online \emph{regression}, where the learner's predictions are measured via a well-structured loss function $\ell(\cdot, \cdot)$, e.g., the $\ell_1$-loss $\ell(y, \widehat{y}) = |y-\widehat{y}|$.
%
The online regression problem is formally defined as the following $T$-round interactions between the learner $\calA$ and the adversary: In each round $t \in [T]$, the adversary picks a labeled example $(x_t,y_t)$ from $\calX \times \calY$ and reveals the example $x_t$ to the learner. 
The learner then predicts $\widehat{y}_t$ based on historical observations $(x_1, y_1), \ldots, (x_{t-1}, y_{t-1})$ and the current example $x_t$.
Finally, the adversary reveals the actual label $y_t$ to the learner, and the learner suffers loss $\ell(\widehat{y}_t,y_t)$.
Given a function class $\calF \subset \calY^{\calX}$, the goal of the learner is to minimize the minimax expected regret
\begin{equation*} 
R_{\calA}(T,\calF) := \sup_{(x_1,\ldots, x_T) \subset {\calX}} \sup_{(y_1,\ldots, y_T) \subset {\calY}} \left(\mathbb{E}_{\calA}\left[\sum_{t=1}^T \ell(\calA_t,y_t)\right] - \inf_{f \in \calF} \sum_{t=1}^T \ell(f(x_t),y_t) \right),
\end{equation*}
which is defined as the difference between the cumulative loss of the learner $\calA$ and the cumulative loss of the best function in $\calF$.
We say that a function class $\calF$ is online learnable if $\inf_{\calA} R_{\calA}(T,\calF) = o(T)$, that is, there exists a learner who achieves average regret that is sublinear in $T$ for all possible choices of labeled examples given by the adversary.

\cite{RakhlinST15} showed that a combinatorial parameter called  the sequential fat-shattering dimension fully characterizes learnability -- a class is online learnable if and only if its sequential fat shattering dimension is finite. However, this result is discouraging given the restrictive nature of the sequential fat shattering dimension. For instance, even simple function classes, such as functions with bounded variation in $[0,1]$, have infinite sequential fat-shattering dimensions, which means that they are not online learnable.
This challenge arises from worst-case scenarios, as the adversary is allowed to choose any labeled example sequences, potentially adapting to the learner’s output.
In practice, however, data streams often exhibit predictable patterns, so the worst-case assumption can be relaxed \citep{RamanT24}.
Given this intuition, we investigate online regression with various levels of prior knowledge about the sequence of examples $x_1, \ldots, x_T$.
As motivation, consider the following example.

\begin{example}
\exlab{ex:transductive}
A smart building management system models energy consumption $y_t \in \mathbb{R}$ as a function of features $x_t \in \mathbb{R}^d$, such as temperature and occupancy count, to estimate daily energy usage. 
If predicted consumption significantly deviates from actual consumption $y_t$, either underestimation causes energy shortages or overestimation wastes resources, incurring a loss $\ell(y_t, \widehat{y}_t)$. 

In practice, the system has prior knowledge of daily occupancy schedules and weather forecasts.
However, it does not know exactly the energy consumption $y_t$ due to variable factors such as occupant behavior, equipment usage, or unexpected events.
Thus, it predicts energy consumption based on its knowledge of the sequence of examples $x_1 \ldots x_T$, which forms an online regression problem.
\end{example}

 Real-world scenarios, like the example above, have inspired recent research on learning-augmented algorithms~\citep{MitzenmacherV20}, which enhances the algorithm's performance using additional information about the problem instance given by machine-learned predictions. 
For example, machine-learned predictions have been utilized to achieve more efficient data structures~\citep{Mitzenmacher18,LinLW22,FuNSZZ25}, algorithms with faster runtimes~\citep{DinitzILMV21,ChenSVZ22,DaviesMVW23}, mechanisms with better accuracy-privacy tradeoffs~\citep{Khodak0DV23}, streaming algorithms with better accuracy-space tradeoffs~\citep{HsuIKV19,IndykVY19,JiangLLRW20,ChenIW22,ChenEILNRSWWZ22,LLLVW23}, and accuracy guarantees beyond NP hardness~\citep{BravermanEWZ25,ErgunFSWZ22,NguyenCN23,CSLSZ25}. 
A more detailed summary is deferred to \secref{sec:related:work}.
Motivated by work on learning-augmented algorithms, in this paper, we study the following question:
\begin{center}
    \emph{Can predictions about the future examples be used to get better-than-worst-case regret bounds for online regression?}
\end{center}

\cite{rakhlin2013online} studied this question in the context of online linear optimization. 
Specifically if the sequence encountered by the learner is described well by a known “predictable process”, their algorithms enjoy tighter bounds as compared to the typical worst case bounds. 
Additionally, their methods achieved the usual worst-case regret bounds if the sequence is not benign. 
More recently, \cite{RamanT24} studied this question in the context of online \emph{classification}.
Their proposed algorithm performs optimally when the predictions are nearly exact, while ensuring the worst-case guarantee.
Furthermore, they characterize the expected number of mistakes as a function of the quality of predictions, interpolating between instance and worst-case optimality.
In this work, we study this same question in the more general setting of online (non-parametric) \emph{regression}. 
We demonstrate that learning-augmented algorithms achieve better minimax expected regret compared to the online learner under worst-case scenarios.
We consider two settings where we apply the learning-augmented framework: the transductive online regression setting where the learner has full knowledge about the sequence of examples; and the online regression with predictions setting, where the learner has access to a Predictor for future examples.

\paragraph{Transductive online learning.}
In transductive online learning, initially introduced by \cite{Ben-DavidKM97} and recently studied by \cite{hanneke2024trichotomy}, the entire sequence of examples $x_1, \ldots, x_T$ picked by the adversary is revealed to the learner before the game starts.  In each round $t \in [T]$, the learner makes a prediction $\widehat{y}_t$ using the information from $(x_1,y_1), \ldots, (x_t,y_t), x_{t+1}, \ldots, x_T$.
Lastly, the adversary reveals the actual label $y_t$ to the learner, and the learner suffers a loss.
In many situations, however, we do not have full access to the examples $x_1, \ldots, x_T$. This motivates a generalization where the learner has access to potentially noisy predictions of future examples. 

\paragraph{Online regression with predictions.}
In online learning with predictions \citep{RamanT24}, the learner has access to a Predictor $\calP$, which observes the past examples $x_1, \ldots, x_t$ and predicts future examples.
In each round $t \in [T]$, the learner $\calA$ queries the Predictor and receives potentially noisy predictions $\widehat{x}_{t+1}, \ldots,\widehat{x}_T$. 
The learner $\calA$ then makes a prediction $\widehat{y}_t$ based on the current example $x_t$, the predictions $\widehat{x_{t+1}},\ldots, \widehat{x}_t$, and the previous labeled-examples $(x_1,y_1),\ldots(x_{t-1},y_{t-1})$.
We allow the Predictor to be adaptive, which means that can change its predictions about future examples based on the current example $x_t.$
We quantify the performance of a Predictor $\calP$ using two metrics: the zero-one metric that counts the number of times its prediction of the next example is wrong and the $\eps$-ball metric that counts the number of times its prediction of the next example is sufficiently far away from the true next example with respect to some metric of interest. 

\subsection{Our Results}
\seclab{sec:results}
In this work, we seek to understand how the regret scales as a function of the quality of the predictions.
In particular, under what circumstances can we do better than the worst-case regret?
Motivated by this question, we provide the following result for transductive online regression.


\begin{theorem}[Transductive online regression, informal statement for \thmref{thm:trans:entropy}]
\thmlab{thm:trans:informal}
A function class $\calF \subseteq [0,1]^\calX$ is transductive online learnable under the $\ell_1$-loss if and only if its fat-shattering dimension is finite. 
\end{theorem}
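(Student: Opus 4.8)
The statement is a characterization, so the plan is to prove the two implications separately: that finite fat-shattering dimension at every scale is sufficient for transductive online learnability, and that it is necessary.

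For necessity I would argue the contrapositive: if $\mathrm{fat}_\gamma(\calF)=\infty$ for some scale $\gamma>0$, then $\calF$ is not transductive online learnable. Fix any horizon $T$ and, using $\mathrm{fat}_\gamma(\calF)=\infty$, select points $x_1,\ldots,x_T$ that are $\gamma$-fat-shattered with witnesses $s_1,\ldots,s_T\in[\gamma,1-\gamma]$; these constitute the (fully known) example sequence. The key is to choose the labels cleverly: draw $\sigma_1,\ldots,\sigma_T$ uniformly from $\{\pm1\}$ and set $y_t=1$ if $\sigma_t=+1$ and $y_t=0$ if $\sigma_t=-1$, rather than placing the labels at the witness margins $s_t\pm\gamma$. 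Since $\widehat{y}_t\in[0,1]$ is committed before $y_t$ is revealed and is independent of the fresh bit $\sigma_t$, the learner suffers $\mathbb{E}|\widehat{y}_t-y_t|=\tfrac12$ in every round, so $\mathbb{E}[\text{learner loss}]=T/2$. On the other hand, for the realized pattern $\sigma$ the fat-shattering property yields $f_\sigma\in\calF$ with $f_\sigma(x_t)\ge s_t+\gamma$ when $\sigma_t=+1$ and $f_\sigma(x_t)\le s_t-\gamma$ when $\sigma_t=-1$, so its per-round loss is at most $1-s_t-\gamma$ or $s_t-\gamma$ respectively, giving $\mathbb{E}_\sigma\big[\inf_{f}\sum_t|f(x_t)-y_t|\big]\le\sum_t\big(\tfrac12(1-s_t-\gamma)+\tfrac12(s_t-\gamma)\big)=(\tfrac12-\gamma)T$. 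Subtracting, the expected regret is at least $\gamma T$, which is linear in $T$, so no learner achieves $o(T)$ regret. The nontrivial point is the label choice: using $\{0,1\}$-labels instead of the witness margins turns the otherwise uncontrolled overshoot of the shattering functions into a benefit, keeping the comparator strictly below the learner's loss.

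For sufficiency, suppose $\mathrm{fat}_\gamma(\calF)<\infty$ for every $\gamma>0$. The crucial feature of the transductive model is that the learner sees all of $x_1,\ldots,x_T$ before play, so it may work directly with the fixed projected class $\calF|_{x_{1:T}}=\{(f(x_1),\ldots,f(x_T)):f\in\calF\}\subseteq[0,1]^T$. I would first invoke the classical conversion of fat-shattering dimension into metric entropy to obtain an $\ell_\infty$ (or average-$\ell_1$) $\gamma$-cover $V\subseteq[0,1]^T$ of $\calF|_{x_{1:T}}$ with $\log|V|=O\big(\mathrm{fat}_{c\gamma}(\calF)\cdot\mathrm{polylog}(T/\gamma)\big)$ for an absolute constant $c$; this is precisely the quantitative entropy statement referenced by \thmref{thm:trans:entropy}. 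Treating each vector in $V$ as an expert, I would run exponential weights (Hedge) on the losses $|v_t-y_t|\in[0,1]$, guaranteeing expected cumulative loss at most $\min_{v\in V}\sum_t|v_t-y_t|+O(\sqrt{T\log|V|})$. Since some $v\in V$ is within $\gamma$ of the minimizer $f^\star$ of $\sum_t|f(x_t)-y_t|$, the best expert loses at most $\inf_f\sum_t|f(x_t)-y_t|+\gamma T$, and hence the regret is at most $\gamma T+O\big(\sqrt{T\cdot\mathrm{fat}_{c\gamma}(\calF)\cdot\mathrm{polylog}(T)}\big)$.

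Finally I would take limits. For any fixed $\gamma$ the second term is $O(\sqrt{T\log T})=o(T)$, so $\limsup_T R_\calA(T,\calF)/T\le\gamma$; as $\gamma>0$ is arbitrary, $R_\calA(T,\calF)=o(T)$ and $\calF$ is transductive online learnable. I expect the main technical obstacle to be the entropy bound in the sufficiency direction: one must show that finiteness of the fat-shattering dimension forces the $\gamma$-covering number of $\calF|_{x_{1:T}}$ to grow only polynomially in $T$ at each fixed scale, with a usable dependence on the scale. This is exactly the step that exploits the transductive assumption — the sequence is fixed and known, so the classical (non-sequential) fat-shattering dimension suffices to control the entropy — and it is what produces the claimed separation from adversarial online regression, where the adaptive choice of examples instead forces the larger sequential fat-shattering dimension of \cite{RakhlinST15}.
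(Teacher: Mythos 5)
Your proposal is correct, and both halves track arguments that already appear in the paper, so the comparison is mostly a matter of which of the paper's two upper-bound routes you chose. For sufficiency, your cover-plus-Hedge learner is essentially the paper's explicit transductive learner (\algref{alg:trans}, analyzed in \thmref{thm:trans}): an $\ell_\infty$ $\gamma$-cover of $\calF$ restricted to the known sequence $x_{1:T}$, whose log-size is controlled by $\mathrm{fat}_{\gamma/4}(\calF)\cdot\polylog(T/\gamma)$ via \thmref{thm:covering:fat:off}, fed into multiplicative weights; your final limiting argument ($T\to\infty$ for fixed $\gamma$, then $\gamma\to 0$) correctly converts the bound $\gamma T+O\bigl(\sqrt{T\cdot\mathrm{fat}_{c\gamma}(\calF)\,\polylog T}\bigr)$ into $o(T)$ regret. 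The paper's headline proof of \thmref{thm:trans:entropy} instead uses a non-constructive minimax-swap and symmetrization argument to bound the regret by the (non-sequential) Rademacher complexity and then a Dudley-type entropy integral; that buys a sharper quantitative rate but is not needed for the qualitative if-and-only-if. For necessity, your construction is the paper's lower bound (\thmref{thm:trans:lb}) specialized to the case $\mathrm{fat}_\gamma(\calF)=\infty$: both place i.i.d.\ random labels on a fat-shattered sequence and use the shattering function as the comparator. You encode labels as $\{0,1\}$ endpoints and read off the comparator's loss directly from the margins, while the paper rescales to $[-1,1]$ and uses the identity $|a-y|=1-ay$ plus Khintchine's inequality --- the latter only needed for the finite-dimension $\sqrt{T\cdot\mathrm{fat}_\alpha}$ rate, which your qualitative statement does not require. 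One bookkeeping point: under the paper's convention $\sigma_t(f(x_t)-s_t)\ge\alpha/2$, your two-sided margin $\gamma$ corresponds to $\mathrm{fat}_{2\gamma}$, but since the fat-shattering dimension is monotone in the scale this only shifts constants and does not affect the argument.
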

This result establishes a separation between transductive online regression and online regression for function classes with finite fat-shattering dimensions but infinite sequential fat-shattering dimensions, e.g., the class of functions with bounded variations. 
Specifically, any function classes with infinite sequential fat-shattering dimension is not online learnable due to the lower bound in \cite{RakhlinST15}, but can be transductive online learnable if it has finite fat-shattering dimension. As a corollary of \thmref{thm:trans:informal}, we get that while the class of functions with $V$-bounded variations is not online learning, it is transductive online learnable with minimax regret scaling like $\tilde{O}({\sqrt{VT}}).$

Motivated by scenarios where having full access to $x_{1}, \dots, x_T$ is unrealistic, our second result studies online regression with predictions, where instead of having access to $x_{1}, \dots, x_T$, the learner has black-box access to a Predictor $\calP$ and a transductive online learner $\calB$. 
In this more general setting, we give an online transductive learner whose minimax expected regret can be written a function of the mistake-bound of $\calP$ and interpolates between instance and worst-case optimality depending on the quality of $\calP$'s predictions, (see \thmref{thm:ol:pred} for a precise statement).
Our learning algorithm is \emph{consistent} in that it has the same minimax expected regret as $\calB$ when the predictions are exact, and is \emph{robust} in that it never has worse regret than the minimax optimal regret in the fully adversarial online learning model.

We measure the mistake-bound of $\calP$ with respect to two different metrics. 
The first is the \emph{zero-one metric}, which measures the expected number of incorrect predictions $\widehat{x}_t \neq x_t$.
As examples are often noisy in real-world applications, perfectly predicting the next example is unlikely. 
As a result, our second metric is the \emph{$\eps$-ball metric}, which measures the expected number of predictions that are outside of the $\eps$-ball of the actual example. 
To get a sense of how the minimax expected regret of our learner scales with the mistake-bound of our Predictor, \corref{thm:ol:pred:informal} provides upper bounds on the minimax expected regret of our online learner for both mistake-bound guarantees for the Predictor. 
Here, we omit the polylogarithmic factors in $T$.


\begin{corollary}[Informal statement for online regression with predictions]
\corlab{thm:ol:pred:informal}
Given a Predictor $\calP$ and a transductive online learner $\calB$ with minimax expected regret $R_{\calB}^{\mathbf{tr}}\left(T,\calF\right)$:
\begin{itemize}
    \item If $\calP$ makes $\O{T^p}$ mistakes under the zero-one metric, then for any function class $\calF \subseteq [0,1]^{\calX}$, there is an online learner $\calA$ whose minimax expected regret under the $\ell_1$ loss is at most
    \[\O{T^p}R_{\calB}^{\mathbf{tr}}\left(T^{1-p}, \calF\right) + \sqrt{T \log^2 T}.\]
    \item If $\calP$ makes $M_{\calP}(\eps, x_{1:T}) = \O{\frac{T^p}{\eps^q}}$ mistakes under the $\eps$-ball metric, then for any $L_{\mathbf{hyp}}$-Lipschitz function class $\calF \subseteq [0,1]^{\calX}$, there is an online learner $\calA$ whose minimax expected regret under the $\ell_1$ loss is at most
    \[\inf_{\eps>0}\left\{\O{ \frac{T^p}{\eps^q}} R_{\calB}^{\mathbf{tr}}\left(\eps^q T^{1-p},\calF\right) + \eps L_{\mathbf{hyp}} \cdot T + \sqrt{T \log^2 T}\right\}.\]
\end{itemize}
\end{corollary}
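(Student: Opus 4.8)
The plan is to obtain both bounds as direct instantiations of the general regret guarantee of \thmref{thm:ol:pred}, which controls the minimax expected regret of $\calA$ in terms of the mistake-bound of $\calP$ and the transductive regret of $\calB$. Informally, the mistakes of $\calP$ cut the horizon $[T]$ into maximal runs on which the example sequence fed to $\calB$ agrees with the realized one; on each run $\calA$ faithfully simulates the transductive learner $\calB$, and an aggregation (experts) layer splices the runs together at an additive cost of $\sqrt{T\log^2 T}$, the $\log$ factors arising from a pool of experts that is polynomial in $T$. With this decomposition in hand, each part of the corollary is a matter of substituting the relevant mistake bound and optimizing.

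For the zero-one metric I would substitute $M = \O{T^p}$ into \thmref{thm:ol:pred}. This partitions $[T]$ into at most $N = \O{T^p}$ runs of lengths $T_1, \dots, T_N$ with $\sum_i T_i = T$, contributing a transductive term $\sum_i R_\calB^{\mathbf{tr}}(T_i)$. Since $R_\calB^{\mathbf{tr}}(\cdot)$ is concave and vanishes at $0$, this sum is maximized at the balanced partition $T_i = T/N$, where it equals $N\, R_\calB^{\mathbf{tr}}(T/N) = \O{T^p}\, R_\calB^{\mathbf{tr}}(T^{1-p})$. Adding the aggregation cost yields $\O{T^p}\, R_\calB^{\mathbf{tr}}(T^{1-p}) + \sqrt{T\log^2 T}$, as claimed.

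For the $\eps$-ball metric I would reduce to the zero-one analysis at the price of a discretization error controlled by Lipschitzness. Fix $\eps > 0$ and call a round good if $\widehat{x}_t$ lies in the $\eps$-ball of $x_t$ and bad otherwise; by hypothesis there are at most $M_\calP(\eps, x_{1:T}) = \O{T^p/\eps^q}$ bad rounds. Instantiating the algorithm of \thmref{thm:ol:pred} with the $\eps$-ball notion of mistake, $\calB$ is restarted only on bad rounds, so the horizon splits into $\O{T^p/\eps^q}$ runs. I would decompose the regret against the true comparator as the regret of $\calB$ against the fed-sequence comparator (handled run by run by the transductive guarantee) plus the comparator shift $\inf_f \sum_t \ell(f(\widehat{x}_t), y_t) - \inf_f \sum_t \ell(f(x_t), y_t)$. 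On good rounds, $L_{\mathbf{hyp}}$-Lipschitzness of $\calF$ together with the $1$-Lipschitzness of $\ell_1$ bounds each term of this shift by $L_{\mathbf{hyp}}\|\widehat{x}_t - x_t\| \le \eps L_{\mathbf{hyp}}$, so the total shift is at most $\eps L_{\mathbf{hyp}} T$. Applying the balanced-partition bound with $N = \O{T^p/\eps^q}$ gives the transductive term $\O{T^p/\eps^q}\, R_\calB^{\mathbf{tr}}(\eps^q T^{1-p})$; adding the shift and the aggregation cost, then taking the infimum over the free parameter $\eps$, produces the stated bound.

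The step I expect to be the main obstacle is the good-round analysis: one must show that tolerating $\eps$-perturbations of the fed example sequence degrades the guarantee — not merely the comparator in isolation — by only $\O{\eps L_{\mathbf{hyp}}}$ per round, uniformly over $f \in \calF$ and without the per-round perturbations compounding across a run. Isolating the entire effect in the comparator shift is what makes this clean: under $\ell_1$ the learner's own cumulative loss $\sum_t \ell(\widehat{y}_t, y_t)$ depends only on the played values $\widehat{y}_t$ and the true labels $y_t$, so it is unchanged, and only the $f(\cdot)$-evaluations in the comparator move. This is precisely where Lipschitzness of the hypothesis class is indispensable, and it is what forces both the extra $\eps L_{\mathbf{hyp}} T$ term and the optimization over $\eps$. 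The concavity/balancing step and the substitution in the zero-one case are then routine.
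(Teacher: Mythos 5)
Your proposal is correct and takes essentially the same route as the paper: the corollary is obtained by plugging the assumed mistake bounds into the general guarantee of \thmref{thm:ol:pred} (equivalently \thmref{thm:main:lds:0:1} and \thmref{thm:main:lds:eps}), with the $\eps$-ball case handled by the same Lipschitz comparator-shift argument and a final infimum over $\eps$. One minor imprecision in your narrative: the balanced term $N\,R^{\mathbf{tr}}_{\calB}(T/N)$ does not come from Jensen applied to the mistake-induced run lengths (each restarted copy of $\calB$ is only guaranteed regret for the \emph{remaining} horizon, not the run length), but from the equi-distant-partition experts aggregated by MWA --- machinery already packaged inside the theorem you invoke, so your final bounds are unaffected.
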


\corref{thm:ol:pred:informal} is a combination of \thmref{thm:main:lds:0:1} (minimax expected regret under the zero-one metric) and \thmref{thm:main:lds:eps} (minimax expected regret under the $\eps$-ball metric).
As a concrete example, we show that the function class with bounded variation is online learnable if the sequence of examples is predictable
under the zero-one metric, that is, the number of mistakes of the Predictor grows sublinearly with the time horizon (see \corref{cor:lds:eps}). 
In addition, we identify a subclass of functions with bounded variation and a large Lipschitz constant (see \defref{def:ramp}), such that it is not online learnable under the worst case, but online learnable given a Predictor with small error rate under the $\eps$-ball metric. 
These results establish a separation between online regression with predictions and online regression for various function classes. 
We note that our results answer the open question in Section 4 of \cite{RamanT24} about the learnability of online regression under general measures of predictability.

\subsection{Related Work}
\seclab{sec:related:work}

In this section, we present the related work.
\paragraph{Algorithms with predictions.}
Machine learning models have achieved remarkable success across a wide range of application domains, often delivering predictions and decisions of impressive quality in practice. 
However, despite these empirical advances, such models rarely come with provably correct worst-case guarantees, and can result in embarrassingly inaccurate predictions when generalizing to previously unseen distributions \citep{SzegedyZSBEGF13}. 
Learning-augmented algorithms~\citep{MitzenmacherV20} combine predictive models with principled algorithmic design to achieve provable worst-case guarantees while still benefiting from the strengths of data-driven approaches. 
The line of work most relevant to our setting is the direction of online algorithms with predictions, which achieve better performance than information-theoretic limits~\citep{AnandGKP21,Anand0KP22,KhodakBTV22,AntoniadisGKK23}. 
Specific applications include ski rental~\citep{PurohitSK18,GollapudiP19,AnandGP20,WangLW20,WeiZ20,ShinLLA23}, scheduling, caching, and paging~\citep{LattanziLMV20,LykourisV21,ScullyGM22}, covering and packing~\citep{BamasMS20,ImKQP21,GrigorescuLSSZ22,GrigorescuLS24}, various geometric and graph objectives~\citep{AamandCI22,AlmanzaCLPR21,AzarPT22,JiangLLTZ22,AntoniadisCEPS23}. 
More recently, \cite{EliasKMM24} proposed a model in which the predictor can learn and adjust its predictions dynamically based on the data observed during execution. 
This approach differs from earlier work on learning-augmented online algorithms, where predictions are generated by machine learning models trained solely on historical data and remain fixed, lacking adaptability to the current input sequence. 
\cite{EliasKMM24} examined several fundamental problems, such as caching and scheduling, demonstrating that carefully designed adaptive predictors can yield stronger performance guarantees. 
We adopt a similar model established by \citep{RamanT24}, where we assume our algorithms have black-box access to an external mechanism that produces predictions about the examples, which evolve in response to the actual data encountered by the learning algorithm.

\paragraph{Online classification.}
Motivated by applications such as spam filtering, image recognition, and language modeling, online classification has a rich history in statistical learning theory. 
Building on foundational concepts like the VC dimension~\citep{VapnikC71} that characterize learnability in batch settings, \citep{Littlestone87} introduced the Littlestone dimension to precisely characterizes which binary hypothesis classes are online learnable in the realizable setting. 
This was subsequently extended to agnostic learning~\citep{Ben-DavidPS09} and multiclass settings with finite and unbounded label spaces~\citep{DanielySBS11, HannekeMRST23}. 
More recently, \citep{HannekeMS23, HannekeRSS24} developed the theory of transductive online classification, providing performance guarantees for both binary and multiclass problems where the learner has access to the entire unlabeled input sequence before making predictions.

Unfortunately, the Littlestone dimension is often regarded as an impossibility result, as it rules out even simple classes such as threshold functions for online learning. 
This barrier is due to worst-case analyses where an adversary can select any sequence of labeled examples, potentially adapting future choices based on the previous learner choices. 
In practice, however, many data sequences are far from adversarial and often exhibit regularity or structure that worst-case models fail to capture. 
For example, when predicting short-term stock price movements, prices often follow temporal trends and correlations that predictive models can exploit. 
Motivated by beyond-worst-case analyses of such scenarios, \cite{RamanT24} explored online classification with predictive advice, showing that learning-augmented algorithms can achieve improved performance given predictions about the examples that need to be classified, complementing transductive frameworks.

\paragraph{Online regression.}
While online regression extends online classification to sequential prediction with continuous outcomes, analyzing the complexity of the corresponding real-valued function classes often requires different tools than in classification.
The fat-shattering dimension~\citep{AlonBCH97} generalizes the VC dimension to continuous-valued functions and plays a key role in bounding sample complexity and learnability. 
Data-dependent capacity measures such as Rademacher complexity~\citep{BartlettM02} have been adapted to online settings through sequential Rademacher complexity and sequential fat-shattering dimension~\citep{RakhlinST10, RakhlinST15}, capturing the difficulty of learning under adversarial data streams. 
More recent advances use generic chaining and majorizing measures to tightly control worst-case sequential Rademacher complexity~\citep{BlockDR21}, establishing sharp uniform convergence and bounds that are robust to adversarial data. 
However, these results share a common limitation: they provide worst-case guarantees that often fail to capture improved performance on ``easy'' or structured data sequences.

\paragraph{Smoothed Online Learning.} In addition to auxiliary predictions, \emph{smoothed analysis} is another framework for studying beyond-worst-case guarantees \citep{spielman2004smoothed, spielman2009smoothed}. In the context of online regression, smoothed analysis involves placing some distributional assumptions on the data generation process. In particular, in each round, a smoothed adversary must choose and sample from a distribution  belonging to a sufficiently anti-concentrated family of distributions. This allows one to go beyond the worst-case results in the fully adversarial model, where the adversary can pick any sequence of examples. 

In the past couple years, there have been flurry papers studying online learnability under a smoothed adversary \citep{rakhlin2011online, haghtalab2018foundation, haghtalab2020smoothed, block2022smoothed, haghtalab2022oracle, blanchard2025agnostic, blanchard2025distributionally}. We review the most relevant ones here.  In the context of binary classification, \cite{haghtalab2018foundation} and \cite{haghtalab2020smoothed} show that this restriction on the adversary is enough for the VC dimension of a binary hypothesis class to be sufficient for online learnability. \cite{block2022smoothed} and \cite{blanchard2025agnostic} extend these results to online regression and prove an analogous result --  under a smoothed adversary, the fat-shattering dimension is sufficient for online learnability. In both cases, these results show that by placing some distributional assumptions on the input, online learning becomes as easy as batch learning. In this paper, we show a similar phenomena without needing any distributional assumptions on the examples, but given predictions about the future examples we will need to make predictions about. 


\section{Preliminaries}

Let $\calX$ denote the example space, let $\calY=[0,1]$ denote the label space, let $\calF \subset \calY^{\calX}$ denote the function class, and let $\ell$ be the loss function. 
Note that in \secref{sec:results}, we present the results under the $\ell_1$-loss function $\ell(\widehat{y},y) = |\widehat{y}-y|$, we consider a more general notion of convex and $L_{\mathbf{los}}$-Lipschitz loss function in the following sections.
Let $T$ denote the length of the sequence of examples.
Let $z_{1:T}$ denote the sequence of items $z_1,\ldots,z_T$.
Let $\tO{f} = f \cdot \polylog(f)$.
Given some event $\calE$, let ${\bf 1}_{\calE}$ be the indicator function, which is $1$ if $\calE$ holds, and $0$ otherwise.
Next, we introduce the formal definitions of the minimax expected regret and important complexity measures.

\subsection{Online Learning}
In the standard online learning setting, the game proceeds over $T$ rounds of interactions between the learner $\calA$ and the adversary: In each round $t \in [T]$, the adversary picks a labeled example $(x_t,y_t) \in \calX \times \calY$ and reveals $x_t$ to the learner. The learner then produces a prediction $\widehat{y}_t$ and receives the true label $y_t$.
Given a function class $\calF \subset \calY^{\calX}$, the learner aims to minimize the expected regret,
\[R^{\mathbf{ol}}_{\mathcal{A}}(T, \calF):=\sup_{x_{1: T} \in \mathcal{X}^T} \sup_{y_{1: T} \in \mathcal{Y}^T} \left( \Ex{\sum_{t=1}^T \left|\mathcal{A}\left(x_t\right) - y_t\right|} -\inf_{h \in \calF} \sum_{t=1}^T \left| h\left(x_t\right) - y_t\right| \right),\]
where the expectation is over randomness of the learner.
We say that a function class $\calF$ is online learnable if $\inf_\calA R^{\mathbf{ol}}_{\mathcal{A}}(T, \calF) = o(T)$.

\paragraph{Sequential fat-shattering dimension.}
The online learnability is characterized by the sequential fat-shattering dimension, a variant of the fat-shattering dimension (see \defref{def:fat}) in the online setting, which captures the sequential dependence of the interaction between the adversary and the online learner.
To understand this dimension, we begin with the notion of the Littlestone tree \citep{Littlestone87,RakhlinS14,RakhlinST15,RamanT24}, which encrypts the sequentially dependence of the examples.

\begin{definition}[The Littlestone tree, \citep{Littlestone87}]
A $\calX$-valued Littlestone tree $x$ of depth $T$ is a complete binary tree of depth $T$, where the nodes are labeled by examples $x \in \calX$ and the outgoing edges to the left and right of the nodes are labeled by $-1$ and $1$ respectively.
Given a binary string $\sigma = (\sigma_1, \ldots, \sigma_T) \in \{-1,1\}^T$ of depth $T$, we define a path $x(\sigma)$ induced by $\sigma$ to be $\{(x_i,\sigma_i)\}_{i=1}^T$, where $x_i$ is the example labeling the node following the prefix edges $(\sigma_1, \ldots, \sigma_{i-1})$ down the tree.
\end{definition}

For simplicity, we use $x_t(\sigma)$ to denote the example at the $t$-th entry of a path $x(\sigma)$, but we remark that $x_t(\sigma)$ depends only on the prefix path $(\sigma_1, \ldots, \sigma_{i-1})$.
Following this notion, we introduce the sequential fat-shattering dimension.

\begin{definition}[Sequential fat-shattering dimension, \citep{RakhlinST15}]
We say that a $\calX$-valued Littlestone tree of depth $T$ is $\alpha$-shattered by a function class $\calF \subset \calY^{\calX}$ if there exists a $\calY$-valued tree $y$ of depth $T$ such that
\[\forall \sigma \in \{-1,1\}^T, ~ \exists f \in \calF, ~~\text{s.t.}~~ \forall t \in [T], ~ \sigma_t(f(x_t(\sigma)) - y_t(\sigma)) \ge \alpha/2.\]
Here, the tree $y$ is called the witness of shattering.
The sequential fat-shattering dimension $\mathrm{fat}^{\mathbf{seq}}_{\alpha}(\calF)$ is defined as the largest $T$ such that $\calF$ $\alpha$-shatters a $\calX$-valued tree of depth $T$. In addition, $\mathrm{fat}^{\mathbf{seq}}_{\alpha}(\calF) = \infty$ if $\calF$ $\alpha$-shatters a $\calX$-valued tree of arbitrary depth.
\end{definition}

\cite{RakhlinST15} showed that, in the online setting, the expected regret is controlled by the sequential fat-shattering dimension, as stated below.

\begin{theorem}[Online learning, see Proposition 9 in \citep{RakhlinST15}]
\thmlab{thm:ol}
For any function class $\calF \subset [0,1]^{\calX}$ and $L_{\mathbf{los}}$-Lipschitz and convex loss function $\ell$, the expected regret of online learning satisfies
\[\inf_{\calA} \text{R}_{\calA}(T,\calF) \le 2L_{\mathbf{los}}T \cdot \inf_{\alpha \ge 0} \left(4 \alpha + \frac{12}{\sqrt{T}} \int_{\alpha}^1 \sqrt{  \mathrm{fat}^{\mathbf{seq}}_{\beta/4}(\calF) \log \frac{2eT}{\beta}} \mathrm{d} \beta \right),\]
where the infimum is taken on all online learner $\calA$. We denote this quantity by $\text{R}^{\mathbf{ol}}(T,\calF)$. 
\end{theorem}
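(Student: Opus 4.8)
The plan is to reproduce the argument underlying Proposition~9 of~\cite{RakhlinST15}, which proceeds through the now-standard pipeline: minimax value $\to$ sequential symmetrization $\to$ sequential Rademacher complexity $\to$ chaining $\to$ combinatorial covering bound. First I would pass from the convex, $L_{\mathbf{los}}$-Lipschitz loss $\ell$ to the underlying function class. Since $\inf_{\calA} R_\calA(T,\calF)$ is the value of a sequential minimax game, I would apply the sequential minimax theorem (a round-by-round backward-induction interchange of the learner's $\inf$ and the adversary's $\sup$, valid once the learner is allowed randomized strategies) to express this value as an expectation over adversary moves. I would then introduce an independent ``tangent'' copy of the label sequence and symmetrize, attaching Rademacher signs $\eps_t \in \{-1,1\}$ to the differences; the Lipschitz contraction step strips off $\ell$ at the cost of the factor $L_{\mathbf{los}}$, and the overall count of $T$ rounds together with the symmetrization factor produces the leading $2L_{\mathbf{los}}T$. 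The upshot is a bound of the form $\inf_\calA R_\calA(T,\calF) \le 2L_{\mathbf{los}}\,T\cdot \mathfrak{R}_T(\calF)$, where $\mathfrak{R}_T(\calF)$ is the (per-round normalized) \emph{sequential Rademacher complexity}
\[
\mathfrak{R}_T(\calF) \;=\; \sup_{x}\; \EEx{\eps}{\sup_{f\in\calF}\frac{1}{T}\sum_{t=1}^T \eps_t\, f\bigl(x_t(\eps)\bigr)},
\]
the supremum ranging over all $\calX$-valued trees $x$ of depth $T$ and the expectation over i.i.d.\ Rademacher signs.

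Next I would bound $\mathfrak{R}_T(\calF)$ by a Dudley-type entropy integral via a sequential chaining argument. The idea is to approximate each $f\in\calF$, along every path of the tree, by successively finer nets at geometrically decreasing scales $\beta$, controlling the increments at each scale by a maximal inequality for the (sub-Gaussian) sequential Rademacher sums. Truncating the chaining at scale $\alpha$ accounts for the additive $4\alpha$ term, while summing the per-scale contributions yields the integral $\frac{12}{\sqrt{T}}\int_\alpha^1 \sqrt{\log \mathcal{N}_\infty(\calF,\beta,T)}\,\mathrm{d}\beta$, where $\mathcal{N}_\infty(\calF,\beta,T)$ is the sequential $\ell_\infty$-covering number of $\calF$ on trees of depth $T$ at scale $\beta$. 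Taking the infimum over the truncation level $\alpha$ gives the $\inf_{\alpha\ge 0}$ in the statement.

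The final ingredient is the combinatorial covering bound, the sequential analogue of the scale-sensitive covering estimates of Alon--Ben-David--Cesa-Bianchi--Haussler. Here I would show
\[
\log \mathcal{N}_\infty(\calF,\beta,T) \;\le\; \mathrm{fat}^{\mathbf{seq}}_{\beta/4}(\calF)\cdot \log\frac{2eT}{\beta},
\]
by discretizing the label range $[0,1]$ at resolution $\beta$ and arguing that a sequential cover of size exceeding the right-hand side would force, through a probabilistic extraction / one-inclusion-style pigeonholing along the tree, the existence of a depth-$\mathrm{fat}^{\mathbf{seq}}_{\beta/4}(\calF)+1$ tree that is $(\beta/4)$-shattered, contradicting the definition of the sequential fat-shattering dimension. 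Substituting this estimate into the chaining integral produces exactly the $\sqrt{\mathrm{fat}^{\mathbf{seq}}_{\beta/4}(\calF)\log\frac{2eT}{\beta}}$ appearing inside the integral, completing the bound.

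I expect the main obstacle to be the two genuinely \emph{sequential} steps rather than the chaining, which is routine once the covering numbers are in hand. The first delicate point is justifying the minimax interchange in the adaptive, multi-round game and setting up the tangent-sequence symmetrization so that the Rademacher structure is conditional on the realized path down the tree; naively invoking a static minimax theorem fails because the adversary is adaptive. The second delicate point is the combinatorial extraction of a shattered tree from a large sequential cover, since unlike the i.i.d.\ case one must maintain the tree structure throughout the argument and handle the logarithmic $\log(2eT/\beta)$ dependence on depth. Both steps are precisely where the sequential framework of~\cite{RakhlinST15} departs from its classical statistical-learning counterpart, and I would lean on that machinery directly; for the purposes of this paper the result is cited, so I would present the above as the road map rather than re-derive each estimate.
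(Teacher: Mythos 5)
The paper does not prove this statement; it is imported verbatim as Proposition~9 of \citep{RakhlinST15}, and your roadmap is a faithful and correct reconstruction of that source's argument (sequential minimax interchange, tangent-sequence symmetrization with Lipschitz contraction giving the $2L_{\mathbf{los}}T$ factor, sequential chaining, and the sequential covering-number-to-fat-shattering bound), including a correct identification of the two genuinely sequential difficulties. It is also essentially the same pipeline the paper itself carries out in its non-sequential form in the proof of \thmref{thm:trans:entropy}, so nothing further is needed here.
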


\subsection{Transductive Online Learning.}
In transductive online learning, unlike online learning, the adversary first reveals the entire sequence of unlabeled examples $x_1, \ldots, x_T$ to the learner at the beginning.
The interaction then proceeds in $T$ rounds: In each round, the learner predicts a label $\widehat{y}_t$ for the current example $x_t$, using information from the past labeled examples $(x_1,y_1), \ldots, (x_t,y_t)$ and the future unlabeled examples $x_{t+1}, \ldots, x_T$. After the prediction, the adversary reveals the true label $y_t$.
For a transductive online learner $\calB$, we define its minimax expected regret as
\[R^{\mathbf{tr}}_{\mathcal{B}}(T, \calF):=\sup_{x_{1: T} \in \mathcal{X}^T} \sup_{y_{1: T} \in \mathcal{Y}^T} \left( \Ex{\sum_{t=1}^T \left|\mathcal{B}_{x_{1: T}}\left(x_t\right) - y_t\right|} -\inf_{h \in \calF} \sum_{t=1}^T \left| h\left(x_t\right) - y_t\right| \right),\]
where again the expectation is over the randomness of the learner.
We say that a function class $\calF$ is transductive online learnable if $\inf_\calB R^{\mathbf{ol}}_{\calB}(T, \calF) = o(T)$.

\paragraph{Fat-shattering dimension.}
In this paper, we characterize the learnability of transductive online regression by the fat-shattering dimension, a scale-sensitive version of the Vapnik-Chervonenkis (VC) dimension \citep{VapnikC71}, which is also used to characterize PAC learnability \citep{AlonBCH97}.
\begin{definition}[Fat-shattering dimension, \citep{AlonBCH97}]
\deflab{def:fat}
A sequence $x =x_{1:T}$ is defined to be $\alpha$-shattered by $\calF$ if there exists a sequence of real numbers $y = y_{1:T}$ such that for each binary string $\sigma \in \{-1,1\}^T$, there is a function $f \in {\calF}$ that satisfies
\[\forall t\in[T],~~ \sigma_t \cdot (f(x_t) - y_t) \ge \alpha/2.\]
Here, the sequence $y$ is called the witness of shattering.
Then, the fat-shattering dimension $\mathrm{fat}_{\alpha}(\calF)$ is defined as the largest $T$ such that $\calF$ $\alpha$-shatters a sequence $x \subset \calX$ of length $T$. In addition, $\mathrm{fat}_{\alpha}(\calF) = \infty$ if for every finite $T$, there is a sequence of length $T$ that is $\alpha$-shattered by $\calF$. 
\end{definition}

\paragraph{Covering number.}
Given a function class $\calF \subset \calY^{\calX}$, we introduce the classical notion of the covering number, which defines the ``effective'' size of the function class on $\calX$.

\begin{definition}[Covering number]
\deflab{def:cover}
A set $\calV \subset \mathbb{R}^T$ is an $\alpha$-cover on a sequence $x = x_{1:T}$ with respect to the $\ell_p$ norm if for each $f \in \calF$, there exist an item $v_{1:T} \in \calV$ such that
\[\forall t \in [T], ~~ \left( \frac{1}{T} \sum_{t=1}^T |f(x_t) - v_t|^p \right)^{1/p} \le \alpha.\]
Similarly, a set $\calV \subset \mathbb{R}^T$ is a $\alpha$-cover on a sequence $x = x_{1:T}$ with respect to the $\ell_\infty$ norm if for each $f \in \calF$, there exist an item $v_{1:T} \in \calV$ such that
\[\forall t \in [T], ~~ |f(x_t) - v_t| \le \alpha.\]
Then the covering number $\calN_p(T,\calF,x,\alpha)$ of $\calF$ on $x$ is defined as the minimum size of the $\alpha$-cover with respect to the $\ell_p$ norm.
It is known that $\calN_p(T,\calF,x,\alpha) \le \calN_q(T,\calF,x,\alpha)$ for $1 \le p \le q \le \infty$.

In addition, we define the covering number $\calN_p(T,\calF,\alpha)$ of $\calF$ on the example space $\calX$ as the supremum of all choices of sequence $x$: $\calN_p(T,\calF,\alpha) = \sup_{x \subset \calX} \calN_p(T,\calF,x, \alpha)$.
\end{definition}

The next statement upper bounds the $\alpha$-covering number by the fat-shattering dimension, which is applied to upper bound the expected regret of our online learner.
\begin{theorem}[See Theorem 12.8 in \citep{AnthonyB99}]
\thmlab{thm:covering:fat:off}
For any function class $\calF \subset [0,1]^{\calX}$ and $\alpha > 0$, we have
\[\log \calN_{\infty}(T,\calF,\alpha) \le \mathrm{fat}_{\alpha/4}(\calF) \cdot c \log^2 \frac{T}{\alpha},\]
where $c$ is some universal constant.
\end{theorem}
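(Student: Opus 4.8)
The plan is to establish this classical covering-number bound via the Alon--Ben-David--Cesa-Bianchi--Haussler route: reduce the real-valued $\ell_\infty$-covering problem to a combinatorial counting problem about discretized (integer-valued) functions, bound that count by a Sauer--Shelah-type lemma, and translate the resulting combinatorial dimension back into $\mathrm{fat}_{\alpha/4}(\calF)$. Since $\calN_\infty(T,\calF,\alpha) = \sup_{x \subset \calX} \calN_\infty(T,\calF,x,\alpha)$ by \defref{def:cover}, it suffices to bound $\calN_\infty(T,\calF,x,\alpha)$ for an arbitrary fixed sequence $x = x_{1:T}$, uniformly in $x$.

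First I would \emph{quantize}: round each value $f(x_t) \in [0,1]$ to the nearest point of a uniform grid of width proportional to $\alpha$, producing for each $f$ an integer-valued pattern in $\{0,1,\ldots,b\}^{x}$ with $b = O(1/\alpha)$ levels. Two functions whose quantized patterns on $x$ coincide are within $\alpha$ in the $\ell_\infty$ sense on $x$, so the quantized patterns themselves form an $\alpha$-cover; hence $\calN_\infty(T,\calF,x,\alpha)$ is at most the number $N$ of distinct quantized patterns realized by $\calF$ on $x$. This reduces the task to counting $N$.

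The heart of the argument is a generalized Sauer--Shelah lemma for the discretized class $\calG \subseteq \{0,\ldots,b\}^{x}$: if $\calG$ does not integer-shatter (at margin $1$) any subset of $x$ of size $d$, then $N < 2\,(T b^2)^{\lceil \log_2 t \rceil}$, where $t = \sum_{i \le d} \binom{T}{i} b^i$. I would prove this by a binary ``halving'' induction that repeatedly projects out a coordinate together with a one-level gap, multiplying the pattern count by at most $T b^2$ at each step and terminating after $\lceil \log_2 t \rceil = O(d \log(Tb/d))$ steps, at which point a shattered set of size $d$ would be forced. Taking logarithms yields $\log N \le O(\log(Tb)) \cdot O(d \log(Tb))$, a \emph{product} of two logarithmic factors, which is exactly the source of the $\log^2$ in the statement. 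It then remains to connect the two shattering notions: if $\calG$ integer-shatters a subsequence $S$ at margin $1$, then after accounting for the $O(\alpha)$ quantization error the corresponding gap certifies, via \defref{def:fat}, that $\calF$ $\alpha/4$-fat-shatters $S$; hence the integer-shattering dimension of $\calG$ is at most $d := \mathrm{fat}_{\alpha/4}(\calF)$. Substituting $d = \mathrm{fat}_{\alpha/4}(\calF)$, $b = O(1/\alpha)$, and taking the supremum over $x$ produces $\log \calN_\infty(T,\calF,\alpha) \le c\, \mathrm{fat}_{\alpha/4}(\calF)\, \log^2(T/\alpha)$.

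The main obstacle is the generalized Sauer--Shelah lemma of the third step. The binary case loses only a single $\log T$ factor, but the multivalued version must control the $b \approx 1/\alpha$ levels simultaneously, and obtaining the clean $\log^2$ bound --- rather than a cruder estimate in which an extra $\log(1/\alpha)$ factor multiplies the dimension --- requires precisely the recursive counting above. A secondary, more routine subtlety is the bookkeeping of the margin conventions across quantization, which must be tuned so that the combinatorial dimension lands at scale exactly $\alpha/4$ (as opposed to, say, $\alpha/2$ or $\alpha/8$) in order to match the stated bound.
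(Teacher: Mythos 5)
Your proposal is correct: it is precisely the standard Alon--Ben-David--Cesa-Bianchi--Haussler argument (quantize to $b=O(1/\alpha)$ levels, apply the multivalued Sauer--Shelah counting lemma $N < 2(Tb^2)^{\lceil \log_2 t\rceil}$, and translate integer-shattering of the quantized class back to $\mathrm{fat}_{\alpha/4}(\calF)$), which is exactly the proof of Theorem 12.8 in the cited source. The paper itself supplies no proof --- it imports the result from \citep{AnthonyB99} --- so your reconstruction matches the intended argument and correctly identifies both the origin of the $\log^2$ factor and the margin bookkeeping that fixes the scale at $\alpha/4$.
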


\paragraph{Rademacher complexity.}

We then introduce the Rademacher complexity of a function class, which is closely related to the fat-shattering dimension.
Let $\sigma_1, \ldots,\sigma_T$ be independent Rademacher random variables.
We define the Rademacher complexity of a function class $\calF \subset [0,1]^{\calX}$ on an example sequence $x_{1:T}$ as
\[\calR(T, \calF, x) = \Ex{\sup_{f \in \calF} \frac{1}{T}\sum_{t=1}^T \sigma_t f(x_t)}.\]
Then, we define the Rademacher complexity as $\calR(T, \calF) = \sup_{x_{1:T} \subset \calX} \calR(T, \calF, x)$.
We state the entropy bound on the Rademacher complexity in the following statement.
Together with \thmref{thm:covering:fat:off}, it gives an upper bound on Rademacher complexity by the fat-shattering dimension, which is used to characterize the learnability of tranductive online regression in later sections.

\begin{theorem}[See Theorem 12.4 in \citep{RakhlinS14}]
\thmlab{thm:ub:r}
For any sequence $x_{1:T}$ and function class $\calF \subset [0,1]^{\calX}$, we have
\[\calR(T, \calF, x) \le \inf_{\alpha \ge 0} \left(4 \alpha + \frac{12}{\sqrt{T}} \int_{\alpha}^1 \sqrt{\log  \calN_2(T,\calF, x, \beta)} \mathrm{d} \beta \right).\]
\end{theorem}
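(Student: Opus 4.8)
The statement is the classical Dudley entropy (chaining) bound for Rademacher complexity, so the plan is to run a dyadic chaining argument over $\ell_2$-covers of $\calF$ on the fixed sequence $x=x_{1:T}$. First I would identify each $f\in\calF$ with the vector $v_f=(f(x_1),\dots,f(x_T))\in[0,1]^T$ and write $\calR(T,\calF,x)=\Ex{\sup_{f\in\calF}\tfrac1T\langle\sigma,v_f\rangle}$, where $\sigma=(\sigma_1,\dots,\sigma_T)$ is the Rademacher vector. For each $j\ge 0$ set $\beta_j=2^{-j}$ and let $\calV_j$ be a minimal $\beta_j$-cover of $\calF$ on $x$ in the normalized $\ell_2$ metric of \defref{def:cover}, of size $\calN_2(T,\calF,x,\beta_j)$; write $\pi_j(f)\in\calV_j$ for a nearest center to $v_f$, so that $\tfrac1{\sqrt T}\norm{v_f-\pi_j(f)}\le\beta_j$. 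Since every $v_f$ lies in $[0,1]^T$, the coarsest cover $\calV_0$ can be taken to be the single point $(\tfrac12,\dots,\tfrac12)$, so $\log\calN_2(\beta_0)=0$ and $\pi_0(f)$ is a fixed anchor. Fixing a truncation level $N$, I telescope
\[
v_f \;=\; \pi_0(f) \;+\; \sum_{j=1}^{N}\bigl(\pi_j(f)-\pi_{j-1}(f)\bigr) \;+\; \bigl(v_f-\pi_N(f)\bigr).
\]

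\textbf{Bounding the three pieces.} Applying $\tfrac1T\langle\sigma,\cdot\rangle$, taking $\sup_f$ and then expectation, the anchor term contributes $0$ because $\pi_0(f)$ is deterministic and $\Ex{\sigma_t}=0$. For the remainder, Cauchy--Schwarz together with $\norm{\sigma}=\sqrt T$ gives, pointwise in $\sigma$ and $f$, $\tfrac1T\langle\sigma,v_f-\pi_N(f)\rangle\le\tfrac1T\norm{\sigma}\,\norm{v_f-\pi_N(f)}\le\beta_N$, so this term is at most $\beta_N$. The heart of the argument is each chaining increment $\pi_j(f)-\pi_{j-1}(f)$: it ranges over a finite set of at most $\calN_2(\beta_j)\,\calN_2(\beta_{j-1})\le\calN_2(\beta_j)^2$ vectors, each of Euclidean norm at most $\sqrt T(\beta_j+\beta_{j-1})=3\beta_j\sqrt T$ by the triangle inequality. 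Since $\langle\sigma,w\rangle$ is sub-Gaussian with parameter $\norm{w}$, the finite-class (Massart) maximal inequality $\Ex{\max_w\langle\sigma,w\rangle}\le\max_w\norm{w}\sqrt{2\log(\#\text{vectors})}$ yields
\[
\Ex{\sup_{f}\tfrac1T\langle\sigma,\pi_j(f)-\pi_{j-1}(f)\rangle}\;\le\;\frac{3\beta_j}{\sqrt T}\sqrt{2\log\calN_2(\beta_j)^2}\;=\;\frac{6\beta_j}{\sqrt T}\sqrt{\log\calN_2(\beta_j)}.
\]
Summing these bounds gives $\calR(T,\calF,x)\le\beta_N+\tfrac{6}{\sqrt T}\sum_{j=1}^{N}\beta_j\sqrt{\log\calN_2(\beta_j)}$.

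\textbf{From the dyadic sum to the integral, and constants.} Because $\calN_2(T,\calF,x,\beta)$ is nonincreasing in $\beta$, on each interval $[\beta_{j+1},\beta_j]$ (of width $\beta_j/2$) we have $\calN_2(\beta)\ge\calN_2(\beta_j)$, whence $\beta_j\sqrt{\log\calN_2(\beta_j)}\le 2\int_{\beta_{j+1}}^{\beta_j}\sqrt{\log\calN_2(\beta)}\,\mathrm d\beta$. Summing telescopes the integration ranges into a single integral, and since the $j=0$ term vanishes I may raise the upper limit to $1$; choosing the truncation scale so that $\beta_N$ is comparable to the target $\alpha$ then reproduces $4\alpha+\tfrac{12}{\sqrt T}\int_\alpha^1\sqrt{\log\calN_2(T,\calF,x,\beta)}\,\mathrm d\beta$, and taking the infimum over $\alpha\ge 0$ finishes the proof. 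The one genuinely delicate point is the second paragraph: correctly controlling the increment cardinality and norm across consecutive scales and invoking the sub-Gaussian maximal inequality, and then tracking the factors of $2$ (from $\calN_2(\beta_j)^2$, from $\beta_{j-1}=2\beta_j$, and from the sum-to-integral comparison) so that the final constants land exactly on $4\alpha$ and $12$; everything else is routine. I would set the truncation level as the smallest $j$ with $\beta_j\le 4\alpha$, which makes the remainder $\le 4\alpha$ and the lower integration limit $\le\alpha$, matching the stated form.
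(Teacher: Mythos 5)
The paper does not prove this statement---it is imported by citation as Theorem 12.4 of Rakhlin--Sridharan---so there is no in-paper argument to compare against; your chaining proof is the standard Dudley entropy-integral argument (essentially the one in the cited source) and it is correct: the dyadic covers at scales $\beta_j=2^{-j}$, the single-point anchor at scale $\beta_0=1$, the Massart maximal inequality applied to the at most $\calN_2(\beta_j)^2$ increments of norm $\le 3\beta_j\sqrt{T}$, and the sum-to-integral comparison all check out, and the constants $4\alpha$ and $12$ land as claimed. One trivial wording slip at the end: with $N$ the smallest $j$ such that $\beta_j\le 4\alpha$ you get $\beta_{N+1}>\alpha$, i.e.\ the lower integration limit is at \emph{least} $\alpha$ (not at most), which is the direction you actually need in order to dominate $\int_{\beta_{N+1}}^1$ by $\int_{\alpha}^1$; your construction delivers this, so the slip is purely in the prose.
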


\subsection{Online Learning with Predictions}

Since in practice, the sequence of examples often follows predictable patterns, we study the setting of online learning with predictions \citep{RamanT24}. Here, the learner has access to a Predictor $\calP$.
$\calP$ which predicts the sequence of examples $x_1, \ldots x_T$ adaptively: at each round $t \in [T]$, the adversary reveals the example $x_t$ to $\calP$, then $\calP$ reports the predictions of the \emph{entire} sequence $\widehat{x}_1, \ldots \widehat{x}_T$ based on the past examples $x_1, \ldots, x_t$.
We assume that $\calP$ predicts the entire sequence for notational convenience, since it can set $\widehat{x}_c=x_c$ for each $c \in [t]$.
We denote the prediction of $x_{t'}$ at time $t$ as $\calP(x_{1:t})_{t'}$.
Given the predictions, the learner predicts a label $\widehat{y}_t$ for the current example $x_t$, using information from the past labeled examples $(x_1,y_1), \ldots, (x_t,y_t)$ and the predictions of future examples $\widehat{x}_{t+1}, \ldots, \widehat{x}_T$.
Here, we consider the standard adversarial setting in online learning, where the sequence of examples is \emph{not} revealed to the learner in advance, and similarly, we measure the minimax expected regret by $R^{\mathbf{ol}}_\calA(T,\calF)$.
In this paper, we show that given a Predictor $\calP$ with specific mistake-bounds, the minimax expected regret is characterized by the fat-shattering dimension but not the sequential version, and thus separate this setting from the standard online learning setting.

\section{Minimax Regret of Transductive Online Regression} 
\seclab{sec:trans:main}

In this section, we give near-matching upper and lower bounds on the minimax expected transductive regret in terms of the fat-shattering dimension. 
Like the fully adversarial online setting, our proof for the upper bound is non-constructive and relies on minimax arguments. 
To that end, we also give an explicit a transductive online learner with sub-optimal regret based on the multiplicative weights algorithm (MWA). 

\subsection{Transductive Online Learner} \seclab{app:trans:ol:learner}
In this section, we present a concrete learning algorithm based on MWA, which randomly samples the advice of $K$ experts in an online manner.
Since we know the exact input sequence of examples $x_{1:T}$, we define the $K$ experts in MWA by the minimal $\alpha$-cover on $x_{1:T}$ with respect to the $\ell_\infty$ norm.
We remark that this construction does not match our optimal upper bound on the minimax expected regret, but it provides intuitions on how we use the knowledge of the sequence of examples: When we know $x_{1:T}$, we can build a net with better coverage to apply canonical algorithms in online learning.
Our transductive online learner is presented in \algref{alg:trans}.
Next, we state the formal theorem for MWA.
\begin{theorem}[See e.g. Section 4 in \citep{CesaL06} and Theorem 2.1 in \citep{AroraHK12}]
\thmlab{thm:mwa}
Given $K$ experts such that for each expert $k\in K$, the loss in each round $t$ is $\ell(k_t, y_t)$, suppose that the loss ranges from $[0,1]$, then there is a multiplicative weights algorithm $\calQ$ with minimax expected regret $\eta = \sqrt{\frac{8\log K}{T}}$ that satisfies
\[\Ex{\sum_{t=1}^T \ell(\calQ(x_t),y_t)} \le \inf_{k \in [K]} \left(\sum_{t=1}^T \ell(k_t, y_t)\right) + \sqrt{\frac{T\log K}{2}}.\]
\end{theorem}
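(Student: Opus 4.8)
The plan is to prove this as the classical Hedge / exponentially-weighted-forecaster regret bound via a potential-function argument. Concretely, I would take $\calQ$ to be the algorithm that maintains a weight $w_t^{(k)}$ for each expert $k \in [K]$, initialized to $w_1^{(k)} = 1$, plays expert $k$ in round $t$ with probability $p_t^{(k)} = w_t^{(k)}/W_t$ where $W_t = \sum_{k=1}^K w_t^{(k)}$, and after observing the round-$t$ losses updates multiplicatively via $w_{t+1}^{(k)} = w_t^{(k)} \exp\!\left(-\eta\, \ell(k_t,y_t)\right)$. With this rule, the left-hand side $\Ex{\sum_{t=1}^T \ell(\calQ(x_t),y_t)}$ equals $\sum_{t=1}^T \sum_{k} p_t^{(k)}\, \ell(k_t,y_t)$, the algorithm's per-round expected loss summed over all rounds, so it suffices to bound this quantity.

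The heart of the argument tracks how the potential $W_t$ evolves. The first step is the one-step inequality
\[
\ln \frac{W_{t+1}}{W_t} = \ln \sum_{k} p_t^{(k)} e^{-\eta\, \ell(k_t,y_t)} \le -\eta \sum_{k} p_t^{(k)}\, \ell(k_t,y_t) + \frac{\eta^2}{8},
\]
which is exactly Hoeffding's lemma applied to the random variable $L_t := \ell(k_t,y_t)$ with $k \sim p_t$, using that $L_t \in [0,1]$. Summing over $t \in [T]$ telescopes the left-hand side into $\ln W_{T+1} - \ln W_1 = \ln W_{T+1} - \log K$, giving an \emph{upper} bound on $\ln W_{T+1}$ in terms of the algorithm's cumulative expected loss.

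For the matching \emph{lower} bound, for any fixed expert $k^\ast$ I would use $W_{T+1} \ge w_{T+1}^{(k^\ast)} = \exp\!\left(-\eta \sum_{t=1}^T \ell(k^\ast_t, y_t)\right)$, hence $\ln W_{T+1} \ge -\eta \sum_{t} \ell(k^\ast_t,y_t)$. Chaining the two bounds and rearranging yields
\[
\sum_{t=1}^T \sum_{k} p_t^{(k)}\, \ell(k_t,y_t) \le \sum_{t=1}^T \ell(k^\ast_t, y_t) + \frac{\log K}{\eta} + \frac{T\eta}{8}.
\]
I would then choose $\eta$ to minimize $\frac{\log K}{\eta} + \frac{T\eta}{8}$; the one-variable optimum is $\eta = \sqrt{8\log K / T}$, matching the stated learning rate, at which both terms equal $\sqrt{T\log K/8}$ and so sum to $\sqrt{T\log K/2}$. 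Taking the infimum over $k^\ast \in [K]$ on the right-hand side then gives precisely the claimed bound.

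I expect the only delicate point to be this one-step inequality: obtaining the sharp constant $\sqrt{T\log K / 2}$ (rather than a weaker one) relies on the tight form of Hoeffding's lemma, $\ln \Ex{e^{sX}} \le s\,\Ex{X} + s^2/8$ for $X \in [0,1]$, rather than the cruder second-order bound $e^{-\eta x} \le 1 - \eta x + \eta^2 x^2/2$. The remaining steps — telescoping the potential and the scalar optimization over $\eta$ — are routine.
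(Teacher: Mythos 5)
Your proposal is correct and is precisely the standard potential-function proof of the exponentially weighted average forecaster bound; the paper does not prove \thmref{thm:mwa} at all but imports it by citation, and your argument matches the proof in the cited sources (in particular Theorem 2.2 of Cesa-Bianchi and Lugosi, which obtains exactly the constant $\sqrt{(T/2)\log K}$ with $\eta=\sqrt{8\log K/T}$ via Hoeffding's lemma rather than the cruder quadratic expansion). The one cosmetic mismatch is that the theorem statement mislabels $\eta$ as the ``minimax expected regret''; you correctly treat it as the learning rate, which is what the bound requires.
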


\begin{algorithm}[!htb]
\caption{Transductive Online Learner}
\alglab{alg:trans}
\begin{algorithmic}[1]
\State{{\bf Input:} Function class $\calF$, time interval $[T]$, covering parameter $\alpha$, sequence of examples $x_{1:T}$ revealed at initialization, sequence of labels $y_{1:T}$ revealed sequentially}
\State{{\bf Output:} Predictions to $y_{1:T}$}
\State{Let $\calV = \{v^1 \ldots, v^K \}$ be the $\alpha$-$\ell_\infty$-cover on $x_{1:T}$ with minimum size}
\State{Define the expert $k$ to be $v^k$ for each $k \in [K]$}
\For{$t \in [T]$}
\State{{\bf Return:} Prediction from MWA $\calQ$ with the $K$ experts} \Comment{See \thmref{thm:mwa}}
\State{Reveal the actual label $y_t$ from the adversary and input into $\calQ$}
\EndFor
\end{algorithmic}
\end{algorithm}

The next theorem upper bounds the minimax expected regret of \algref{alg:trans}.
\begin{theorem}
\thmlab{thm:trans}
For any function class $\calF \subset [0,1]^{\calX}$, $L_{\mathbf{los}}$-Lipschitz loss function $\ell$, set of examples $\{ x_1, \ldots, x_T \}$, and sequence of labels $y_1, \ldots, y_T$, the minimax expected regret of \algref{alg:trans} is bounded by
\[\inf_{\alpha > 0} \left( \alpha L_{\mathbf{los}}T + \sqrt{\frac{T \cdot \mathrm{fat}_{\alpha/4}(\calF) \cdot c \log^2 \frac{T}{\alpha}}{2}}\right),\]
where $c$ is a universal constant. 
\end{theorem}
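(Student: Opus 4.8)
The plan is to decompose the regret of \algref{alg:trans} into two pieces: the regret of the multiplicative weights algorithm $\calQ$ against the best of its $K$ experts, and the approximation error between the best expert and the best function in $\calF$. The only external ingredients are \thmref{thm:mwa}, which controls the first piece, and \thmref{thm:covering:fat:off}, which lets us convert $K$ --- the size of the minimal $\alpha$-$\ell_\infty$-cover on $x_{1:T}$ --- into the fat-shattering dimension. Throughout, I would fix the covering parameter $\alpha$ that the algorithm receives as input and prove the bound for that value; since $\alpha$ is a free parameter of the algorithm, taking an infimum at the very end yields the stated form.

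First I would set up the experts cleanly so that \thmref{thm:mwa} applies verbatim. Each expert $k$ plays the coordinate sequence $v^k_{1:T}$ of a cover element $v^k \in \calV$. Because $\calF \subseteq [0,1]^{\calX}$, we may project every coordinate of each $v^k$ onto $[0,1]$; projection onto an interval is non-expansive and $f(x_t)\in[0,1]$, so this does not increase the $\ell_\infty$ distance from any function evaluation and hence preserves the covering property. After this clipping the per-round losses $\ell(v^k_t,y_t)$ lie in $[0,1]$, so \thmref{thm:mwa} gives
\[
\Ex{\sum_{t=1}^T \ell(\calQ(x_t), y_t)} \le \inf_{k \in [K]} \sum_{t=1}^T \ell(v^k_t, y_t) + \sqrt{\frac{T \log K}{2}}.
\]

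Next comes the approximation step, which I expect to be the only place needing real care. Let $f^\star \in \calF$ be (near-)optimal for $\sum_{t} \ell(f(x_t),y_t)$. By the defining property of the $\alpha$-$\ell_\infty$-cover, there is an expert $v^{k^\star}$ with $|v^{k^\star}_t - f^\star(x_t)| \le \alpha$ for every $t \in [T]$ (the clipping above only helps, since $f^\star(x_t)\in[0,1]$). Using that $\ell$ is $L_{\mathbf{los}}$-Lipschitz in its first argument, $\ell(v^{k^\star}_t, y_t) \le \ell(f^\star(x_t), y_t) + L_{\mathbf{los}}\,\alpha$ for each $t$; summing over $t$ bounds $\inf_{k}\sum_t \ell(v^k_t,y_t)$ by $\inf_{f\in\calF}\sum_t \ell(f(x_t),y_t) + L_{\mathbf{los}}\,\alpha\, T$. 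Substituting into the MWA bound shows the regret is at most $L_{\mathbf{los}}\,\alpha\, T + \sqrt{T\log K/2}$.

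It remains to control $\log K$ and optimize. By construction $K = \calN_{\infty}(T,\calF,x_{1:T},\alpha) \le \calN_{\infty}(T,\calF,\alpha)$, so \thmref{thm:covering:fat:off} gives $\log K \le \mathrm{fat}_{\alpha/4}(\calF)\cdot c \log^2(T/\alpha)$. Plugging this into the square-root term and finally taking the infimum over $\alpha > 0$ produces exactly the claimed bound. The one subtlety worth flagging is the loss-range hypothesis in \thmref{thm:mwa}: the clean $\sqrt{\cdot}$ term with no Lipschitz prefactor requires $\ell$ to take values in $[0,1]$, which holds for the $\ell_1$-loss after clipping; for a general $L_{\mathbf{los}}$-Lipschitz loss one would instead rescale by its range over $[0,1]\times[0,1]$, at the cost of an $L_{\mathbf{los}}$ factor reappearing in the second term.
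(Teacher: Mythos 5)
Your proposal is correct and follows essentially the same route as the paper's proof: bound the MWA regret against the best expert via \thmref{thm:mwa}, transfer to the best function in $\calF$ using the $\alpha$-$\ell_\infty$-cover property and the Lipschitzness of $\ell$, convert $\log K$ to the fat-shattering dimension via \thmref{thm:covering:fat:off}, and take the infimum over $\alpha$. Your added care about clipping the cover elements to $[0,1]$ and about the $[0,1]$ loss-range hypothesis in \thmref{thm:mwa} is in fact slightly more rigorous than the paper's own treatment, which passes over both points silently.
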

\begin{proof}
Let $\calB$ be the learner in \algref{alg:trans}.
From the guarantee of MWA (see \thmref{thm:mwa}), we have 
\begin{align*}
\Ex{\sum_{t=1}^T \ell(\calB(x_t),y_t)} & \le \inf_{k \in [K]} \left(\sum_{t=1}^T \ell(k_t, y_t)\right) + \sqrt{\frac{T\log K}{2}}\\
& = \inf_{v^k \in \calV} \left(\sum_{t=1}^T \ell(v^k_t, y_t)\right) + \sqrt{\frac{T\log K}{2}}.
\end{align*}
Recall that $\calV$ is a $\alpha$-cover of $\calF$ on $x_{1:T}$, then for each $f \in \calF$, there is a $v^k \in \calV$ such that
$|v^k_t - f(x_t)| < \alpha$ for each $t \in [T]$.
Since the loss function is $L_{\mathbf{los}}$-Lipschitz, we have
\[ \ell(v^k_t,y_t)  \le \ell(f(x_t),y_t) + \alpha L_{\mathbf{los}}.\]
Summing over each $t \in [T]$ gives us
\[ \sum_{t=1}^T \ell(v^k_t,y_t)  \le \sum_{t=1}^T \ell(f(x_t),y_t) + \alpha L_{\mathbf{los}}T. \]
Therefore, for each $f \in \calF$, we have
\[ \inf_{v^k \in \calV} \left(\sum_{t=1}^T \ell(v^k_t, y_t)\right) \le \sum_{t=1}^T \ell(f(x_t), y_t) + \alpha L_{\mathbf{los}}T.\]
Taking the infimum across $h$ on the RHS, we have
\[ \inf_{v^k \in \calV} \left(\sum_{t=1}^T \ell(v^k_t, y_t)\right) \le \inf_{f \in \calF} \left(\sum_{t=1}^T \ell(f(x_t), y_t)\right) + \alpha L_{\mathbf{los}}T.\]
Then the expected loss satisfies
\[\Ex{\sum_{t=1}^T \ell(\calB(x_t),y_t)} \le \inf_{f \in \calF} \left(\sum_{t=1}^T \ell(f(x_t), y_t)\right) + \alpha L_{\mathbf{los}}T + \sqrt{\frac{T\log K}{2}}.\]
Therefore, by definition the minimax expected regret of $\calB$ is at most
\[\alpha L_{\mathbf{los}}T + \sqrt{\frac{T\log K}{2}}.\]
Last, we note that $K \le \calN_\infty(T,\calF,\alpha)$ since we define the experts by a minimum $\alpha$-cover $\calF'$ on $x_{1:T}$ (see \defref{def:cover}).
By \thmref{thm:covering:fat:off}, we have
\[\log \calN_{\infty}(T,\calF,\alpha) \le \mathrm{fat}_{\alpha/4}(\calF) \cdot c \log^2 \frac{T}{\alpha}.\]
Therefore, the minimax expected regret is at most
\[\alpha L_{\mathbf{los}}T + \sqrt{\frac{T \cdot \mathrm{fat}_{\alpha/4}(\calF) \cdot c \log^2 \frac{T}{\alpha}}{2}}.\]
Our result then follows from the fact that the above equation holds for an arbitrary $\alpha$.
\end{proof}

\subsection{Upper Bounds on the Minimax Expected Regret}

As the algorithm in the previous section does not obtain the optimal minimax regret for transductive online regression, in this section, we we give a non-constructive upper bound on the minimax value, and we will demonstrate its tightness by a lower bound in the following section.
Note that our upper bound is in terms of the fat-shattering dimension of $\calF$, as opposed to the sequential fat-shattering dimension in the online setting, thus our rate is better since many function classes have a finite fat-shattering dimension but an infinite sequential fat-shattering dimension.

Our approach extends from the randomized learner framework in \citep{RakhlinST15}: Suppose that $\calQ$ is a weakly compact set of probability measures on $\calF$, then in each round $t$, $\calA$ selects a probability measure $q_t \in \calQ$ and outputs $\calA_t = f_t(x_t)$, where $f_t \sim q_t$.
We write $\calA_t \sim q_t$ in the following for simplicity.
Then, the expected regret is represented as a minimax value, which encrypts the interaction of the learner and the adversary in each round:
\[\inf_{\calA} \text{R}_{\calA}(T,\calF) = \sup_{x_{1:T}} \inf_{q_1 \in \mathcal{Q}} \sup_{y_1 \in \calY} \underset{\calA_1 \sim q_1}{\mathbb{E}} \cdots \inf_{q_T \in \mathcal{Q}} \sup_{y_T \in \mathcal{Y}} \underset{\calA_T \sim q_T}{\mathbb{E}}\left[\sum_{f_T \sim q_T}^T \ell\left(\calA_t, y_t\right)-\inf_{f \in \mathcal{F}} \sum_{t=1}^T \ell\left(f(x_t), y_t\right)\right].\]
Our key observation is that, since we have full access to $x_{1:T}$, $\sup_{x_{1:T}}$ is written in front of the minimax value in the above formula.
Thus, we ultimately get an upper bound by the Rademacher complexity, instead of the sequential Rademacher complexity in \citep{RakhlinST15}.
Then, applying the entropy bound in \thmref{thm:ub:r} gives us an upper bound in terms of the fat-shattering dimension.
We state the formal guarantees in the following theorem.
\begin{theorem}[Upper bound]
\thmlab{thm:trans:entropy}
For any function class $\calF \subset [0,1]^{\calX}$ and $L_{\mathbf{los}}$-Lipschitz and convex loss function $\ell$, the expected regret of transductive online learning is bounded by the Rademacher complexity:
\[\inf_{\calA} \text{R}_{\calA}(T,\calF) \le2L_{\mathbf{los}}T \cdot \calR(T,\calF) \le 2L_{\mathbf{los}}T \cdot \inf_{\alpha \ge 0} \left(4 \alpha + \frac{12}{\sqrt{T}} \int_{\alpha}^1 \sqrt{  \mathrm{fat}_{\beta/4}(\calF) \cdot c \log^2 \frac{T}{\beta}} \mathrm{d} \beta \right).\]
\end{theorem}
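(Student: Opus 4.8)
The plan is to mirror the randomized-learner minimax argument of \citep{RakhlinST15}, exploiting the fact that in the transductive setting the sequence $x_{1:T}$ is fixed before the game and therefore sits \emph{outside} the entire nested minimax expression. Starting from the minimax representation of $\inf_{\calA}\mathrm{R}_{\calA}(T,\calF)$ displayed above, I would first fix $x_{1:T}$ and analyze the inner value, viewing $\calF$ only through its projection $\{(f(x_1),\dots,f(x_T)):f\in\calF\}$ onto the fixed sequence. Because the loss $\ell(\cdot,y_t)$ is convex in its first argument and $\calQ$ is a weakly compact set of measures, I would apply a minimax theorem (e.g.\ Sion's) round by round to interchange the learner's $\inf_{q_t}$ with the adversary's $\sup_{y_t}$, where the adversary is relaxed to play distributions $p_t$ over $\calY$ that may depend on the realized labels $y_{1:t-1}$. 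The crucial point is that, unlike the fully adversarial setting, no re-selection of $x_t$ is interleaved with these swaps, so the combinatorial tree over the example space never appears.

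After the interchange, I would bound the per-round learner term $\inf_{\calA_t}\mathbb{E}_{\tilde y_t\sim p_t}[\ell(\calA_t,\tilde y_t)]$ by substituting a single comparator $f$, and then use the elementary inequality $\inf_f A(f)-\inf_f B(f)\le\sup_f(A(f)-B(f))$ to rewrite the whole expression as a supremum over $f\in\calF$ of a difference between a ``ghost'' term in independent copies $\tilde y_t$ and the realized term in $y_t$. A Jensen step moves the supremum inside the expectation over the ghost labels, producing a tangent-sequence expression in which, conditionally on $y_{1:t-1}$, the pair $(y_t,\tilde y_t)$ is exchangeable. I then symmetrize: introducing Rademacher signs $\sigma_t$ to swap $y_t$ and $\tilde y_t$, splitting the supremum, and using the symmetry of the signs yields a bound of $2\,\mathbb{E}_{y,\sigma}[\sup_f\sum_t\sigma_t\ell(f(x_t),y_t)]$. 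Here is exactly where the transductive structure pays off: since $x_{1:T}$ is held fixed throughout the symmetrization, the signs attach to a fixed sequence of evaluations and the bound is the \emph{ordinary} Rademacher complexity on $x_{1:T}$, rather than its sequential (tree) analogue.

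It remains to strip the loss and take the worst-case sequence. Applying the Lipschitz contraction inequality to the $L_{\mathbf{los}}$-Lipschitz map $\ell(\cdot,y_t)$ replaces $\ell(f(x_t),y_t)$ by $f(x_t)$ at the cost of a factor $L_{\mathbf{los}}$; since the resulting bound no longer depends on the realized labels, the outer supremum over the adversary's distributions drops out, leaving $2L_{\mathbf{los}}T\cdot\calR(T,\calF,x)$. Taking the supremum over $x_{1:T}$ (the outermost operator) upgrades this to $2L_{\mathbf{los}}T\cdot\calR(T,\calF)$, which is the first inequality. For the second inequality I would invoke the entropy bound \thmref{thm:ub:r} to control $\calR(T,\calF)$ by the $\ell_2$-covering numbers and then substitute the fat-shattering bound $\log\calN_2(T,\calF,x,\beta)\le\log\calN_\infty(T,\calF,\beta)\le\mathrm{fat}_{\beta/4}(\calF)\,c\log^2(T/\beta)$ from \thmref{thm:covering:fat:off} directly into the chaining integral.

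The main obstacle I anticipate is the careful justification of the round-by-round minimax interchange together with the tangent-sequence symmetrization: one must verify the compactness and convexity hypotheses that license Sion's theorem at each stage, and then argue rigorously that, with $x_{1:T}$ frozen, the conditional exchangeability of $(y_t,\tilde y_t)$ permits introducing the Rademacher signs without generating a dependence tree over examples. This is the step that genuinely separates the transductive analysis from the sequential-Rademacher argument of \citep{RakhlinST15}; everything after symmetrization is routine contraction and chaining.
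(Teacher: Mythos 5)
Your proposal reaches the same bound and shares the paper's overall skeleton---fix $x_{1:T}$ outside the minimax expression, interchange the learner's and adversary's operators round by round via a minimax theorem, symmetrize, observe that the frozen example sequence prevents any tree over $\calX$ from forming, and finish with the entropy bound of \thmref{thm:ub:r} combined with \thmref{thm:covering:fat:off}. The one substantive difference is where the convexity and Lipschitzness of the loss enter. The paper linearizes \emph{first}: it upper bounds the regret by $\sup_{f}\sum_t \ell'(\calA_t,y_t)(\calA_t-f(x_t))$ using a subgradient, replaces $\ell'(\calA_t,y_t)$ by a free coefficient $s_t\in[-L_{\mathbf{los}},L_{\mathbf{los}}]$, and only then performs the minimax swaps and symmetrization on the $s_t$'s; the adversary's residual power (a sign and a magnitude) is then killed exactly, the sign being absorbed into the Rademacher variable via $\sup_{s\in\{-1,1\}}\mathbb{E}_\sigma[g(s\sigma)]=\mathbb{E}_\sigma[g(\sigma)]$ and the magnitude pushed to the endpoint by convexity. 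You instead symmetrize the loss values $\ell(f(x_t),y_t)$ over a tangent label sequence and strip the loss afterwards by contraction. This is a legitimate alternative, but note that your intermediate expression $2\,\mathbb{E}_{y,\sigma}[\sup_f\sum_t\sigma_t\ell(f(x_t),y_t)]$ is slightly too clean: sequential symmetrization of adaptively chosen labels does produce a dependence tree---over the \emph{labels}, even though not over the examples---so the quantity you actually obtain is a sequential Rademacher complexity of the loss class on a $\calY$-valued tree sitting over the constant example sequence. You then need the \emph{sequential} contraction lemma (not just Ledoux--Talagrand for i.i.d.\ signs) to discard the label tree; once the labels are gone, the example ``tree'' is constant at each level and collapses to the ordinary $\calR(T,\calF,x)$, which is what you want. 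So your route works, but it trades the paper's subgradient-linearization-plus-endpoint argument for a contraction step that must be invoked in its sequential form; the paper's ordering avoids contraction entirely and makes the collapse to the non-sequential Rademacher complexity an explicit one-line identity. The second inequality is handled identically in both arguments.
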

\begin{proof}
Let $\ell'(\widehat{y}_t,y_t)$ be a subgradient of the function $y \to \ell(\cdot, y_t)$ at $y_t$. Then, since the loss function is convex, we have
\[\inf_{\calA} R^{\mathbf{tr}}_{\calA}(T,\calF) \le \sup_{x_{1:T}} \inf_{q_1 \in \mathcal{Q}} \sup_{y_1 \in \calY} \underset{\calA_1 \sim q_1}{\mathbb{E}} \cdots \inf_{q_T \in \mathcal{Q}} \sup_{y_T \in \mathcal{Y}} \underset{\calA_T \sim q_T}{\mathbb{E}}\left[\sup_{f \in \mathcal{F}} \sum_{t=1}^T \ell'(\calA_t,y_t) \cdot (\calA_t - f(x_t) )\right].\]
In addition, since the loss function satisfies the Lipschitz property, i.e., $|\ell'(\calA_t,y_t)| < L_{\mathbf{los}}$, then we have
\[\inf_{\calA} R^{\mathbf{tr}}_{\calA}(T,\calF) \le \sup_{x_{1:T}} \inf_{q_1 \in \mathcal{Q}} \sup_{y_1 \in \calY} \underset{\calA_1 \sim q_1}{\mathbb{E}} \sup_{s_1 \in [-L,L]} \cdots \inf_{q_T \in \mathcal{Q}} \sup_{y_T \in \mathcal{Y}} \underset{\calA_T \sim q_T}{\mathbb{E}}\sup_{s_T \in [-L,L]}\left[\sup_{f \in \mathcal{F}} \sum_{t=1}^T s_t \cdot (\calA_t - f(x_t) )\right].\]
Here, we write $L_{\mathbf{los}}$ as $L$ for simplicity of notation.
We then simplify the above upper bound as follows since $y_t$ does not appear in the objective function
\[ \sup_{x_{1:T}} \inf_{q_1 \in \mathcal{Q}} \underset{\calA_1 \sim q_1}{\mathbb{E}} \sup_{s_1 \in [-L,L]} \cdots \inf_{q_T \in \mathcal{Q}} \underset{\calA_T \sim q_T}{\mathbb{E}}\sup_{s_T \in [-L,L]}\left[\sup_{f \in \mathcal{F}} \sum_{t=1}^T s_t \cdot (\calA_t - f(x_t) )\right].\]
Next, since the family of probability measures $\calQ$ contains the point distribution, we can write the operator $\underset{q_t \in \mathcal{Q}}{\inf} \underset{\calA_t \sim q_t}{\mathbb{E}}$ as $\underset{\calA_t \in [0,1]}{\inf}$. Similarly, let $\calP$ denote the family of all possible distributions on $[-L,L]$, we can write $\underset{s_t \in [-L,L]}{\sup}$ as $\underset{p_t \in \mathcal{P}}{\sup} \underset{s_t \sim p_t}{\mathbb{E}}$. Then, the upper bound is equivalent to
\[\sup_{x_{1:T}} \inf_{\calA_1 \in [0,1]} \sup_{p_1 \in \mathcal{P}} \underset{s_1 \sim p_1}{\mathbb{E}} \cdots \inf_{\calA_T \in [0,1]} \sup_{p_T \in \mathcal{P}} \underset{s_T \sim p_T}{\mathbb{E}} \left[ \sum_{t=1}^T s_t \cdot \calA_t - \inf_{f \in \mathcal{F}} \sum_{t=1}^T s_t \cdot f(x_t) \right].\]
Notice that $\underset{s_t \sim p_t}{\mathbb{E}} \left[ \sum_{t=1}^T s_t \cdot \calA_t - \inf_{f \in \mathcal{F}} \sum_{t=1}^T s_t \cdot f(x_t) \right]$ is concave in $p_T$ and convex in $\calA_T$, then by the minimax theorem, we have
\begin{align*}
& \inf_{\calA_T \in [0,1]} \sup_{p_T \in \mathcal{P}} \underset{s_T \sim p_T}{\mathbb{E}} \left[ \sum_{t=1}^T s_t \cdot \calA_t - \inf_{f \in \mathcal{F}} \sum_{t=1}^T s_t \cdot f(x_t) \right] \\
= & ~ \sup_{p_T \in \mathcal{P}} \inf_{\calA_T \in [0,1]} \underset{s_T \sim p_T}{\mathbb{E}} \left[ \sum_{t=1}^T s_t \cdot \calA_t - \inf_{f \in \mathcal{F}} \sum_{t=1}^T s_t \cdot f(x_t) \right] \\
= & ~ \sum_{t=1}^{T-1} s_t \cdot \calA_t + \sup_{p_T \in \mathcal{P}} \underset{s_T \sim p_T}{\mathbb{E}} \left[\inf_{\calA_T \in [0,1]} \underset{s_T \sim p_T}{\mathbb{E}} s_T \cdot \calA_T - \inf_{f \in \mathcal{F}} \sum_{t=1}^T s_t \cdot f(x_t) \right].
\end{align*}
Similarly, $\underset{s_T \sim p_T}{\mathbb{E}} \left[\inf_{\calA_T \in [0,1]} \underset{s_T \sim p_T}{\mathbb{E}} s_t \cdot \calA_t - \inf_{f \in \mathcal{F}} \sum_{t=1}^T s_t \cdot f(x_t) \right]$ is concave in $p_{T-1}$ and convex in $\calA_{T-1}$, then again by the minimax theorem, we have 
\begin{align*}
& \sup_{p_{T-1} \in \mathcal{P}} \inf_{{\calA}_{T-1} \in [0,1]} \underset{s_{T-1} \sim p_{T-1}}{\mathbb{E}} \left[\sum_{t=1}^{T-1} s_t \cdot \calA_t + \sup_{p_T \in \mathcal{P}} \underset{s_T \sim p_T}{\mathbb{E}} \left[\inf_{\calA_T \in [0,1]} \underset{s_T \sim p_T}{\mathbb{E}} s_t \cdot \calA_t - \inf_{f \in \mathcal{F}} \sum_{t=1}^T s_t \cdot f(x_t) \right] \right] \\
= & ~ \inf_{{\calA}_{T-1} \in [0,1]}  \sup_{p_{T-1} \in \mathcal{P}}  \underset{s_{T-1} \sim p_{T-1}}{\mathbb{E}} \left[\sum_{t=1}^{T-1} s_t \cdot \calA_t + \sup_{p_T \in \mathcal{P}} \underset{s_T \sim p_T}{\mathbb{E}} \left[\inf_{\calA_T \in [0,1]} \underset{s_T \sim p_T}{\mathbb{E}} s_t \cdot \calA_t - \inf_{f \in \mathcal{F}} \sum_{t=1}^T s_t \cdot f(x_t) \right] \right]\\
= & ~ \sum_{t=1}^{T-2} s_t \cdot \calA_t + \sup_{p_{T-1} \in \mathcal{P}} \underset{s_{T-1} \sim p_{T-1}}{\mathbb{E}} \sup_{p_T \in \mathcal{P}} \underset{s_T \sim p_T}{\mathbb{E}} \left[ \sum_{t=T-1}^T \inf_{\calA_t \in [0,1]} \underset{s_t \sim p_t}{\mathbb{E}} s_t \cdot \calA_t - \inf_{f \in \mathcal{F}} \sum_{t=1}^T s_t \cdot f(x_t) \right].
\end{align*}
Proceeding with this transformation, we have that the minimax expected regret is upper bounded by
\[\sup_{x_{1:T}}\sup_{p_1 \in \mathcal{P}} \underset{s_1 \sim p_1}{\mathbb{E}} \cdots \sup_{p_T \in \mathcal{P}} \underset{s_T \sim p_T}{\mathbb{E}} \left[ \sum_{t=1}^T \inf_{\calA_t \in [0,1]} \underset{s_t \sim p_t}{\mathbb{E}} s_t \cdot \calA_t - \inf_{f \in \mathcal{F}} \sum_{t=1}^T s_t \cdot f(x_t) \right].\]
Replacing $\calA_t$ by a potential suboptimal choice $f(x_t)$, we obtain an upper bound
\begin{align*}
& \sup_{x_{1:T}}\sup_{p_1 \in \mathcal{P}} \underset{s_1 \sim p_1}{\mathbb{E}} \cdots \sup_{p_T \in \mathcal{P}} \underset{s_T \sim p_T}{\mathbb{E}} \left[ \sup_{f \in \mathcal{F}} \left[\sum_{t=1}^T \left(\underset{s_t \sim p_t}{\mathbb{E}} s_t - s_t\right) \cdot f(x_t) \right] \right]\\
= & ~ \sup_{x_{1:T}}\sup_{p_1 \in \mathcal{P}} \underset{s_1, s'_1 \sim p_1}{\mathbb{E}} \cdots \sup_{p_T \in \mathcal{P}} \underset{s_T, s'_T \sim p_T}{\mathbb{E}} \left[ \sup_{f \in \mathcal{F}} \left[ \sum_{t=1}^T (s'_t - s_t) \cdot f(x_t) \right] \right].
\end{align*}
Since the objective function in the expectation is symmetric with respect to $s'_t$ and $s_t$, it equals to
\[\sup_{x_{1:T}}\sup_{p_1 \in \mathcal{P}} \underset{s_1, s'_1 \sim p_1}{\mathbb{E}} \underset{\sigma_1}{\mathbb{E}} \cdots \sup_{p_T \in \mathcal{P}} \underset{s_T, s'_T \sim p_T}{\mathbb{E}} \underset{\sigma_T}{\mathbb{E}} \left[ \sup_{f \in \mathcal{F}} \left[ \sum_{t=1}^T \sigma_t(s'_t - s_t) \cdot f(x_t) \right] \right],\]
where $\sigma_t$ are Rademacher variables.
Since $s_t \in [-L,L]$, we obtain an upper bound
\[ \sup_{x_{1:T}} \sup_{s_1 \in [-2L, 2L]} \underset{\sigma_1}{\mathbb{E}} \cdots \sup_{s_T \in [-2L, 2L]} \underset{\sigma_T}{\mathbb{E}} \left[ \sup_{f \in \mathcal{F}} \left[ \sum_{t=1}^T \sigma_t s_t \cdot f(x_t) \right] \right].\]
Note that for each $t \in [T]$, the objective is convex in $s_t$, and so the supremum is achieved at the endpoints, therefore, we have an upper bound
\begin{align*}
& \sup_{x_{1:T}} \sup_{s_1 \in \{-2L, 2L\}} \underset{\sigma_1}{\mathbb{E}} \cdots \sup_{s_T \in \{-2L, 2L\}} \underset{\sigma_T}{\mathbb{E}} \left[ \sup_{f \in \mathcal{F}} \left[ \sum_{t=1}^T \sigma_t s_t \cdot f(x_t) \right] \right] \\
= & ~ 2L \cdot \sup_{x_{1:T}} \sup_{s_1 \in \{-1, 1\}} \underset{\sigma_1}{\mathbb{E}} \cdots \sup_{s_T \in \{-1, 1\}} \underset{\sigma_T}{\mathbb{E}} \left[ \sup_{f \in \mathcal{F}} \left[ \sum_{t=1}^T \sigma_t s_t \cdot f(x_t) \right] \right].
\end{align*}
Now, for an arbitrary function $g: \{ \pm 1\} \to \mathbb{R}$, we have that
\[\sup_{s_T \in \{-1, 1\}} \underset{\sigma}{\mathbb{E}} [g(s\sigma)] = \sup_{s_T \in \{-1, 1\}} \frac{1}{2} g(s) + g(-s) = \underset{\sigma}{\mathbb{E}} [g(\sigma)]. \]
Therefore, the above quantity equals to
\[2L \cdot \sup_{x_{1:T}}\underset{\sigma}{\mathbb{E}} \left[ \sup_{f \in \mathcal{F}} \left[ \sum_{t=1}^T \sigma_t f(x_t) \right] \right] = 2LT\cdot \calR(T,\calF), \] 
which upper bounds the minimax expected regret by the Rademacher complexity. 
Combining \thmref{thm:covering:fat:off} and \thmref{thm:ub:r}, we have the following entropy bound on the Rademacher complexity, which is associated with the fat-shattering dimension.
\[\calR(T,\calF) \le \inf_{\alpha \ge 0} \left(4 \alpha + \frac{12}{\sqrt{T}} \int_{\alpha}^1 \sqrt{  \mathrm{fat}_{\beta/4}(\calF) \cdot c \log^2 \frac{T}{\beta}} \mathrm{d} \beta \right),\]
which gives our final result.
\end{proof}

We note that the above upper bound is in terms of the fat-shattering dimension of $\calF$, as opposed to the sequential fat-shattering dimension in the online setting.
Thus, our rate is better since many function classes have a finite fat-shattering dimension but an infinite sequential fat-shattering dimension, e.g., the class of functions with bounded variation.

To get a better sense of how the upper bound in \thmref{thm:trans:entropy} scales with $T$, we present the explicit expected regret of three concrete function classes: $L$-Lipschitz functions, $k$-fold aggregations, and functions with $V$-bounded variation. 

\paragraph{Lipschitz function.}
We consider the class of $L_{\mathbf{hyp}}$-Lipschitz functions.
The next statement upper bounds the fat-shattering dimension for such classes.
\begin{theorem}[See corollary 1 in \citep{GottliebKK14}]
\lemlab{lem:fat:lip}
Let $\calX$ be a metric space with diameter $\diam{\calX}$ and doubling dimension $\ddim{\calX}$. For any function class $\calF \subset [0,1]^{\calX}$ of $L_{\mathbf{hyp}}$-Lipschitz function and parameter $\alpha$, we have
\[\mathrm{fat}_{\alpha}(\calF) \le \left(\frac{L_{\mathbf{hyp}} \cdot \diam{\calX}}{\alpha}\right)^{\ddim{\calX}}.\]
\end{theorem}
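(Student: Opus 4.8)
The plan is to reduce $\alpha$-shattering to a metric packing problem and then invoke the doubling property of $\calX$. Suppose $x_{1:T}$ is $\alpha$-shattered by $\calF$ with witness $y_{1:T}$; I will argue that the $T$ shattered points must form an $(\alpha/L_{\mathbf{hyp}})$-separated set, so that $T$ is at most the packing number of $\calX$ at scale $\alpha/L_{\mathbf{hyp}}$, which the doubling dimension and diameter control.

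First I would establish the separation claim: any two shattered points $x_i, x_j$ satisfy $\dist(x_i,x_j) \ge \alpha/L_{\mathbf{hyp}}$. To see this, consider the two sign patterns that agree everywhere except on coordinates $i$ and $j$: pattern $A$ with $\sigma_i = +1, \sigma_j = -1$ and pattern $B$ with $\sigma_i = -1, \sigma_j = +1$. By the shattering definition there exist $f_A, f_B \in \calF$ witnessing these patterns, and unpacking the margin conditions against the \emph{common} witness gives
\[
f_A(x_i) - f_A(x_j) \ge (y_i - y_j) + \alpha, \qquad f_B(x_i) - f_B(x_j) \le (y_i - y_j) - \alpha.
\]
If we had $\dist(x_i,x_j) < \alpha/L_{\mathbf{hyp}}$, the $L_{\mathbf{hyp}}$-Lipschitz property would force $|f_A(x_i)-f_A(x_j)| < \alpha$ and $|f_B(x_i)-f_B(x_j)| < \alpha$; combined with the two displayed inequalities this yields $y_i - y_j < 0$ and $y_i - y_j > 0$ simultaneously, a contradiction. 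Hence every pair of shattered points lies at distance at least $\alpha/L_{\mathbf{hyp}}$.

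Next I would bound the size of such an $(\alpha/L_{\mathbf{hyp}})$-separated set. Since the shattered points all lie in $\calX$, they are contained in a ball of radius $\diam{\calX}$, and iterating the halving step guaranteed by the doubling dimension $\O{\log(L_{\mathbf{hyp}}\diam{\calX}/\alpha)}$ times covers this ball by roughly $\left(L_{\mathbf{hyp}}\diam{\calX}/\alpha\right)^{\ddim{\calX}}$ balls of radius $\tfrac{1}{2}\cdot\alpha/L_{\mathbf{hyp}}$. Each such small ball contains at most one point of the separated set, so $T$ is at most this covering number, giving the claimed bound $\left(L_{\mathbf{hyp}}\diam{\calX}/\alpha\right)^{\ddim{\calX}}$.

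The main obstacle I anticipate is the second step rather than the first: translating the doubling dimension into a clean packing estimate requires care with the constants hidden in the iterated halving, since each halving multiplies the cover size by $2^{\ddim{\calX}}$ and one must track exactly how many halvings reach scale $\alpha/L_{\mathbf{hyp}}$. The separation argument is robust and essentially self-contained, but collapsing the packing estimate to precisely $\left(L_{\mathbf{hyp}}\diam{\calX}/\alpha\right)^{\ddim{\calX}}$ without stray constant factors is where I would lean on the standard doubling-dimension packing bound, equivalently the form recorded in \citep{GottliebKK14}.
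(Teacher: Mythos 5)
The paper does not actually prove this statement; it is imported verbatim as Corollary~1 of \citep{GottliebKK14}, so there is no in-paper argument to compare against. Your proposal supplies the standard proof of that external result, and its core is correct. The separation step is complete and airtight: with the paper's margin convention $\sigma_t(f(x_t)-y_t)\ge\alpha/2$, the two sign patterns give $f_A(x_i)-f_A(x_j)\ge(y_i-y_j)+\alpha$ and $f_B(x_i)-f_B(x_j)\le(y_i-y_j)-\alpha$, and combining with the Lipschitz bound $|f(x_i)-f(x_j)|\le L_{\mathbf{hyp}}\dist(x_i,x_j)$ forces $\dist(x_i,x_j)\ge\alpha/L_{\mathbf{hyp}}$, exactly as you argue. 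The only real gap is the one you flag yourself: the iterated-halving packing bound for a doubling space of diameter $\diam{\calX}$ gives an $(\alpha/L_{\mathbf{hyp}})$-separated set of size at most roughly $\left(c\,L_{\mathbf{hyp}}\diam{\calX}/\alpha\right)^{\ddim{\calX}}$ for some absolute constant $c$ (tracking the halvings honestly yields $c=4$: you must reach radius strictly below half the separation, and you pay for the ceiling in the number of halvings), not the clean base $L_{\mathbf{hyp}}\diam{\calX}/\alpha$ stated in the theorem. Since $\ddim{\calX}$ sits in the exponent, this is a $c^{\ddim{\calX}}$ multiplicative loss, so as written your argument proves a slightly weaker inequality than the one displayed; recovering the exact constant requires either the sharper packing lemma as recorded in \citep{GottliebKK14} or a margin convention that absorbs the factor. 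For every use of this lemma in the paper (\corref{cor:trans:lip} and \corref{cor:trans:k:fold}, where the bound enters only through $\sqrt{\mathrm{fat}_\beta(\calF)}$ inside $\tO{\cdot}$ with $n$ treated as a constant), that constant is immaterial, so your proof is fully adequate for the purposes the paper puts the lemma to.
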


With the above upper bound, we provide the minimax expected regret for Lipschitz function classes explicitly in the next statement.
Here, we consider the dimension $n$ and the diameter $\diam{\calX}$ as finite constants.
\begin{corollary}[Upper bound, Lipschitz function]
\corlab{cor:trans:lip}
Let $\calX \subset \mathbb{R}^n$ be a metric space of finite diameter $\diam{\calX}$. 
For any function class $\calF \subset [0,1]^{\calX}$ of $L_{\mathbf{hyp}}$-Lipschitz function and any $L_{\mathbf{los}}$-Lipschitz and convex loss function $\ell$, the minimax expected regret of the transductive online regression satisfies 
\[R^{\mathbf{tr}}(T,\calF) = 
\begin{cases} 
\tO{L_{\mathbf{los}}\sqrt{L_{\mathbf{hyp}}} \cdot \sqrt{T}}, & n = 1 \\ 
\tO{L_{\mathbf{los}}L_{\mathbf{hyp}} \cdot \sqrt{T}}, & n = 2 \\
\tO{L_{\mathbf{los}}L_{\mathbf{hyp}} \cdot T^{\frac{n}{n+1}}}, & n \ge 3
\end{cases}.\]
That is, the class of Lipschitz functions is transudctive online learnable.
\end{corollary}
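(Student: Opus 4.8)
The plan is to combine the non-constructive upper bound of \thmref{thm:trans:entropy} with the fat-shattering estimate of \lemref{lem:fat:lip} and then evaluate the resulting entropy integral. Recall that \thmref{thm:trans:entropy} bounds the transductive regret by $2L_{\mathbf{los}}T\cdot\inf_{\alpha\ge0}\bigl(4\alpha+\frac{12}{\sqrt T}\int_\alpha^1\sqrt{\mathrm{fat}_{\beta/4}(\calF)\,c\log^2(T/\beta)}\,\mathrm{d}\beta\bigr)$, so the entire problem reduces to controlling this single integral for a Lipschitz class. Writing $d=\ddim{\calX}$ and $C=4L_{\mathbf{hyp}}\diam{\calX}$, \lemref{lem:fat:lip} gives $\mathrm{fat}_{\beta/4}(\calF)\le (C/\beta)^{d}$, and since $\calX\subseteq\mathbb{R}^n$ has doubling dimension $d=\Theta(n)$ I would express $d$ in terms of $n$. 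Substituting, the integrand becomes $\sqrt c\,C^{d/2}\beta^{-d/2}\log(T/\beta)$, so up to the logarithmic factor (which I absorb into $\tO{\cdot}$) everything hinges on the power-law integral $\int_\alpha^1\beta^{-d/2}\,\mathrm{d}\beta$.

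The key observation is that the behavior of this integral changes qualitatively at $d=2$, which is precisely what produces the three cases. First I would handle $n=1$ (so $d<2$): here $\beta^{-d/2}$ is integrable at $0$, so I can send $\alpha\to0$ and the integral is $\tO{C^{1/2}}$ in its $T$-dependence; plugging back gives regret $\tO{L_{\mathbf{los}}\sqrt{C}\,\sqrt T}=\tO{L_{\mathbf{los}}\sqrt{L_{\mathbf{hyp}}}\,\sqrt T}$ with $\diam{\calX}$ treated as a constant, which is exactly why only $\sqrt{L_{\mathbf{hyp}}}$ (rather than $L_{\mathbf{hyp}}$) appears. For $n=2$ (so $d=2$) the antiderivative of $\beta^{-1}$ is logarithmic, so $\int_\alpha^1\beta^{-1}\log(T/\beta)\,\mathrm{d}\beta$ diverges only logarithmically in $1/\alpha$; choosing $\alpha$ polynomially small in $T$ renders the $4\alpha L_{\mathbf{los}}T$ term negligible and leaves regret $\tO{L_{\mathbf{los}}C\,\sqrt T}=\tO{L_{\mathbf{los}}L_{\mathbf{hyp}}\,\sqrt T}$.

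The substantive case is $n\ge3$ (so $d>2$), where $\int_\alpha^1\beta^{-d/2}\,\mathrm{d}\beta\asymp\alpha^{-(d-2)/2}$ blows up as $\alpha\to0$, forcing a genuine trade-off between the truncation term and the integral. After substitution, the quantity to minimize is, up to logs and constants, $L_{\mathbf{los}}T\bigl(\alpha + C^{d/2}T^{-1/2}\alpha^{-(d-2)/2}\bigr)$; balancing the two terms gives the optimal scale $\alpha^\star\asymp C\,T^{-1/d}$, and the resulting regret is $\tO{L_{\mathbf{los}}C\,T^{(d-1)/d}}=\tO{L_{\mathbf{los}}L_{\mathbf{hyp}}\,T^{n/(n+1)}}$ once $d$ is expressed in terms of $n$. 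Since each of the three rates is sublinear in $T$, transductive online learnability follows immediately. I expect the main obstacle to be computational rather than conceptual: carrying out the $\alpha$-optimization in the $n\ge3$ regime while correctly verifying that the $\log(T/\beta)$ factor inside the integral contributes only polylogarithmic overhead, and tracking the dimension and $\diam{\calX}$ dependence consistently so that the clean $L_{\mathbf{hyp}}$-scaling ($\sqrt{L_{\mathbf{hyp}}}$ at $n=1$ versus $L_{\mathbf{hyp}}$ for $n\ge2$) emerges correctly from the exponent bookkeeping.
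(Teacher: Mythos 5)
Your proposal is correct and follows essentially the same route as the paper: plug the fat-shattering bound for Lipschitz classes (\lemref{lem:fat:lip}) into the entropy-integral bound of \thmref{thm:trans:entropy}, split on whether $\beta^{-d/2}$ is integrable, logarithmically divergent, or polynomially divergent at $0$, and balance $\alpha T$ against the truncated integral when $n\ge 3$. One small remark: your balancing actually yields the exponent $T^{(n-1)/n}$ (as does the paper's own choice $\alpha = L_{\mathbf{hyp}}/T^{1/n}$), which is slightly stronger than, and in particular implies, the stated $T^{n/(n+1)}$ rate, so the claimed bound still follows.
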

\begin{proof}
By \thmref{thm:trans:entropy}, the minimax expected regret is upper bounded by
\[2L_{\mathbf{los}}T \cdot \inf_{\alpha \ge 0} \left(4 \alpha + \frac{12}{\sqrt{T}} \int_{\alpha}^1 \sqrt{  \mathrm{fat}_{\beta/4}(\calF) \cdot c \log^2 \frac{T}{\beta}} \mathrm{d} \beta \right).\]
Then, since $\calF$ is a class of $L_{\mathbf{hyp}}$-Lipschitz function, by \lemref{lem:fat:lip} we have
\[\mathrm{fat}_{\alpha}(\calF) \le \left(\frac{L_{\mathbf{hyp}} \cdot \diam{\calX}}{\alpha}\right)^{\ddim{\calX}},\]
where $\ddim{\calX} \le n$ is the doubling dimension for the metric space $\calX \subset \mathbb{R}^n$.
Then, for $n=1$, taking $\alpha = \frac{1}{\sqrt{T}}$, the minimax expected regret is upper bounded by
\[
\O{L_{\mathbf{los}}} \cdot \left( \sqrt{T} + \sqrt{T L_{\mathbf{hyp}}} \cdot \int_{1/\sqrt{T}}^1 \frac{1}{\beta^{1/2}} \cdot \sqrt{c}\log \frac{T}{\beta} \mathrm{d} \beta\right) = \tO{L_{\mathbf{los}}\sqrt{L_{\mathbf{hyp}}} \cdot \sqrt{T}}.
\]
Similarly, for $n=2$, taking $\alpha = \frac{1}{\sqrt{T}}$, the minimax expected regret is upper bounded by
\[
\O{L_{\mathbf{los}}} \cdot \left( \sqrt{T} + L_{\mathbf{hyp}} \sqrt{T} \cdot \int_{1/\sqrt{T}}^1 \frac{1}{\beta} \cdot \sqrt{c}\log \frac{T}{\beta} \mathrm{d} \beta\right) = \tO{L_{\mathbf{los}}L_{\mathbf{hyp}}\cdot \sqrt{T}}.
\]
Last, for $n > 2$, taking $\alpha = \frac{L_{\mathbf{hyp}}}{T^{1/n}}$, the minimax expected regret is upper bounded by
\begin{align*}
& \O{L_{\mathbf{los}}} \cdot \left( \alpha T + \sqrt{T} L_{\mathbf{hyp}}^{\frac{n}{2}} \cdot \int_{\alpha}^1 \frac{1}{\beta^{\frac{n}{2}}} \cdot \sqrt{c}\log \frac{T}{\beta} \mathrm{d} \beta\right) \\
= & ~ \O{L_{\mathbf{los}}} \cdot\polylog(T) \cdot \left( \alpha T + \sqrt{T} L_{\mathbf{hyp}}^{\frac{n}{2}} \cdot \alpha^{1-\frac{n}{2}}\right) = \tO{L_{\mathbf{los}}L_{\mathbf{hyp}} \cdot T^{\frac{n}{n+1}}}.
\end{align*}
Thus, we show the desired result.
\end{proof}

Note that for Lipschitz classes, the above result does not essentially give us a better rate for transductive online learning, since the sequential fat-shattering dimension of Lipschitz classes are roughly equivalent to their fat-shattering dimension \citep{rakhlin2015online}.
In contrast, the class of $k$-fold aggregations and the class of functions with bounded variation both have infinite sequential fat-shattering dimension, so they are not online learnable in the worst case. For these classes, our results establishes a separation between online learning and transductive online learning.

\paragraph{$k$-fold aggregations.}
We study the function class induced by $k$-fold aggregation, which is a mapping $G: \mathbb{R}^k \to [0,1]$.
Given $k$ function classes $\calF_1, \ldots, \calF_k$ in $\mathbb{R}$, the function class defined by $G$ is
\[G(\calF_1, \ldots, \calF_k) := \{ x \to G(F_1(x), \ldots, F_k(x)): F_\kappa \in \calF_\kappa, \forall \kappa \in [k] \}.\]
Let $\be$ be the all-one vector.
The mapping $G: \mathbb{R}^k \to [0,1]$ commutes with shifts if
\[G(v) -r = G(v-r\cdot \be), ~~~ \forall~ v \in \mathbb{R}^k, r \in \mathbb{R}.\]
The above property is possessed by many natural aggregation mappings, including the maximum, minimum, median, and mean.
The next statement provides an upper bound on the fat-shattering dimension of $k$-fold aggregations on general function classes.
\begin{theorem}[See Theorem 1 in \citep{AttiasK24}]
\thmlab{thm:lip:k:fold}
Given function classes $\calF_1, \ldots, \calF_k$, and an aggregation mapping $G$ that commutes with shifts, we have
\[\mathrm{fat}_\alpha (G(\calF_1, \ldots, \calF_k)) \le c d_\alpha \log^2 d_\alpha,\]
where $d_\alpha = \sum_{\kappa \in [k]}\mathrm{fat}_\alpha(\calF_\kappa)$ and $c$ is some universal constant.
\end{theorem}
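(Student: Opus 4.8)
The plan is to bound $m := \mathrm{fat}_\alpha(G(\calF_1, \ldots, \calF_k))$ by a packing/covering argument that factors the aggregate class through its constituents. Suppose $G(\calF_1, \ldots, \calF_k)$ $\alpha$-shatters a sequence $x_{1:m}$ with witness $y_{1:m}$, so that for each sign pattern $\sigma \in \{-1,1\}^m$ there are $F_1^\sigma \in \calF_1, \ldots, F_k^\sigma \in \calF_k$ with $\sigma_t\bigl(G(F_1^\sigma(x_t), \ldots, F_k^\sigma(x_t)) - y_t\bigr) \ge \alpha/2$ for all $t$. First I would use the commute-with-shifts property to standardize the witness: since $G(F_1(x_t), \ldots, F_k(x_t)) - y_t = G\bigl(F_1(x_t) - y_t, \ldots, F_k(x_t) - y_t\bigr)$, the witness $y_t$ can be absorbed into the inputs, reducing to shattering against the all-zero witness. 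The crucial observation is that whenever two patterns $\sigma \neq \sigma'$ disagree at some coordinate $t$, their realizers produce aggregate values on opposite sides of $y_t$, each at margin $\alpha/2$, hence differing by at least $\alpha$ there. Thus the $2^m$ aggregate functions $\{G(F_1^\sigma(\cdot), \ldots, F_k^\sigma(\cdot))\}_\sigma$, restricted to $x_{1:m}$, form an $\Omega(\alpha)$-separated set in the $\ell_\infty$ metric, so that $2^m \le \calN_\infty(m, G(\calF_1, \ldots, \calF_k), x_{1:m}, c_0\alpha)$ for a small constant $c_0$.

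Next I would bound this aggregate covering number by the product of the individual covering numbers. Taking a minimal $(c_0\alpha)$-$\ell_\infty$-cover $\calV_\kappa$ of each $\calF_\kappa$ on $x_{1:m}$, the structural property of $G$ (for the natural aggregations max, min, median, mean, which are $1$-Lipschitz in $\ell_\infty$) guarantees that replacing each $F_\kappa$ by its nearest cover element perturbs $G(F_1(x_t), \ldots, F_k(x_t))$ by at most $c_0\alpha$ at every $t$. Hence the product grid $\calV_1 \times \cdots \times \calV_k$ induces a $(c_0\alpha)$-cover of the aggregate class, so $\calN_\infty(m, G(\calF_1, \ldots, \calF_k), x_{1:m}, c_0\alpha) \le \prod_{\kappa \in [k]} \calN_\infty(m, \calF_\kappa, x_{1:m}, c_0\alpha)$. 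Applying \thmref{thm:covering:fat:off} to each factor and taking logarithms yields
\[ m \le \sum_{\kappa \in [k]} \log \calN_\infty(m, \calF_\kappa, x_{1:m}, c_0\alpha) \le c' \left( \sum_{\kappa \in [k]} \mathrm{fat}_{c_1\alpha}(\calF_\kappa) \right) \log^2 \tfrac{m}{\alpha} \]
for constants $c_1 = c_0/4$ and $c'$.

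Finally, I would resolve the implicit inequality $m \le c' d \log^2(m/\alpha)$, where $d := \sum_\kappa \mathrm{fat}_{c_1\alpha}(\calF_\kappa)$, by the standard fixed-point argument: any $m$ obeying $m \le c' d \log^2 m$ must satisfy $m = \O{d \log^2 d}$, since substituting the candidate bound back in shows $\log m = \O{\log d}$ and the inequality then closes. Absorbing constants, this gives $\mathrm{fat}_\alpha(G(\calF_1, \ldots, \calF_k)) \le c\, d_\alpha \log^2 d_\alpha$, matching the claimed form.

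I expect the main obstacle to be twofold. First, the covering-composition step genuinely requires more than bare commute-with-shifts: a Lipschitz-type control on $G$ is needed for the product grid to cover the aggregate, since one can exhibit shift-commuting $G$ that are not $\ell_\infty$-contractive. Making the general statement go through therefore requires either restricting to the natural aggregations or a finer combinatorial extraction that exploits commute-with-shifts together with the sign structure directly, rather than passing through a black-box covering bound. Second, the naive argument loses constant factors in the scale, producing $\mathrm{fat}_{c_1\alpha}$ in place of $\mathrm{fat}_\alpha$; recovering the clean dependence on $d_\alpha$, rather than on $d_{c_1\alpha}$, is the delicate part and is precisely where I would expect to invoke the sharper machinery of \citep{AttiasK24}.
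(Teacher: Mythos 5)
First, note that the paper itself does not prove this statement: it is imported verbatim from \citep{AttiasK24} (Theorem 1 there), so there is no in-paper proof to compare against. Judged on its own, your sketch has a genuine gap, and it is the one you yourself flag at the end. The theorem is stated for \emph{any} aggregation $G$ that commutes with shifts, and commute-with-shifts places no metric constraint on $G$: for instance $G(v_1,v_2)=Mv_1-(M-1)v_2$ commutes with shifts for every $M$ but has $\ell_\infty$-Lipschitz constant $2M-1$, and one can preserve the shift property while making $G$ fail to be Lipschitz at all. Your covering-composition step, $\calN_\infty(m, G(\calF_1,\ldots,\calF_k), x, c_0\alpha) \le \prod_{\kappa} \calN_\infty(m,\calF_\kappa,x,c_0\alpha)$, is therefore false at the stated level of generality; it only holds after restricting to $1$-Lipschitz aggregations (max, min, median, mean), which is a strictly weaker theorem than the one claimed. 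Since the entire argument is routed through that step, the proposal does not establish the statement.

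Second, even in the Lipschitz case the quantitative form you land on is weaker than the claim in two ways: \thmref{thm:covering:fat:off} produces $\mathrm{fat}_{c_0\alpha/4}(\calF_\kappa)$, and since the fat-shattering dimension is non-increasing in the scale this is a bound in terms of a \emph{larger} quantity than $\mathrm{fat}_\alpha(\calF_\kappa)$; and the logarithmic factor it produces is $\log^2(m/\alpha)$, whose $\alpha$-dependence survives the fixed-point step, yielding $d\log^2(d/\alpha)$ rather than $d_\alpha\log^2 d_\alpha$. The actual proof in \citep{AttiasK24} avoids both issues by not passing through $\ell_\infty$ covering numbers at all: it works combinatorially with the shattered set, using shift-commutation to reduce real-valued shattering of the aggregate class to a composition bound for derived binary threshold classes, which is where the clean $d_\alpha\log^2 d_\alpha$ form comes from. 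Your closing remark that a ``finer combinatorial extraction that exploits commute-with-shifts together with the sign structure directly'' is needed is exactly right; the proposal as written does not supply it.
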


Now, we compute the minimax expected regret for $k$-fold aggregations in Lipschitz function classes.
Here, we fix the range of function classes to $[0,1]$ for simplicity of calculation.
Indeed, the results hold for all Lipschitz functions with bounded ranges by linear transformation.

\begin{corollary}[Upper bound, $k$-fold aggregations]
\corlab{cor:trans:k:fold}
Let $\calX \subset \mathbb{R}^n$ be a metric space of finite diameter $\diam{\calX}$. 
For any bounded $L_{\mathbf{hyp}}$-Lipschitz function classes $\calF_1, \ldots, \calF_k \subset [0,1]^\calX$, aggregation mapping $G$ that commutes with shifts, and any $L_{\mathbf{los}}$-Lipschitz and convex loss function $\ell$, the minimax expected regret of the transductive online regression for $G(\calF_1,\ldots,\calF_k)$ satisfies 
\[R^{\mathbf{tr}}(T,G(\calF_1,\ldots,\calF_k)) = 
\begin{cases} 
\tO{\sqrt{k}L_{\mathbf{los}}\sqrt{L_{\mathbf{hyp}}} \cdot \sqrt{T}}, & n = 1 \\ 
\tO{\sqrt{k}L_{\mathbf{los}}L_{\mathbf{hyp}} \cdot \sqrt{T}}, & n = 2 \\
\tO{\sqrt{k}L_{\mathbf{los}}L_{\mathbf{hyp}} \cdot T^{\frac{n}{n+1}}}, & n \ge 3
\end{cases}.\]
That is, the class of $k$-fold aggregations on Lipschitz functions is transudctive online learnable.
\end{corollary}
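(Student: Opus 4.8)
The plan is to follow the same template as the proof of \corref{cor:trans:lip} for a single Lipschitz class, replacing the fat-shattering bound for one class with the corresponding bound for the $k$-fold aggregation. First I would invoke \thmref{thm:trans:entropy}, which bounds the minimax expected transductive regret of $G(\calF_1,\ldots,\calF_k)$ by $2L_{\mathbf{los}}T$ times the entropy integral $\inf_{\alpha\ge 0}\bigl(4\alpha + \tfrac{12}{\sqrt{T}}\int_\alpha^1 \sqrt{\mathrm{fat}_{\beta/4}(G(\calF_1,\ldots,\calF_k))\cdot c\log^2\tfrac{T}{\beta}}\,\mathrm{d}\beta\bigr)$, so the entire task reduces to controlling the fat-shattering dimension of the aggregated class inside the integrand.

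Next I would apply \thmref{thm:lip:k:fold}, which is available since $G$ commutes with shifts by hypothesis, to obtain $\mathrm{fat}_{\beta/4}(G(\calF_1,\ldots,\calF_k)) \le c\, d_{\beta/4}\log^2 d_{\beta/4}$ with $d_{\beta/4} = \sum_{\kappa\in[k]}\mathrm{fat}_{\beta/4}(\calF_\kappa)$. Since each $\calF_\kappa$ is $L_{\mathbf{hyp}}$-Lipschitz, \lemref{lem:fat:lip} gives $\mathrm{fat}_{\beta/4}(\calF_\kappa)\le (4L_{\mathbf{hyp}}\diam{\calX}/\beta)^{\ddim{\calX}}$, and hence $d_{\beta/4}\le k\,(4L_{\mathbf{hyp}}\diam{\calX}/\beta)^{\ddim{\calX}}$ with $\ddim{\calX}\le n$. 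The crucial point is that the aggregation inflates the single-class fat-shattering dimension only by the multiplicative factor $k$, together with the $\log^2 d_{\beta/4}$ term, which is $\polylog(k,L_{\mathbf{hyp}},T)$ once one restricts to $\beta\ge 1/T$ (the regime that matters after the substitution $\alpha\ge 1/\sqrt{T}$).

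The factor $\sqrt{k}$ in the target bound then emerges naturally: because the regret scales with $\sqrt{\mathrm{fat}}$, the extra factor $k$ inside the fat-shattering dimension becomes $\sqrt{k}$ after taking the square root in the integrand, while $\sqrt{\log^2 d_{\beta/4}} = \log d_{\beta/4}$ contributes only an additional $\polylog(T)$ factor absorbed into $\tO{\cdot}$. From here I would carry out exactly the same three case analyses ($n=1$, $n=2$, $n\ge 3$) with the same choices of the optimization parameter ($\alpha = 1/\sqrt{T}$ for $n\in\{1,2\}$ and $\alpha = L_{\mathbf{hyp}}/T^{1/n}$ for $n\ge 3$) as in \corref{cor:trans:lip}; the $\sqrt{k}$ passes through the integral untouched, recovering the claimed rates.

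I do not anticipate a genuine obstacle, since the argument is a direct substitution into the machinery already assembled for \corref{cor:trans:lip}. The only point requiring mild care is bookkeeping the $\log^2 d_{\beta/4}$ factor from \thmref{thm:lip:k:fold} and confirming it is subsumed by the $\tO{\cdot}$ notation rather than altering the polynomial dependence on $k$, $T$, and $L_{\mathbf{hyp}}$; verifying this amounts to checking that $d_{\beta/4}$ stays polynomially bounded over the relevant range of $\beta$, which follows immediately from the displayed bound on $d_{\beta/4}$.
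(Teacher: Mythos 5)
Your proposal is correct and follows essentially the same route as the paper: invoke \thmref{thm:trans:entropy}, bound $\mathrm{fat}_\alpha(G(\calF_1,\ldots,\calF_k))$ via \thmref{thm:lip:k:fold} combined with \lemref{lem:fat:lip} applied to each $\calF_\kappa$, and then repeat the case analysis of \corref{cor:trans:lip} with the extra factor of $k$ turning into $\sqrt{k}$ under the square root in the entropy integral. Your additional bookkeeping of the $\log^2 d_{\beta/4}$ term being absorbed into $\tO{\cdot}$ is a correct and slightly more careful elaboration of what the paper leaves implicit.
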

\begin{proof}
By \lemref{lem:fat:lip}, we have for all $\kappa \in [k]$,
\[\mathrm{fat}_{\alpha}(\calF_\kappa) \le \left(\frac{L_{\mathbf{hyp}} \cdot \diam{\calX}}{\alpha}\right)^{\ddim{\calX}}.\]
In addition, by \thmref{thm:lip:k:fold} we have
\[\mathrm{fat}_{\alpha}(G(\calF_1, \ldots, \calF_k)) \le \tO{k \cdot \left(\frac{L_{\mathbf{hyp}}}{\alpha}\right)^{\ddim{\calX}}}.\]
Therefore, the upper bounds can be computed using the same analysis as in \corref{cor:trans:lip}. 
\end{proof}

\paragraph{Functions with bounded variation.}
We consider the class of functions with bounded variation. 
Specifically, let $\calF^*$ be the set of all functions $f: [0,1] \to [0,1]$ with total variation of at most $V$.
Here, for a $f \in \calF^*$, we define its total variation as
\[\mathrm{TV}(f) = \sup_{p \in P} \sum_{i=0}^{n_{p-1}} |f(x_{i+1})-f(x_i)|,\]
where the supremum is taken over the set $P = \{(x_0, \ldots, x_{n_p}), 0\le x_1 \le \cdots \le x_{n_p} \le 1\}$ of all partitions of $[0,1]$.
For a given parameter $\alpha > 0$, we define the metric covering number $\calN(\calF^*, \alpha, \mu)$ as the smallest number of sets of radius $\alpha$ under metric $\mu$ whose union contains $\calF^*$.
We remark that this definition is different from our previous definition of a $\ell_p$-covering number on a sequence of example $x$.
We introduce it to bound the fat-shattering dimension of the class of bounded variation.
Now, we investigate the covering number under $L_1(\mathrm{d}\calP)$ metrics, where $\calP$ is a probability distribution on $[0,1]$.
The following statement provides an upper bound.
\begin{theorem}[See Theorem 1 in \citep{BartlettKP06}]
\thmlab{thm:metric:cover}
Let $\calF^*$ be the set of all functions $f: [0,1] \to [0,1]$ with total variation of at most $V$, we have
\[\sup_{\calP} \log_2 \calN(\calF^*, \alpha, L_1(\mathrm{d}\calP)) = \frac{12V}{\alpha}.\]
\end{theorem}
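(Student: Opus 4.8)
The plan is to establish the claimed value as two matching bounds, after first collapsing the supremum over $\calP$ onto a single canonical measure. The key structural fact is that total variation is invariant under monotone reparametrization of the domain. For a non-atomic $\calP$ with full support, its CDF $F_\calP(x)=\calP([0,x])$ is a continuous increasing bijection of $[0,1]$, and the substitution $u=F_\calP(x)$ turns $\int_0^1|f-g|\,\mathrm{d}\calP$ into $\int_0^1|f\circ F_\calP^{-1}-g\circ F_\calP^{-1}|\,\mathrm{d}u$. Since $f\mapsto f\circ F_\calP^{-1}$ is a bijection of $\calF^*$ onto itself (precomposition with a monotone map preserves both the range $[0,1]$ and the bound $\mathrm{TV}\le V$), the $L_1(\mathrm{d}\calP)$-covering number equals the covering number under Lebesgue measure $L_1(\mathrm{d}u)$. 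For atomic or degenerate $\calP$ the problem lives on finitely many points and the covering number can only shrink, so the supremum is attained (in the limit) at the uniform measure. It therefore suffices to compute $\log_2\calN(\calF^*,\alpha,L_1(\mathrm{d}u))$.

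For the upper bound I would construct an explicit net from step functions on a two-scale grid, using the Jordan decomposition as the conceptual backbone: any $f\in\calF^*$ writes as $f(0)+P-N$ with $P,N$ non-decreasing and $P(1)+N(1)=\mathrm{TV}(f)\le V$, so its variation budget splits cleanly into up- and down-increments. Concretely, discretize the range $[0,1]$ into levels of width $\delta$ and the domain into $k$ equal cells; rounding $f$ to the nearest level on each cell yields a step function $\tilde f$ with $\|f-\tilde f\|_{L_1(\mathrm{d}u)}\lesssim \delta + V/k$, which is at most $\alpha$ once $\delta\asymp\alpha$ and $k\asymp V/\alpha$. Each admissible $\tilde f$ is then encoded by its per-cell increments, whose total absolute value is at most $V/\delta$ level-units; counting such bounded-variation lattice paths via binomial coefficients and Stirling's approximation gives $\log_2(\#\text{paths})\le \tfrac{12V}{\alpha}(1+o(1))$ after optimizing the grid aspect ratio $k\delta/V$.

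The matching lower bound I would obtain from a packing argument: exhibit a family of step functions, again indexed by lattice paths that spend their full variation budget $V$, that are pairwise separated by at least $\alpha$ in $L_1(\mathrm{d}u)$, and whose cardinality equals the same binomial count to leading order. The main obstacle is pinning down the exact constant $12$ rather than merely $\Theta(V/\alpha)$: a naive net or packing loses the constant, so one must choose the domain/range discretization trade-off that is simultaneously optimal for the net \emph{and} saturated by the packing, and then control the asymptotics of the governing binomial coefficients tightly enough on both sides for the two counts to agree. I expect the packing construction — arranging the separated paths so that the variation budget is used with no slack while maintaining pairwise $L_1$-distance exactly $\alpha$ — to be the delicate step that fixes the value $\tfrac{12V}{\alpha}$.
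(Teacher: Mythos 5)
This statement is imported by the paper from \citep{BartlettKP06}; the paper gives no proof of its own, so the only comparison available is with the cited source. There the result is an \emph{upper bound}, $\sup_{\calP}\log_2\calN(\calF^*,\alpha,L_1(\mathrm{d}\calP))\le 12V/\alpha$, with $12$ a convenient rather than sharp constant, and indeed only the upper-bound direction is ever used downstream (it feeds into \thmref{thm:fat:metric:cover} and then \corref{cor:trans:bv}). The ``$=$'' in the statement should be read as ``$\le$''; note that an exact equality is impossible on its face, since the left-hand side is the supremum of $\log_2$ of integers and cannot equal $12V/\alpha$ for all $V$ and $\alpha$. With that understood, the first two-thirds of your plan is sound and is essentially the standard argument: reduce the supremum over $\calP$ to Lebesgue measure via the monotone reparametrization $u=F_\calP(x)$ (with routine extra care for atomic or non-full-support $\calP$, where the pushforward argument still gives ``$\le$'' the Lebesgue value), then build a net of step functions on a $\delta\times(1/k)$ grid with $\delta\asymp\alpha$ and $k\asymp V/\alpha$ and count bounded-variation lattice paths. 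Carried out carefully this yields $\log_2\calN\le cV/\alpha$ with an explicit $c$ that can be taken at most $12$.

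The genuine gap is the final third. You propose a \emph{matching} lower bound with the constant exactly $12$, and you correctly flag it as the delicate step, but it cannot be completed because $12$ is not the sharp constant. Your own entropy count already shows this: with $\delta\asymp\alpha$ and $k\asymp V/\alpha$, the logarithm of the number of admissible lattice paths is governed by binomial coefficients of the form $\binom{k+m}{m}2^{m}$ with $m\asymp V/\alpha$, whose $\log_2$ is a constant times $V/\alpha$ with the constant visibly below $12$ for any admissible aspect ratio; no $\alpha$-separated packing can then have $\log_2$-cardinality $12V/\alpha$, since packings are dominated by coverings at a comparable scale. So the equality you set out to prove is false as an exact statement, and the attempt to ``arrange the separated paths so that the two counts agree'' at $12$ would fail no matter how the discretization is tuned. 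The fix is simply to prove (or cite) the inequality $\le 12V/\alpha$, for which your net construction suffices; if a lower bound is wanted, a packing of step functions gives $\log_2\calN\ge c'V/\alpha$ for some smaller constant $c'>0$, which is all that is true and all the paper needs.
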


The next statement upper bounds the fat-shattering dimension of $\calF^*$ by the covering number.
\begin{theorem}[See Theorem 2 in \citep{BartlettKP06}]
\thmlab{thm:fat:metric:cover}
Let $\calF^*$ be the set of all functions $f: [0,1] \to [0,1]$ with total variation of at most $V$, we have
\[\mathrm{fat}_{4\alpha}(\calF^*)\le 32 \cdot \sup_{\calP} \log_2 \calN(\calF^*, \alpha, L_1(\mathrm{d}\calP)).\]
\end{theorem}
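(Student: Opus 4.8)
The plan is to prove that a finite fat-shattering dimension forces the metric covering number to be exponentially large, via a packing argument against a carefully chosen distribution. First I would set $d := \mathrm{fat}_{4\alpha}(\calF^*)$ and invoke \defref{def:fat} to obtain a $4\alpha$-shattered sequence $x_1, \ldots, x_d$ together with its witness $r_1, \ldots, r_d$ and, for every sign pattern $\sigma \in \{-1,1\}^d$, a function $f_\sigma \in \calF^*$ satisfying $\sigma_i\,(f_\sigma(x_i) - r_i) \ge 2\alpha$ for all $i \in [d]$ (the margin is $2\alpha$ since $4\alpha$-shattering means margin $4\alpha/2$). Taking $\calP$ to be the uniform distribution on $\{x_1, \ldots, x_d\}$, the $L_1(\mathrm{d}\calP)$ distance becomes the normalized $\ell_1$ distance $\frac{1}{d}\sum_{i=1}^d |f_\sigma(x_i) - f_{\sigma'}(x_i)|$. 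The crucial observation is that on every coordinate where $\sigma_i \ne \sigma'_i$ the two functions lie on opposite sides of $r_i$ with margin $2\alpha$ each, so $|f_\sigma(x_i) - f_{\sigma'}(x_i)| \ge 4\alpha$; hence $\|f_\sigma - f_{\sigma'}\|_{L_1(\mathrm{d}\calP)} \ge \frac{4\alpha}{d}\, d_H(\sigma, \sigma')$, where $d_H$ denotes Hamming distance.

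Next I would fix any minimum $\alpha$-cover of $\calF^*$ under $L_1(\mathrm{d}\calP)$, say with $N = \calN(\calF^*, \alpha, L_1(\mathrm{d}\calP))$ sets of radius $\alpha$, and assign each of the $2^d$ functions $f_\sigma$ to a set containing it. By pigeonhole, one cover set receives a collection $A \subseteq \{-1,1\}^d$ of at least $2^d / N$ patterns. Any two functions inside a common radius-$\alpha$ set are within distance $2\alpha$, so the separation bound above yields $\frac{4\alpha}{d}\, d_H(\sigma, \sigma') \le 2\alpha$, i.e.\ every pair in $A$ has Hamming distance at most $d/2$. Thus $A$ is a subset of the hypercube of Hamming diameter at most $d/2$.

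The final step---and the main obstacle---is to upper bound the size of a diameter-$(d/2)$ subset of $\{-1,1\}^d$; the crude containment in a radius-$(d/2)$ ball only gives $\approx 2^{d-1}$, which is far too weak to close the argument. Here I would invoke the hypercube isodiametric (Kleitman) inequality, which bounds such a set by $\sum_{j \le d/4}\binom{d}{j} \le 2^{H(1/4)\,d}$, where $H$ is the binary entropy. Combining with $|A| \ge 2^d/N$ gives $N \ge 2^{(1 - H(1/4))d}$, and since $1 - H(1/4) > 1/32$ we conclude $\log_2 N \ge d/32$, which after taking the supremum over $\calP$ is exactly $\mathrm{fat}_{4\alpha}(\calF^*) \le 32 \sup_{\calP}\log_2 \calN(\calF^*, \alpha, L_1(\mathrm{d}\calP))$. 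I expect the only delicate points to be this isodiametric inequality (the whole constant rides on replacing the trivial ball bound by the sharp diameter bound, since a naive error-correcting-code/packing construction would need relative distance above $1/2$, which the Plotkin bound forbids) and minor rounding of $d/4$; the generous factor $32$ versus the true $1/(1-H(1/4)) \approx 5.3$ leaves ample room to absorb these.
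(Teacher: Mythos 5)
The paper does not prove this statement at all---it is imported verbatim as Theorem~2 of the cited reference \citep{BartlettKP06}---so there is no in-paper argument to compare against; I can only assess your proof on its own terms, and it is correct. The reduction to a combinatorial question on the hypercube (uniform distribution on a $4\alpha$-shattered set, the separation $\|f_\sigma-f_{\sigma'}\|_{L_1(\mathrm{d}\calP)}\ge \tfrac{4\alpha}{d}d_H(\sigma,\sigma')$, pigeonhole into a cover element, and the resulting diameter bound $d_H\le d/2$) is exactly the standard skeleton behind the Bartlett--Kulkarni--Posner bound, and your margin bookkeeping ($4\alpha$-shattering gives margin $2\alpha$, hence gap $4\alpha$ on disagreeing coordinates, against the $2\alpha$ diameter of a radius-$\alpha$ cover set) is right. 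You also correctly diagnose the one genuinely delicate point: a packing/Gilbert--Varshamov argument cannot close the gap because it would need relative Hamming distance strictly above $1/2$, which Plotkin rules out, so some isodiametric input is unavoidable. Kleitman's diameter theorem is a legitimate and correctly applied way to supply it, and the arithmetic $1-H(1/4)\approx 0.189>1/32$ leaves enough slack to absorb the parity/rounding cases of Kleitman's bound (and for $d\le 32$ the inequality already follows trivially from $\calN\ge 2$, so small $d$ is not an issue). The only caveat worth recording is that you are outsourcing the combinatorial core to a nontrivial external theorem, whereas the original source derives an analogous (non-sharp) hypercube lemma by a more elementary counting argument tailored to the constant $32$; either route is fine, but if this proof were to be included you should state Kleitman's theorem precisely (diameter $\le 2r$ implies size at most the Hamming ball of radius $r$) and cite it, since the entire constant rests on it.
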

Note that the fat-shattering dimension of class of functions with bounded variation satisfies $\mathrm{fat}_{\alpha}(\calF^*) = \O{\frac{\mathrm{TV}(f)}{\alpha}}$, leading to the minimax expected regret in the following theorem.
\begin{corollary}[Upper bound, bounded variation]
\corlab{cor:trans:bv}
Let $\calF^*$ be the set of all functions $f: [0,1] \to [0,1]$ with total variation of at most $V$. Let $\ell$ be a $L_{\mathbf{los}}$-Lipschitz loss function.
The minimax expected regret of the transductive online regression for $\calF^*$ under loss $\ell$ satisfies $R^{\mathbf{tr}}(T, \calF^*) = \tO{L_{\mathbf{los}} \cdot \sqrt{VT}}$.
That is, the class of functions with bounded variation on $[0,1]$ is transudctive online learnable.
\end{corollary}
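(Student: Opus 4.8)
The plan is to feed a sharp bound on the fat-shattering dimension of $\calF^*$ into the entropy upper bound of \thmref{thm:trans:entropy}, and then evaluate the resulting entropy integral. First I would pin down the fat-shattering dimension by chaining the two covering-number results. \thmref{thm:metric:cover} gives $\sup_{\calP}\log_2 \calN(\calF^*,\alpha,L_1(\mathrm{d}\calP)) = 12V/\alpha$, and plugging this into \thmref{thm:fat:metric:cover} yields $\mathrm{fat}_{4\alpha}(\calF^*) \le 32 \cdot 12V/\alpha$. Reparametrizing $\beta = 4\alpha$ gives $\mathrm{fat}_{\beta}(\calF^*) = \O{V/\beta}$, and hence also $\mathrm{fat}_{\beta/4}(\calF^*) = \O{V/\beta}$.

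Next I would substitute this into \thmref{thm:trans:entropy}, which bounds the transductive regret by
\[2L_{\mathbf{los}}T \cdot \inf_{\alpha \ge 0}\left(4\alpha + \frac{12}{\sqrt{T}}\int_{\alpha}^1 \sqrt{\mathrm{fat}_{\beta/4}(\calF^*)\cdot c\log^2\tfrac{T}{\beta}}\,\mathrm{d}\beta\right).\]
With $\mathrm{fat}_{\beta/4}(\calF^*) = \O{V/\beta}$, the integrand is $\O{\sqrt{V/\beta}\cdot\log(T/\beta)}$ up to the absolute constant $c$. The crucial observation is that $\beta^{-1/2}$ is integrable at the origin even against the logarithmic factor, so the infimum is attained in the limit $\alpha \to 0$ and the $4\alpha$ term can be dropped. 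Concretely, the substitution $u = \sqrt{\beta}$ shows $\int_0^1 \beta^{-1/2}\log(T/\beta)\,\mathrm{d}\beta = \O{\log T}$, whence the whole integral is $\O{\sqrt{V}\log T} = \tO{\sqrt{V}}$.

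Finally, putting the pieces together, the bracketed quantity is $\tfrac{12}{\sqrt{T}}\cdot\tO{\sqrt{V}} = \tO{\sqrt{V}/\sqrt{T}}$, and multiplying by the prefactor $2L_{\mathbf{los}}T$ gives $\tO{L_{\mathbf{los}}\sqrt{VT}}$, as claimed; sublinearity of this rate in $T$ then establishes transductive online learnability. I expect the only nontrivial step to be the integral estimate, and in particular the realization that the entropy integral converges at $\beta = 0$ so that no strictly positive lower cutoff $\alpha$ is required. This is exactly what distinguishes the bounded-variation case from the higher-dimensional Lipschitz settings of \corref{cor:trans:lip}, where $\mathrm{fat}_\beta$ grows like $\beta^{-n}$ and a carefully balanced $\alpha > 0$ is essential; here the linear-in-$1/\beta$ growth keeps the integrand square-root-integrable and yields the clean $\sqrt{VT}$ rate.
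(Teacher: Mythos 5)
Your proposal is correct and follows essentially the same route as the paper: chain \thmref{thm:metric:cover} and \thmref{thm:fat:metric:cover} to get $\mathrm{fat}_{\beta}(\calF^*) = \O{V/\beta}$, then evaluate the entropy integral in \thmref{thm:trans:entropy}. The only (harmless) difference is that the paper sets the cutoff $\alpha = 1/\sqrt{T}$ rather than sending $\alpha \to 0$; your observation that the integral converges at $\beta = 0$ is valid and gives the same $\tO{L_{\mathbf{los}}\sqrt{VT}}$ bound.
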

\begin{proof}
Combining \thmref{thm:metric:cover} and \thmref{thm:fat:metric:cover}, we have that the fat-shattering dimension of $\calF^*$ satisfies
\[\mathrm{fat}_{\alpha}(\calF^*) = \O{\frac{V}{\alpha}}.\]
By \thmref{thm:trans:entropy}, the minimax expected regret is upper bounded by
\[2L_{\mathbf{los}}T \cdot \inf_{\alpha \ge 0} \left(4 \alpha + \frac{12}{\sqrt{T}} \int_{\alpha}^1 \sqrt{  \mathrm{fat}_{\beta/4}(\calF) \cdot c \log^2 \frac{T}{\beta}} \mathrm{d} \beta \right).\]
Then, taking $\alpha = \frac{1}{\sqrt{T}}$, the minimax expected regret is upper bounded by
\[
\O{L_{\mathbf{los}}} \cdot \left( \sqrt{T} + \sqrt{VT } \cdot \int_{1/\sqrt{T}}^1 \frac{1}{\beta^{1/2}} \cdot \sqrt{c}\log \frac{T}{\beta} \mathrm{d} \beta\right) = \tO{L_{\mathbf{los}} \cdot \sqrt{VT}}.
\]
\end{proof}

\subsection{Lower Bounds on the Expected Regret}

In this section, we complement our results by a lower bound under linear regression loss $\ell(y_t, \widehat{y_t}) = |y_t - \widehat{y_t}|$, showing that the fat-shattering dimension \emph{fully} characterizes the learnability in the transductive online setting.
Our proof is inspired by the hard instance for transductive online binary classification \citep{HannekeMS23}, where they construct the sequence of example by $k$ copies of sequence $x_1^*, \ldots, x_d^*$ that is VC-shattered by the function class and then apply the anti-concentration property of Rademacher variables (Khintchine's inequality).
In our setting, we set $d$ as the fat-shattering dimension.
In addition, we apply the transformation $|a-b| = 1-ab$ for $a,b \in [-1,1]$ to effectively apply the definition of $\alpha$-shattering.
The theorem statement is as follows.
\begin{theorem}[Lower bound]
\thmlab{thm:trans:lb}
Let the loss function be $\ell(y_t, \widehat{y_t}) = |y_t - \widehat{y_t}|$.
Then, for any function class $\calF \subset [0,1]^{\calX}$, the minimax regret to learn $\calF$ in the transductive online regression setting satisfies
\[\calR(T,\calF) \ge \sup_{\alpha} \frac{\alpha}{4} \cdot \sqrt{T\cdot \min\{\mathrm{fat}_{\alpha}(\calF) ,T\}}.\]
\end{theorem}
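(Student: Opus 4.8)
The plan is to prove the lower bound through an explicit randomized hard instance, adapting the transductive classification construction of \citep{HannekeMS23} to the real-valued, margin-based setting. Fix a scale $\alpha$ and set $d = \min\{\mathrm{fat}_{\alpha}(\calF), T\}$. By \defref{def:fat} there is a sequence $x_1^*, \ldots, x_d^*$ that is $\alpha$-shattered by $\calF$ with witness $y_1^*, \ldots, y_d^*$; that is, for every $\sigma \in \{-1,1\}^d$ there is $f_\sigma \in \calF$ with $\sigma_i(f_\sigma(x_i^*) - y_i^*) \ge \alpha/2$ for all $i\in[d]$. I would build the transductive instance by concatenating $k = \lfloor T/d \rfloor$ identical copies of $x_1^*, \ldots, x_d^*$ (padding arbitrarily to reach length $T$) and revealing this whole sequence to the learner at the start. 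The labels are randomized: for the $i$-th point in the $j$-th copy I draw an independent Rademacher variable $\epsilon_{i,j}$ and set its label to $y_i^* + \epsilon_{i,j}\,\alpha/2$, which lies in $[0,1]$ precisely because shattering forces $y_i^* \pm \alpha/2 \in [0,1]$. Since each label is a fresh fair coin over two points at distance $\alpha$ and the learner must commit to $\widehat{y}_t$ before seeing it, a midpoint/triangle-inequality argument gives $\mathbb{E}_{\epsilon}[\,|\widehat{y}_t - y_t|\,] \ge \alpha/2$ in every round, so the expected cumulative loss of any (possibly adaptive, randomized) learner is at least $kd\,\alpha/2$.

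Because the supremum over label sequences dominates the average over this distribution, it remains to upper bound $\mathbb{E}_{\epsilon}[\inf_{f\in\calF}\sum_t |f(x_t)-y_t|]$ by the loss of a single well-chosen competitor. For each coordinate $i$ I would take $\sigma_i^\star = \mathrm{sign}\!\left(\sum_j \epsilon_{i,j}\right)$, the majority label direction, and use the function $f_{\sigma^\star}\in\calF$ supplied by shattering. Rescaling the two label values to $\pm 1$ about $y_i^*$ and applying the identity $|a-b| = 1-ab$ for $a,b\in\{-1,1\}$, the contribution of $f_{\sigma^\star}$ at round $(i,j)$ becomes $\tfrac{\alpha}{2}\left(1 - \sigma_i^\star \epsilon_{i,j}\right)$ plus a nonnegative overshoot coming from $f_{\sigma^\star}(x_i^*)$ exceeding the shattering boundary $y_i^* + \sigma_i^\star\,\alpha/2$. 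Summing and discarding overshoot, the comparator loss is at most $\tfrac{\alpha}{2}kd - \tfrac{\alpha}{2}\sum_{i=1}^d \left|\sum_{j=1}^k \epsilon_{i,j}\right|$, so subtracting from the learner bound leaves a regret of at least $\tfrac{\alpha}{2}\sum_i \mathbb{E}\!\left|\sum_j \epsilon_{i,j}\right|$ minus the expected overshoot.

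Anti-concentration then supplies the gain: Khintchine's inequality gives $\mathbb{E}\!\left|\sum_{j=1}^k \epsilon_{i,j}\right| \ge \sqrt{k}/\sqrt{2}$, and summing over the $d$ coordinates together with $k \ge T/(2d)$ yields a bound of order $\alpha\,d\sqrt{k} \ge \tfrac{\alpha}{4}\sqrt{T d} = \tfrac{\alpha}{4}\sqrt{T\min\{\mathrm{fat}_{\alpha}(\calF),T\}}$; taking the supremum over $\alpha$ finishes the argument, and the constant matches the stated $1/4$. The main obstacle is the overshoot term: $\alpha$-shattering only guarantees that the witness lies on the correct side of $y_i^*$ by at least $\alpha/2$, not that it sits exactly on the boundary, and an unbounded overshoot of order $1$ contributes $\Theta(k)$ per coordinate, which would swamp the $\Theta(\sqrt{k})$ anti-concentration gain. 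The crux is therefore the comparator upper bound, where the symmetric placement of labels at $y_i^* \pm \alpha/2$ and the $|a-b|=1-ab$ reduction are essential: they make the comparator's loss depend on the competitor only through the sign agreements $\sigma_i^\star\epsilon_{i,j}$, so that the delicate point reduces to verifying that the witnessing values can be taken close enough to the boundary that the overshoot is negligible at the relevant scale. It is worth emphasizing that, unlike typical regret upper bounds, the driving tool here is the \emph{lower} (anti-concentration) direction of Khintchine's inequality rather than concentration.
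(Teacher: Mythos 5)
Your overall architecture---repeat a fat-shattered set $k=\lfloor T/d\rfloor$ times, randomize the labels with independent Rademacher signs, lower-bound the learner by $\alpha/2$ per round via a midpoint argument, compare against the shattering function indexed by the per-block majority votes, and finish with the anti-concentration direction of Khintchine---is the same as the paper's. The difference is where you place the random labels, and that difference opens a genuine gap. With labels at $y_i^*\pm\alpha/2$, the comparator's per-round loss is $|v_i|-\sigma_i^\star\epsilon_{i,j}\tfrac{\alpha}{2}=\bigl(|v_i|-\tfrac{\alpha}{2}\bigr)+\tfrac{\alpha}{2}\bigl(1-\sigma_i^\star\epsilon_{i,j}\bigr)$ where $v_i=f_{\sigma^\star}(x_i^*)-y_i^*$. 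The overshoot $|v_i|-\alpha/2$ is nonnegative, so discarding it gives a \emph{lower} bound on the comparator's loss, not the upper bound you need; kept, it adds $k\bigl(|v_i|-\tfrac{\alpha}{2}\bigr)$ per block, which is $\Theta(k)$ and swamps the $\Theta(\alpha\sqrt{k})$ Khintchine gain, exactly as you note. The repair you propose---that the witnessing values can be taken close to the boundary---is not available in general: for a binary-valued class $\calF\subseteq\{0,1\}^{\calX}$ of VC dimension $d$ one has $\mathrm{fat}_{\alpha}(\calF)=d$ for every $\alpha\le 1$ with witness $y_i^*=1/2$, yet every $f\in\calF$ satisfies $|f(x_i^*)-y_i^*|=1/2$, so at any scale $\alpha<1$ the overshoot is $(1-\alpha)/2$ per round for \emph{every} comparator in the class, and the regret of the learner that always predicts $y_i^*$ against your distribution is negative for large $k$. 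So your hard instance does not prove the bound, and the proof is incomplete at precisely the step you flag as the crux.

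The paper avoids this by placing the random labels at the endpoints of the range: after an affine rescaling to $[-1,1]$ the labels are uniform on $\{-1,1\}$, where $|a-y|=1-ay$ is affine in the prediction, so overshoot beyond the shattering margin can only \emph{decrease} the comparator's loss. Subtracting the witness $s_i$ (which is uncorrelated with the Rademacher labels, hence contributes zero in expectation) and applying the shattering inequality to the majority-vote sign pattern yields $\mathbb{E}\bigl[\max_{f}\sum_t f(x_t)y_t\bigr]\ge\tfrac{\alpha}{2}\sum_i\mathbb{E}\bigl|\sum_j y_i^j\bigr|$ with no error term, after which Khintchine gives the claimed $\tfrac{\alpha}{4}\sqrt{T\min\{\mathrm{fat}_{\alpha}(\calF),T\}}$. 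If you wish to retain labels at $y_i^*\pm\alpha/2$ you would need a comparator analysis that does not charge the raw loss of $f_{\sigma^\star}$; as written, the argument does not close.
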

\begin{proof}
Our proof is inspired by the hard instance for transductive online binary classification \citep{HannekeMS23}, where they construct the sequence of example by $k$ copies of sequence $x_1^*, \ldots, x_d^*$ that is VC-shattered by the function class and then apply the anti-concentration property of Rademacher variables.

We assume that the label space is $[-1,1]$ for simplicity of computation, which can be obtained by linear transformation.
First, we consider the case when $\mathrm{fat}_{\alpha}(\calF) = d <T$, and we assume $T = kd$, where $k$ is an integer.
Let $\{x_1, \ldots, x_d\}$ be a sequence $\alpha$-shattered by $\calF$.
We define the input sequence of examples to be
\[x_1^1, \ldots, x_1^k, x_2^1, \ldots, x_2^k, \ldots x_d^1, \ldots, x_d^k,\]
where $x_i^1= \cdots= x_i^k = x_i$ for each $i \in [d]$.
We define the sequence of labels by generating i.i.d. random Rademacher variables, i.e., $y_t \in \{-1,1\}$.
Fix an arbitrary transductive learning algorithm $\calA$, by the probabilistic method, it suffices to lower bound
\[\mathbb{E}_{\calA,y \sim \{-1,1\}^T}\left[\sum_{t=1}^T|\calA(x_t)-y_t| - \min_{f \in \calF}\sum_{t=1}^T |f(x_t) - y_t|\right].\]
First, note that we generate the random labels $y_t$ independently, and so 
\[\mathbb{E}_{\calA,y \sim \{-1,1\}^T}\left[\sum_{t=1}^T|\calA(x_t)-y_t|\right] = T.\]
Next, since we have $|a-y_t|=1-ay_t$ for any $a \in [-1,1]$ and $y_t \in [-1,1]$, we have
\[\mathbb{E}_{\calA,y \sim \{-1,1\}^T}\left[\min_{f \in \calF}\sum_{t=1}^T |f(x_t) - y_t|\right] = T - \mathbb{E}_{y \sim \{-1,1\}^T}\left[\max_{f \in \calF}\sum_{t=1}^T f(x_t)  y_t\right].\]
Therefore, we have
\[\mathbb{E}_{\calA,y \sim \{-1,1\}^T}\left[|\calA(x)-y_t| - \min_{f \in \calF}\sum_{t=1}^T |f(x_t) - y_t|\right] \ge \mathbb{E}_{y \sim \{-1,1\}^T}\left[\max_{f \in \calF}\sum_{t=1}^T f(x_t)  y_t\right].\]
Let $\{s_1, \ldots, s_d\}$ be the witness of $\alpha$-shattering for set $\{x_1, \ldots, x_d\}$.
Since $\mathbb{E}_{y \sim \{-1,1\}^T}\left[\sum_{t=1}^T  y_t s_{\lceil \frac{t}{k} \rceil}\right] = 0$, the above quantity is equal to
\[\mathbb{E}_{y \sim \{-1,1\}^T}\left[\max_{f \in \calF} \sum_{t=1}^T  y_t(f(x_t) - s_{\lceil \frac{t}{k} \rceil})\right] = \mathbb{E}_{y \sim \{-1,1\}^T}\left[\max_{f \in \calF}\sum_{i=1}^d \sum_{j=1}^k  y_i^j(f(x_i^j)-s_i)\right].\]
Let $\sigma_i := \text{sign}(\sum_{j=1}^k y_i^j)$, which is the majority vote of the signs $y_i^j$ in block $i$.
Then, the above quantity is equal to
\[ \mathbb{E}_{y \sim \{-1,1\}^T}\left[\max_{f \in \calF} \sum_{i=1}^d \left|\sum_{j=1}^k  y_i^j\right| \sigma_i (f(x_i)-s_i)\right].\]
Due to the definition of $\alpha$-shattering, there exists a function $\bar{f} \in \calF$ that satisfies $\sigma_i (\bar{f}(x_i)-s_i) \ge \alpha/2$ for each $i \in [d]$, then we have
\begin{align*}
\mathbb{E}_{y \sim \{-1,1\}^T}\left[\max_{f \in \calF}\sum_{i=1}^d \left|\sum_{j=1}^k  y_i^j\right| \sigma_i (f(x_i)-s_i)\right] 
& \ge \mathbb{E}_{y \sim \{-1,1\}^T}\left[\sum_{i=1}^d \left|\sum_{j=1}^k  y_i^j\right| \sigma_i (\bar{f}(x_i)-s_i)\right] \\
& \ge \frac{\alpha}{2} \cdot \mathbb{E}_{y \sim \{-1,1\}^T}\left[\sum_{i=1}^d \left|\sum_{j=1}^k  y_i^j\right| \right] \\
& = \frac{\alpha d}{2} \cdot \mathbb{E}_{y \sim \{-1,1\}^k}\left[\left|\sum_{j=1}^k  y_i^j\right| \right].
\end{align*}
Then, by Khintchine's inequality, we have
\[\frac{\alpha d}{2} \cdot \mathbb{E}_{y \sim \{-1,1\}^k}\left[\left|\sum_{j=1}^k  y_i^j\right| \right] \ge \frac{\alpha d}{2} \cdot \sqrt{\frac{k}{2}} = \frac{\alpha d}{2} \cdot \sqrt{\frac{T}{2d}} = \alpha \cdot \sqrt{\frac{T \cdot \mathrm{fat}_{\alpha}(\calF)}{8}}. \]
Recall that we assume $T=kd$, now, for a general $T$, we take $T' = kd > T/2$ and apply the same analysis as above.
Thus, we have
\[\calR(T,\calF) \ge \sup_{\alpha:\mathrm{fat}_{\alpha}(\calF) < T} \frac{\alpha}{4} \cdot \sqrt{T\cdot\mathrm{fat}_{\alpha}(\calF)}\]
Last, for $\mathrm{fat}_{\alpha}(\calF) \ge T$, we take the sequence of examples to be the set $\{x_1,\ldots, x_T\}$ $\alpha$-shattered by $\calF$ with witness $\{s_1,\ldots, s_T\}$. Then, by the definition of $\alpha$-shattering, we have
\[\mathbb{E}_{y \sim \{-1,1\}^T}\left[\sum_{t=1}^T  y_t(f(x_t) - s_{t})\right] \ge \frac{\alpha T}{2}.\]
Therefore, we have
\[R^{\mathbf{tr}}(T,\calF) \ge \sup_{\alpha} \left( \frac{\alpha}{4} \cdot \sqrt{T\cdot \min\{\mathrm{fat}_{\alpha}(\calF) ,T\}} \right).\]
\end{proof}

\section{Minimax Expected Regret Bounds for Online Regression with Predictions}
\seclab{sec:pred:main}

In this section, we consider the more general online regression with predictions setting. We construct learning algorithms given black-box access to a Predictor $\calP$ and a transductive online learner $\calB$.
We compute the minimax expected regret in terms of the quality of $\calP$ and $\calB$.

We measure the performance of the Predictor $\calP$ using two different metrics:
\begin{enumerate}
    \item Zero-one metric $M_\calP(x_{1:T})$ which measures the expected number of incorrect predictions $\widehat{x}_t \neq x_t$, i.e., $M_{\calP}(x_{1:T}) := \Ex{\sum_{t=2}^T {\bf 1}_{\calP(x_{1:t-1})_t \neq x_t}}$.
    \item $\eps$-ball metric $M_\calP(\eps, x_{1:T})$ which measures the expected number of times that the prediction is outside the $\eps$-ball: $\dist(\widehat{x}_t,x_t) \ge \eps$, where $\dist$ is the metric on $\calX$, i.e., $M_{\calP}(\eps, x_{1:T}) :=
\Ex{\sum_{t=2}^T {\bf 1}_{\dist(\calP(x_{1:t-1})_t, x_t) \ge \eps}}$.
\end{enumerate}


Our main result in this section is \thmref{thm:ol:pred}, which bounds the minimax expected regret in terms of the mistake-bound of the Predictor and the regret of the transductive online learner. 

\begin{theorem}[Online regression with predictions]
\thmlab{thm:ol:pred}
For every function class $\calF \subset \calY^{\calX}$, Predictor $\calP$, transductive online learner $\calB$ and $L_{\mathbf{los}}$-Lipschitz loss function $\ell$, there exists an online learner $\calA$ such that for every data stream $(x_1,y_1),\ldots,(x_T,y_T)$ given by the adversary, the minimax expected regret of $\calA$ is at most
\begin{align*}\min \Bigg\{ & \underbrace{R^{\mathbf{ol}}(T,\calF)}_{(a)},\underbrace{2(M_{\calP}(x_{1:T})+1)R_{\calB}\left(\frac{T}{M_{\calP}(x_{1:T})+1}+1,\calF\right)}_{(b)} \Bigg\} + 2\sqrt{T \log T}
\end{align*}
In addition, if the functions in the class are $L_{\mathbf{hyp}}$-Lipschitz, the minimax expected regret is also upper bounded by
\begin{align*}
\underbrace{2(M_{\calP}(\eps, x_{1:T})+1)R_{\calB}\left(\frac{T}{M_{\calP}(\eps, x_{1:T})+1}+1,\calF\right) + \eps L_{\mathbf{los}} L_{\mathbf{hyp}} \cdot T }_{(c)} + 2\sqrt{T \log T}.
\end{align*}
\end{theorem}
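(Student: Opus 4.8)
The plan is to obtain the learner $\calA$ by running an experts meta-algorithm over two kinds of sub-learners: a single worst-case online learner $\calA_{\mathbf{wc}}$ attaining regret $R^{\mathbf{ol}}(T,\calF)$ (this yields term $(a)$ and guarantees robustness), and a family of prediction-driven learners built from $\calB$ and $\calP$ (these yield terms $(b)$ and $(c)$). Feeding all of these as experts into multiplicative weights (\thmref{thm:mwa}) with losses rescaled to $[0,1]$ produces the minimum of their individual regrets plus an additive combining cost; since we will use only $O(\log T)$ prediction-driven experts together with $\calA_{\mathbf{wc}}$, this cost is $O(\sqrt{T\log T})$, matching the trailing $2\sqrt{T\log T}$. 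It therefore remains to construct a prediction-driven learner whose expected regret is $(b)$, and a Lipschitz variant whose expected regret is $(c)$.

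For a fixed guess $\widehat{M}$ of the mistake bound, I would run a phased learner $\calA^{(\widehat M)}_{\mathbf{pred}}$ as follows. Set the cap $\ell=\ceil{T/(\widehat M+1)}$. At the start of each phase, re-query $\calP$ for the predicted future and initialize a fresh copy of $\calB$ on it; then play $\calB$ round by round until either (i) the revealed $x_t$ disagrees with the prediction $\calB$ is relying on, or (ii) the phase has run for $\ell$ rounds, at which point the phase is forced to end. Forced endings number at most $\widehat M+1$, and by monotonicity of $R_\calB(\cdot,\calF)$ in the horizon every phase contributes regret at most $R_\calB(\ell+1,\calF)$, so no concavity of $R_\calB$ is needed. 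Because the revealed examples inside a phase agree exactly with the sequence fed to $\calB$, that copy of $\calB$ genuinely runs on the true subsequence, and using $\inf_{f}\sum_{t=1}^T \ell(f(x_t),y_t)\ge \sum_{\text{phases}}\inf_f\sum_{t\in\text{phase}}\ell(f(x_t),y_t)$ the total regret against the global comparator is bounded by (number of phases)$\cdot R_\calB(\ell+1,\calF)$.

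The number of phases is at most $(\widehat M+1)$ forced endings plus the number $D$ of disagreement-triggered resets; the heart of the argument is to show $\Ex{D}\le M_\calP(x_{1:T})$. Granting this and taking $\widehat M$ to be the grid point closest to $M_\calP(x_{1:T})$ (the grid $\{0,1,2,4,\dots\}$ keeps $\widehat M+1$ within a factor $2$ of $M_\calP(x_{1:T})+1$), the expected number of phases is at most $2(M_\calP(x_{1:T})+1)$ and $\ell+1\le \tfrac{T}{M_\calP(x_{1:T})+1}+1$, giving exactly term $(b)$; the factor $2$ is precisely (forced endings)$+$(disagreement resets). For the $\eps$-ball bound $(c)$ the construction is identical except a reset is triggered only when $x_t$ falls outside the $\eps$-ball of the prediction that $\calB$ relies on, so $\Ex{D}\le M_\calP(\eps,x_{1:T})$. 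Now inside a phase the prediction $\widehat x_t$ and the truth $x_t$ satisfy $\dist(\widehat x_t,x_t)<\eps$; since every $f\in\calF$ is $L_{\mathbf{hyp}}$-Lipschitz and $\ell$ is $L_{\mathbf{los}}$-Lipschitz, replacing $\widehat x_t$ by $x_t$ changes both $\calB$'s per-round loss and the comparator's per-round loss by at most $L_{\mathbf{los}}L_{\mathbf{hyp}}\eps$, and summing over all $T$ rounds produces the additive $\eps L_{\mathbf{los}}L_{\mathbf{hyp}}\cdot T$.

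I expect the reset-counting step to be the main obstacle: the resets are triggered by disagreements between the revealed example and the prediction the current phase committed to at its start, whereas the metric $M_\calP(x_{1:T})$ scores the one-step-ahead predictions $\calP(x_{1:t-1})_t$. Matching these requires care in \emph{when} each phase re-queries $\calP$ and which prediction it commits to, so that each disagreement can be charged to a distinct one-step-ahead mistake; the $\eps$-ball case needs the analogous charging to $M_\calP(\eps,x_{1:T})$. A secondary difficulty is that $D$ and the number of phases are random, so the bound $\Ex{D}\le M_\calP(x_{1:T})$ and the substitution of $\widehat M$ for $M_\calP(x_{1:T})$ must be carried out under the expectation rather than pointwise. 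The remaining ingredients — the monotonicity bound per phase, the global-comparator decomposition, the Lipschitz transfer, and the meta-combination cost — are routine by comparison.
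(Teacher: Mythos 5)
Your proposal is correct in substance and shares the paper's high-level architecture---restart a fresh transductive learner whenever the Predictor errs, control the number and length of the resulting phases, wrap everything in multiplicative weights for robustness, and use a Lipschitz transfer for the $\eps$-ball bound---but it takes a genuinely different route at the key aggregation step. The paper (\lemref{lem:basic}--\lemref{lem:bound:mwa:eps}) equi-partitions $[T]$ into $c+1$ blocks for \emph{every} $c\in[T-1]$, runs the basic restart learner inside each block, and aggregates the per-block regrets $(m_j+1)R^{\mathbf{tr}}_{\calB}(\tilde t_{j+1}-\tilde t_j,\calF)$ via Jensen's inequality applied to a concave sublinear majorant $\bar R^{\mathbf{tr}}_{\calB}$ of the transductive regret (this is why the paper needs the concave-majorant lemma from \citep{Ceccherini-SilbersteinSS17}). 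You instead cap every phase at $\ell=\lceil T/(\widehat M+1)\rceil$ rounds and invoke only monotonicity of $R_{\calB}$ in the horizon---which does hold, via the same hypothetical-stream padding the paper uses inside \lemref{lem:basic}---and you replace the $T-1$ experts by an $O(\log T)$-sized geometric grid over $\widehat M$. This buys a cleaner argument (no concavity needed, fewer experts, hence a smaller combining cost) at the price of a slightly worse constant: your phase count is $(\widehat M+1)+D$, and since the grid only guarantees $\widehat M+1\le 2(M_{\calP}(x_{1:T})+1)$, you get roughly $3(M_{\calP}(x_{1:T})+1)$ phases rather than the factor $2$ in the theorem statement; this is cosmetic but worth noting since the constant is stated explicitly. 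On the step you flag as the main obstacle---charging each disagreement-triggered reset to a distinct one-step-ahead mistake of $\calP$---the paper resolves it exactly as you anticipate, by the standing consistency and laziness assumptions (\defref{def:cons}, \defref{def:lazy}, \defref{def:lazy:eps}): under laziness the prediction committed to at the start of a phase agrees with the one-step-ahead predictions $\calP(x_{1:t-1})_t$ at every round of that phase, so a reset at time $t$ occurs iff $\calP(x_{1:t-1})_t\neq x_t$ (resp.\ lies outside the $\eps$-ball), giving $D\le\sum_{t}\mathbf{1}_{\calP(x_{1:t-1})_t\neq x_t}$ pointwise and hence $\Ex{D}\le M_{\calP}(x_{1:T})$ in expectation; you should state these assumptions explicitly since the charging fails for a non-lazy Predictor. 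Finally, in the $\eps$-ball case the Lipschitz transfer need only be applied to the comparator term (as in \lemref{lem:basic:eps}), not to both the learner's and the comparator's losses, which saves a factor of $2$ on the $\eps L_{\mathbf{los}}L_{\mathbf{hyp}}T$ term.
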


We highlight the implications of each error bound.
Firstly, the expected error of our algorithm is at most the worst-case error bound $(a)$ in the online setting.
Secondly, the bound $(b)$ interpolates between the worst-case minimax expected regret and the tranductive online minimax expected regret as a function of $M_\calP(x_{1:T})$, and when the Predictor is exact, i.e., $M_\calP(x_{1:T})=0$, we get the same error bound as in the transductive online setting up to constants.
Lastly, the bound $(c)$ relaxes the bounds of the Predictor to the more general $\eps$-ball metric. 
Given a Lipschitz function class, our online learner has an expected regret sublinear in $T$ if the Predictor has sufficiently small error scales in terms of $\eps$ and $T$.
In \secref{sec:rate}, we explicitly compute the rates in $(b)$ and $(c)$, giving the sufficient conditions on the Predictors to achieve online learnability.
Furthermore, we identify a class of functions with bounded variations that are not online learnable in the worst case but online learnable given desirable Predictors, and we highlight that existing Predictors suffice for the sequence of examples $x_{1:T}$ defined by a linear dynamical system.

\paragraph{Overview of the algorithms.}
We give an overview of our online learners.
Under the zero-one metric $M_{\calP}(x_{1:T})$, our online learner mainly follows from the construction in \citep{RamanT24} for online classification.
We suppose that the prediction is incorrect at times $t_1,\ldots t_c \in [T]$ and correct at all other times.
In the first algorithm, whenever $\calP$ makes a mistake, the learner queries its new sequence of predictions and starts a transductive online learner $\calB$ to predict $\widehat{y}_t$ until the next time $\calP$ makes a mistake.
This gives an error rate of roughly $M_{\calP}(x_{1:T}) \cdot R^{\mathbf{tr}}_{\calB}(T,\calF)$.

However, a crucial drawback of this algorithm is that, when $M_{\calP}(x_{1:T})$ is large (e.g., $\Omega(\sqrt{T})$), the upper bound is suboptimal.
To overcome this, in the second algorithm, we partition the time duration $[T]$ to $c$ equi-distant intervals and run a fresh copy of the first algorithm for each interval.
Then we run MWA using experts with all $c \in [T-1]$ as inputs.
We show that the minimax expected regret for each expert with input $c$ is roughly $(M_{\calP}(x_{1:T})+c) \cdot R_{\calB}^{\mathbf{tr}}\left(\frac{T}{c},\calF\right)$, which is $2M_{\calP}(x_{1:T}) \cdot \bar{R}^{\mathbf{tr}}_{\calB}\left(\frac{T}{M_{\calP}(x_{1:T})},\calF\right)$ for the expert with $c=M_{\calP}(x_{1:T})$, thus MWA gives an minimax expected regret having $M_{\calP}(x_{1:T})$ as an interpolation factor and only loses an additive $\sqrt{T\log^2T}$ factor, achieving better performance when $M_{\calP}(x_{1:T})$ is large.
Here, we assume that $R^{\mathbf{tr}}_{\calB}(T,\calF)$ is a concave function, which can be extended to any sublinear functions by standard results.

Next, we extend the above algorithm for $\calP$ with the $\eps$-ball metric, which is specific for our regression setting.
Suppose that the prediction is outside the $\eps$-ball at times $t_1,\ldots t_c \in [T]$, i.e., $\dist(\calP(x_{1:t-1})_t, x_t) \ge \eps$ for $t \in \{t_1,\ldots t_c \}$, 
then we run a separate transductive online learner $\calB$ for each duration $t_j, t_{j}+1 \ldots, t_{j+1}$ for $j \in [c]$, i.e., we start a new instance whenever the prediction is outside the $\eps$-ball.
Since the prediction is always inside the $\eps$-ball between $t_j$ and $t_{j+1}$, then if the function class is $L$-Lipschitz, our error bound has an additional $\eps LT$ factor.
That is, the minimax expected regret is upper bounded by $M_{\calP}(\eps, x_{1:T})R_{\calB}^{\mathbf{tr}}\left(T,\calF\right) + \eps L_{\mathbf{los}} L_{\mathbf{hyp}} \cdot T$.
We note that this algorithm can also be improved by the equi-distant partition of the time interval and MWA, as discussed earlier
which achieves better performance when $M_{\calP}(\eps, x_{1:T})$ is large.

The above online learners take $\eps$ as an input, so it is desirable to implement with an $\eps$ that gives the optimal minimax expected regret.
Then, if we know the explicit formula of the measure of predictability $M_{\calP}(\eps, x_{1:T})$, we can first compute the optimal choice of $\eps$ and then implement the algorithms.
In contrast, when $M_{\calP}(\eps, x_{1:T})$ have a complicated structure that makes it impossible to identify this $\eps$, we can ``guess'' the optimal $\eps$ geometrically in the range of $(0,\poly(T))$.
That is, we  run MWA using experts with all $\eps \in \{2^i, 2^i <\poly(T), i \in \mathbb{Z} \}$ as inputs.
This achieves the same asymptotic bound as the optimal $\eps$ when $M_{\calP}(\eps, x_{1:T})$ has linear or polynomial dependency on $\eps$, ensuring the effectiveness of our algorithm in real-world applications.
Next, we present the explicit minimax expected regret under both metrics, representing the minimax expected regret as a function of the mistake-bounds of the Predictor.

Next, we formally present the online learner under the zero-one metric in \secref{sec:zero:one}; and the online learner under the $\eps$-ball metric in \secref{sec:eps}.

\subsection{Online Learner under Zero-One Metric}
\seclab{sec:zero:one}

In this section, we quantify the performance of a Predictor $\calP$ as the expected number of mistakes that ${\calP}$ makes, which is
\[M_{\calP}(x_{1:T}) :=
\Ex{\sum_{t=2}^T {\bf 1}_{\calP(x_{1:t-1})_t \neq x_t}},\]
where we use $\calP(x_{1:t-1})_{1:T}$ to denote its predictions $\widehat{x_{1:T}}$ given the previous examples $x_{1:t-1}$, and the expectation is taken only over the randomness of $\calP$.
We make assumptions about the consistency and the laziness of the Predictor $\calP$ (see Section 2.2 in \citep{RamanT24}), which are defined below.
\begin{definition}[Consistency]
\deflab{def:cons}
For every sequence $x_{1:T} \in \calX^T$ and for each time $t \in [T]$, $\calP$ is consistent if its prediction $\widehat{x_{1:T}}^t$ satisfies $\calP(x_{1:t})_{1:t} = x_{1:t}$.
\end{definition}
The assumption about consistency is natural, since we can hard code the prediction of $x_{1:t}$ to be the input. 
Next, we introduce the definition of laziness.
\begin{definition}[Laziness]
\deflab{def:lazy}
$\calP$ is consistent if its prediction satisfies the following property.
For every sequence $x_{1:T} \in \calX^T$ and for each time $t \in [T]$, if $\calP(x_{1:t-1})_t = x_t$, then $\calP(x_{1:t}) = \calP(x_{1:t-1})$. That is, $\calP$ does not change its prediction if it is correct.
\end{definition}
The assumption about laziness is also mild, since non-lazy online Predictors can be converted into lazy ones \citep{Littlestone89}.
Recall that we suppose that $\calP$ makes mistakes at times $t_1,\ldots t_c \in [T]$, due to the assumption of laziness and consistency, the predictions of $\calP$ between $t_j$ and $t_{j+1}$ are correct and unchanged for all $j \in [c]$.
Thus, whenever we detect a mistake, we notify $\calP$ and retrieve its new sequence of predictions, and we initialize a new transductive online learner $\calB$ with the new predictions
Our algorithm is presented in \algref{alg:basic}.

\begin{algorithm}[!htb]
\caption{Online Learner with Prediction}
\alglab{alg:basic}
\begin{algorithmic}[1]
\State{{\bf Input:} Function class $\calF$, transductive online learner $\calB$, Predictor $\calP$, time interval $[T]$, sequence of examples and labels $(x,y)_{1:T}$ revealed by the adversary sequentially}
\State{{\bf Output:} to $y_{1:T}$}
\State{$i \gets 0$}
\For{$t \in [T]$}
\State{$\calP$ makes prediction $\calP(x_{1:t})$ such that $\calP(x_{1:t})_{1:t} = x_{1:t}$}
\If{$t=1$ or $\calP(x_{1:t})_{t+1} \neq x_{t+1}$ (i.e. $\calP$ makes a mistake)}
\State{$i \gets i + 1$}
\State{Run a new transductive online learner $\calB^{i}$ initialized with the sequence $\calP(x_{1:t+1})_{t+1:T}$}
\EndIf
\State{{\bf Return:} Prediction $\widehat{y}_t$ by the current transductive online learner}
\State{Reveal the actual label $y_t$ and input into the current transductive online learner}
\EndFor
\end{algorithmic}
\end{algorithm}

The next statement upper bounds the expected error of \algref{alg:basic}.
\begin{lemma}[Analogous to Lemma 20 in \citep{RamanT24}]
\lemlab{lem:basic}
Given a Predictor $\calP$ and an transductive online learner $\calB$, for any function class $\calF \subset \calY^{\calX}$, loss function $\ell$, and data stream $(x_1,y_1),\ldots,(x_T,y_T)$, the minimax expected regret of \algref{alg:basic} is bounded by $(M_{\calP}(x_{1:T})+1)R^{\mathbf{tr}}_{\calB}(T,\calF)$.
\end{lemma}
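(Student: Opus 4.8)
The plan is to condition on the internal randomness of the Predictor $\calP$, partition the horizon into the segments delimited by $\calP$'s mistakes, bound the regret of the fresh transductive learner running on each segment, and then recombine and take expectations. First I would fix a realization of $\calP$'s randomness so that the mistake times $t_1 < \cdots < t_c$ (the times where ${\bf 1}_{\calP(x_{1:t-1})_t \neq x_t} = 1$) become deterministic. Together with the initial restart at $t=1$, these define $c+1$ consecutive segments $I_1, \ldots, I_{c+1}$ partitioning $[T]$, where $\calB^i$ denotes the transductive learner active on $I_i$.

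The crucial structural step is to use \defref{def:cons} and \defref{def:lazy}: consistency guarantees $\calP(x_{1:s})_{1:s} = x_{1:s}$, and laziness guarantees that $\calP$ never alters its prediction while it is correct. Hence within a segment $I_i$, the prediction vector with which $\calB^i$ is initialized at the start of that segment coincides, coordinate by coordinate, with the true examples $x_\tau$ for every $\tau \in I_i$; the first coordinate to disagree is exactly the one that triggers the next restart. Thus each $\calB^i$ is a correctly-initialized transductive online learner on the true examples of its own segment.

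Next I would apply the transductive regret guarantee segment by segment: since $|I_i| \le T$ and the examples $\calB^i$ sees match those it was given, its expected regret on $I_i$ against the best function in $\calF$ restricted to $I_i$ is at most $R^{\mathbf{tr}}_{\calB}(T, \calF)$. Summing over segments and passing the per-segment infima through the comparator — using $\sum_i \inf_{f} \sum_{t \in I_i} \ell(f(x_t), y_t) \le \inf_{f} \sum_{t=1}^T \ell(f(x_t), y_t)$, which holds because a single global comparator is at least as costly as choosing the best comparator on each segment separately — yields, conditioned on $\calP$'s randomness, $\Ex{\sum_{t=1}^T \ell(\widehat y_t, y_t)} - \inf_{f \in \calF}\sum_{t=1}^T \ell(f(x_t), y_t) \le (c+1)\, R^{\mathbf{tr}}_{\calB}(T, \calF)$. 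Finally I would take expectation over $\calP$: since the comparator $\inf_{f} \sum_t \ell(f(x_t), y_t)$ is a fixed number for the fixed data stream and $R^{\mathbf{tr}}_{\calB}(T, \calF)$ does not depend on the realization, this gives $(\Ex{c} + 1)\, R^{\mathbf{tr}}_{\calB}(T, \calF)$, and by the definition of the zero-one metric $\Ex{c} = M_{\calP}(x_{1:T})$, which is the claim.

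The main obstacle I anticipate is the justification in the segment-wise regret step. Because a transductive learner is entitled to consult its entire initialized future sequence when predicting, $\calB^i$'s within-segment predictions could in principle depend on the incorrect coordinates of its initialization lying beyond the current segment, so the full-horizon guarantee does not obviously restrict to the played prefix. To handle this cleanly I would invoke an anytime (prefix) form of the transductive guarantee — that $\calB$'s regret over the first $|I_i|$ rounds it actually plays is controlled irrespective of how the run is later truncated — which holds for the MWA-based learner of \algref{alg:trans} (its predictions use only past expert losses) and is the natural reading of $R^{\mathbf{tr}}_{\calB}$. The remaining bookkeeping, namely matching the algorithm's restart count to the mistake count of $\calP$ and verifying the off-by-one in how a mistake at time $t$ is detected against $x_{t+1}$, is routine.
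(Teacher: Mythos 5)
Your decomposition into mistake-delimited segments, the use of consistency and laziness to argue that each $\calB^i$ sees the true examples within its own segment, the comparator inequality $\sum_i \inf_f \sum_{t\in I_i}\ell(f(x_t),y_t) \le \inf_f \sum_{t=1}^T \ell(f(x_t),y_t)$, and the final expectation over $\calP$'s randomness all match the paper. The gap is exactly at the step you flag as the ``main obstacle'': you resolve it by positing an anytime/prefix form of the transductive guarantee, i.e.\ by adding a hypothesis on $\calB$ that the lemma does not assume. The statement is for an \emph{arbitrary} transductive online learner $\calB$ whose only guarantee is the full-horizon minimax regret $R^{\mathbf{tr}}_{\calB}(T,\calF)$; a learner is in principle free to exploit its (possibly wrong) initialized future examples in a way that makes its regret on a played prefix uncontrolled by the full-horizon bound. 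So as written your argument proves a weaker lemma restricted to prefix-robust learners, not the stated one.

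The paper closes this gap with a hypothetical-stream construction that you are missing. For segment $i$, let $h^i \in \argmin_{f\in\calF}\sum_{t=t_i}^{t_{i+1}-1}\ell(f(x_t),y_t)$ and define a full-length labeled stream whose unlabeled part is exactly what $\calB^i$ was initialized with (true examples on the segment, predicted examples $\calP(x_{1:t_i})_t$ beyond it), whose labels on the segment are the true $y_t$, and whose labels beyond the segment are $y_t^i := h^i(\calP(x_{1:t_i})_t)$. Then $\inf_{f\in\calF}\sum_{t=t_i}^{T}\ell(f(\calP(x_{1:t_i})_t),y_t^i) = \inf_{f\in\calF}\sum_{t=t_i}^{t_{i+1}-1}\ell(f(x_t),y_t)$, because $h^i$ incurs zero loss on the appended suffix. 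Since $\calB^i$'s predictions on the segment are identical whether it is run on the real data or on this hypothetical stream (same initialization, same observed labels up to each round), and losses are nonnegative, its loss on the segment is at most its loss on the whole hypothetical stream, which the full-horizon guarantee bounds by $R^{\mathbf{tr}}_{\calB}(T-t_i+1,\calF)+\inf_{f\in\calF}\sum_{t=t_i}^{t_{i+1}-1}\ell(f(x_t),y_t)$. This yields the per-segment bound for any $\calB$, with no anytime assumption. You should also note the paper treats the $c=0$ case separately, which is routine.
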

\begin{proof}
The proof is similar to \citep{RamanT24}, we keep it here for completeness.
Let $\calA$ be the learner in \algref{alg:basic}. 
Let $c$ be the random variable that denotes the total number of mistakes made by $\calP$, and let $t_1, \ldots, t_c$ be the random time points at which these errors occur.
Without loss of generality, we assume $c>0$, since otherwise, due to the consistency and laziness of $\calP$ (see \defref{def:cons} and \defref{def:lazy}), $\calP(x_{1:1}) = x_{1:T}$ for every $t \in [T]$.
Thus, we only run one transductive online learner $\calB^1$, and so the regret is at most $R^{\mathbf{tr}}_{\calB}(T,\calF)$.

Now, we partition the sequence of time points into disjoint intervals $(t_0, \ldots, t_1 -1), (t_1, \ldots, t_2 -1), \ldots, (t_c, \ldots, t_{c+1} -1)$, where $t_0 :=1$ and $t_{c+1}-1 := T$.
Fix an arbitrary $i \in [c]$.
Due to our algorithm construction, for each $j \in \{ t_i, \ldots, t_{i+1}-1\}$, we have $\calP(x_{1:j})_{1:t_{i+1}-1} = x_{1:t_{i+1}-1}$.
Thus, the transductive online learner $\calB_i$ is applied in the example stream 
\[x_{t_i}, \ldots, x_{t_{i+1}-1}, \calP(x_{1:t_i})_{t_{i+1}}, \ldots, \calP(x_{1:t_i})_{t_{T}}.\]
Let $h^i \in \argmin_{f \in \calF} \sum_{t=t_i}^{t_{i+1}-1} \ell(f(x_t),y_t)$ be an optimal function for duration $(t_i, \ldots, t_{i+1}-1)$.
Let $y_t^i = y_t$ for all $t_i \le t \le t_{i+1}-1$ and $y_t^i = h^i(\calP(x_{1:t_i})_t)$ for all $t \ge t_{i+1}$.
Then, we observe that
\[\inf_{f \in \calF} \sum_{t_i}^T \ell(f(\calP(x_{1:t_i})_t), y_t^i) = \sum_{t=t_i}^{t_{i+1}-1} \ell(h^i(x_t),y_t) = \inf_{f \in \calF} \sum_{t=t_i}^{t_{i+1}-1} \ell(f(x_t),y_t).\]
Next, we consider the hypothetical labeled stream
\[\calS = (x_{t_i}, y_{t_i}^i), \ldots, (x_{t_{i+1}-1}, y_{t_{i+1}-1}^i), (\calP(x_{1:t_i})_{t_{i+1}}, y_{t_{i+1}}^i) \ldots, (\calP(x_{1:t_i})_{t_{T}},y_{T}^i)\]
Then, from the definition of the minimax expected regret $R^{\mathbf{tr}}_{\calB}(T,\calF)$, the expected loss $\calB^i$ has in the stream $S$ is at most 
\[R_{\calB}^{\mathbf{tr}}(T-t_i+1,\calF) + \inf_{f \in \calF} \sum_{t_i}^T \ell(f(\calP(x_{1:t_i})_t), y_t^i) = R_{\calB}^{\mathbf{tr}}(T-t_i+1,\calF) + \inf_{f \in \calF} \sum_{t=t_i}^{t_{i+1}-1} \ell(f(x_t),y_t).\]
Thus, $\calA$ has loss at most $R^{\mathbf{tr}}_{\calB}(T,\calF) + \inf_{f \in \calF} \sum_{t=t_i}^{t_{i+1}-1} \ell(f(x_t),y_t)$ during $(t_i, t_{i+1}-1)$ in expectation.
Then, we have
\begin{align*}
\Ex{\sum_{t=1}^T \ell(\calA_t,h^*(x_t))} &= \sum_{i=0}^c \left( \Ex{\sum_{t=t_i}^{t_{i+1}-1} \ell(\calA_t,h^*(x_t))}\right) \\
& \le \sum_{i=0}^c \left(R^{\mathbf{tr}}_{\calB}(T,\calF) + \inf_{f \in \calF} \sum_{t=t_i}^{t_{i+1}-1} \ell(f(x_t),y_t)\right) \\
& \le (c+1)R^{\mathbf{tr}}_{\calB}(T,\calF) + \inf_{f \in \calF} \sum_{t=1}^{T} \ell(f(x_t),y_t),
\end{align*}
where the expectation is only on the randomness of each $\calB^i$.
Last, since $\Ex{c}=M_{\calP}(x_{1:T})$, taking an outer expectation of the randomness of $\calP$, we show that the minimax expected regret of $\calA$ is at most $(M_{\calP}(x_{1:T})+1)R^{\mathbf{tr}}_{\calB}(T,\calF)$.
\end{proof}

We next provide the online learner by partitioning the time-interval and applying MWA, which has better expected regret when the predictions are inaccurate.
We present the algorithm for each expert in \algref{alg:expert} and MWA in \algref{alg:rewa}.
Before introducing the error bound, we state a lemma that upper bounds a positive sublinear function with countable domain by a concave sublinear function. 
We use this result to upper bound the transductive online regret $R^{\mathbf{tr}}_{\calB}(T,\calF)$ by a concave sublinear function $\bar{R}^{\mathbf{tr}}_{\calB}(T,\calF)$.
\begin{lemma}[see Lemma 5.17 in \citep{Ceccherini-SilbersteinSS17}]
Let $g:\mathbb{Z}_+ \to \mathbb{R}_+$ be a positive sublinear function. Then $g$ is bounded from above by a concave sublinear function $\bar{g}: \mathbb{R}_+ \to \mathbb{R}_+$.
\end{lemma}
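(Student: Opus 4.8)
The plan is to take $\bar g$ to be the \emph{least concave majorant} of $g$, namely the pointwise infimum
\[
\bar g(x) := \inf\bigl\{ h(x) : h\colon \mathbb{R}_+ \to \mathbb{R}_+ \text{ concave and } h(n) \ge g(n) \text{ for all } n \in \mathbb{Z}_+ \bigr\},
\]
and then to verify in turn that this $\bar g$ is (i) well defined, (ii) concave, (iii) a majorant of $g$ on $\mathbb{Z}_+$, and (iv) still sublinear. Steps (ii) and (iii) are routine, whereas (iv) carries the real content.

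For (i) and (ii), I would first record that the family over which the infimum is taken is nonempty, so that $\bar g$ is finite; this falls out of the affine majorants constructed below. Granting nonemptiness, concavity of $\bar g$ follows from the standard fact that a pointwise infimum of concave functions is concave: for concave $h_i$ and $\lambda \in [0,1]$ one has $\inf_i h_i(\lambda x + (1-\lambda)y) \ge \inf_i[\lambda h_i(x) + (1-\lambda)h_i(y)] \ge \lambda \inf_i h_i(x) + (1-\lambda)\inf_i h_i(y)$, which is exactly the concavity inequality for $\bar g$. Property (iii) is immediate, since every $h$ in the family satisfies $h(n) \ge g(n)$ and hence so does their infimum.

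The crux is (iv), sublinearity of $\bar g$. Here I would exploit that it suffices to exhibit, for each $\eps > 0$, a single affine (hence concave) majorant of slope $\eps$. Since $g$ is sublinear, there is an $N$ with $g(n) \le \eps n$ for all $n \ge N$; setting $M_\eps := \max_{n < N} g(n)$, a finite maximum of finitely many nonnegative values, the affine map $x \mapsto M_\eps + \eps x$ dominates $g$ at every integer: for $n < N$ it exceeds $M_\eps \ge g(n)$, and for $n \ge N$ it exceeds $\eps n \ge g(n)$. This simultaneously settles nonemptiness in (i) and yields $\bar g(x) \le M_\eps + \eps x$ for all $x$, so that $\limsup_{x \to \infty} \bar g(x)/x \le \eps$. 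Letting $\eps \to 0$ gives $\bar g(x) = o(x)$, as required.

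I expect the main obstacle to be precisely (iv): the temptation is to try to control $\bar g$ directly through its definition as an infimum over an unwieldy family, whereas the clean move is to note that a sufficient upper bound is supplied by the explicit affine majorants $M_\eps + \eps x$. A minor bookkeeping point is confirming that $\bar g$ maps into $\mathbb{R}_+$: each majorant is concave and nonnegative at consecutive integers, hence lies above the nonnegative chord between them, so every majorant — and therefore $\bar g$ — is nonnegative on all of $\mathbb{R}_+$.
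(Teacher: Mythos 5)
Your proof is correct. The paper itself gives no argument for this lemma---it is imported wholesale from the cited reference---so there is nothing internal to compare against; your construction via the least concave majorant, with the explicit affine majorants $x \mapsto M_{\eps} + \eps x$ doing double duty (witnessing nonemptiness of the family and forcing $\limsup_{x\to\infty}\bar g(x)/x \le \eps$ for every $\eps>0$), is the standard and complete way to establish it. The only redundancy is your closing remark on nonnegativity: since you already restrict the infimum to functions $h\colon \mathbb{R}_+\to\mathbb{R}_+$, the bound $\bar g \ge 0$ is immediate from the definition.
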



\begin{algorithm}[!htb]
\caption{Expert$(c)$}
\alglab{alg:expert}
\begin{algorithmic}[1]
\State{{\bf Input:} Learner $\calA$ in \algref{alg:basic}, number of pieces $c$, function class $\calF$, time interval $[T]$, sequence of examples and labels $(x,y)_{1:T}$ revealed by the adversary sequentially}
\State{{\bf Output:}Predictions to $y_{1:T}$}
\State{Let $\tilde{t}_j=j\left\lceil\frac{T}{c+1}\right\rceil$ for each $j \in[c], \tilde{t}_0=0$, and $\tilde{t}_{c+1}=T$}
\State{Obtain independent learner $\calA_j$ from \algref{alg:basic} for each $j \in [c]$}
\State{$j \gets 0$}
\For{$t \in [T]$}
\If{$t=\tilde{t_j}+1$}
\State{$j \gets j+1$}
\State{Run a new instance $\calA_{j}$ initialized with time duration $[\tilde{t_j}+1,\tilde{t}_{j+1}]$} \Comment{The Predictor $\calP$ in $\calA_j$ predicts the restricted sequence $x_{\tilde{t_j}+1:\tilde{t}_{j+1}}$}
\EndIf
\State{{\bf Return:} Prediction $\widehat{y}_t$ by $\calA_j$}
\State{Reveal the actual label $y_t$ from the adversary and input into $\calA_j$}
\EndFor
\end{algorithmic}
\end{algorithm}

\begin{algorithm}[!htb]
\caption{Online Learner with Prediction}
\alglab{alg:rewa}
\begin{algorithmic}[1]
\State{{\bf input:} Function class $\calF$, time interval $[T]$, sequence of examples and labels $(x,y)_{1:T}$ revealed by the adversary sequentially}
\State{{\bf Output:}Predictions to $y_{1:T}$}
\State{For each $c \in [T-1]$, let $\text{Expert}(c)$ denote an instance of \algref{alg:expert} with input $c$}
\State{Obtain the prediction from MWA (see \thmref{thm:mwa}) using $\{\text{Expert}(c)\}_{c\in[T-1]}$ over $(x,y)_{1:T}$}
\end{algorithmic}
\end{algorithm}

Next, we compute the minimax expected regret of \algref{alg:rewa}.
\begin{lemma}[Analogous to bound(ii) in Theorem 16 in \citep{RamanT24}]
\lemlab{lem:bound:mwa}
Given a Predictor $\calP$ and an transductive online learner $\calB$, for any function class $\calF \subset \calY^{\calX}$, loss function $\ell$, and data stream $(x_1,y_1),\ldots,(x_T,y_T)$, the minimax expected regret of \algref{alg:rewa} is bounded by 
\[2(M_{\calP}(x_{1:T})+1)\bar{R}^{\mathbf{tr}}_{\calB}\left(\frac{T}{M_{\calP}(x_{1:T})+1}+1,\calF\right) + \sqrt{T \log T}.\]
\end{lemma}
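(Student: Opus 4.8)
The plan is to first bound the expected regret of a single expert $\mathrm{Expert}(m)$ (an instance of \algref{alg:expert}) and then invoke the multiplicative-weights guarantee of \thmref{thm:mwa} to compete with a well-chosen expert. Throughout, write $M := M_{\calP}(x_{1:T})$ for the expected number of predictor mistakes, and let $N$ be the realized (random) number of mistakes, so that $\Ex{N} = M$.

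Fix $m \in [T-1]$ and consider $\mathrm{Expert}(m)$, which splits $[T]$ into $m+1$ contiguous blocks $I_0,\ldots,I_m$, each of length at most $\lceil T/(m+1)\rceil \le T/(m+1)+1$, and runs a fresh copy $\calA_j$ of \algref{alg:basic} on each block. Applying \lemref{lem:basic} to the sub-instance on $I_j$ (whose horizon is $|I_j|$ and whose predictor makes $N_j$ mistakes inside the block) bounds the expected regret on $I_j$ by $(N_j+1)\,R^{\mathbf{tr}}_{\calB}(|I_j|,\calF)$. Summing over the $m+1$ blocks and using monotonicity of $R^{\mathbf{tr}}_{\calB}$ together with $|I_j|\le T/(m+1)+1$ shows that $\mathrm{Expert}(m)$ has regret at most
\[\Big(\sum_{j=0}^{m} N_j + (m+1)\Big)\, R^{\mathbf{tr}}_{\calB}\Big(\tfrac{T}{m+1}+1,\calF\Big).\]
The crucial accounting step is that every predictor mistake lies in exactly one block, so $\sum_{j=0}^{m} N_j = N$; taking expectations, the expected regret of $\mathrm{Expert}(m)$ is at most $(M+m+1)\,R^{\mathbf{tr}}_{\calB}(\tfrac{T}{m+1}+1,\calF)\le (M+m+1)\,\bar{R}^{\mathbf{tr}}_{\calB}(\tfrac{T}{m+1}+1,\calF)$, where we have passed to the concave sublinear majorant $\bar{R}^{\mathbf{tr}}_{\calB}$ (which exists by the cited concave-majorant lemma and, conveniently, is defined for real arguments).

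Next I would combine the experts. \algref{alg:rewa} runs \thmref{thm:mwa} over the $K=T-1$ experts $\{\mathrm{Expert}(m)\}_{m\in[T-1]}$, whose per-round losses lie in $[0,1]$. Since the guarantee of \thmref{thm:mwa} holds against the best expert, it holds in particular against the fixed expert $m^{\star}:=\lceil M\rceil$; taking expectations and subtracting the benchmark $\inf_{f}\sum_t \ell(f(x_t),y_t)$, the expected regret of \algref{alg:rewa} is at most
\[(M+m^{\star}+1)\,\bar{R}^{\mathbf{tr}}_{\calB}\Big(\tfrac{T}{m^{\star}+1}+1,\calF\Big) + \sqrt{\tfrac{T\log(T-1)}{2}}.\]
Because $m^{\star}\le M+1$ we have $M+m^{\star}+1\le 2(M+1)$ and $\tfrac{T}{m^{\star}+1}+1\le \tfrac{T}{M+1}+1$; using that $\bar{R}^{\mathbf{tr}}_{\calB}$ is nondecreasing and $\sqrt{\log(T-1)/2}\le\sqrt{\log T}$ then yields the claimed bound $2(M+1)\,\bar{R}^{\mathbf{tr}}_{\calB}(\tfrac{T}{M+1}+1,\calF)+\sqrt{T\log T}$. (When $M\ge T-1$, so that $m^{\star}\notin[T-1]$, the right-hand side is already $\Omega(T)$ and the bound holds trivially against the $O(T)$ regret of \algref{alg:basic}.)

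The step I expect to be the main obstacle is the mistake-accounting identity $\sum_{j} N_j = N$: it requires that restricting \algref{alg:basic} to a block spawns no predictor mistakes beyond those $\calP$ already commits on the global stream, which leans on the consistency and laziness assumptions (\defref{def:cons}, \defref{def:lazy}) and on the precise semantics by which each $\calA_j$ queries $\calP$ on its window. A secondary delicate point is the passage from the integer-indexed $R^{\mathbf{tr}}_{\calB}$ to the real argument $T/(M+1)$: this is exactly what forces the concave majorant $\bar{R}^{\mathbf{tr}}_{\calB}$, and one must confirm it is monotone so that evaluation at $m^{\star}=\lceil M\rceil$ can be relaxed down to $M$.
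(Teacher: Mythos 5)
Your proposal is correct and follows essentially the same route as the paper's proof: bound each $\mathrm{Expert}(c)$ by $(M+c+1)\,\bar{R}^{\mathbf{tr}}_{\calB}(T/(c+1)+1,\calF)$ via a per-block application of \lemref{lem:basic}, then invoke the MWA guarantee against the expert $c=\lceil M\rceil$. The only (immaterial) difference is that where you use monotonicity of the concave majorant together with the uniform block-length bound $\lceil T/(c+1)\rceil\le T/(c+1)+1$, the paper routes the same step through Jensen's inequality on the weighted average of block lengths; your explicit treatment of the $M\ge T-1$ edge case and of the comparison $M+\lceil M\rceil+1\le 2(M+1)$ fills in details the paper leaves implicit.
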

\begin{proof}
We note that it suffices to show that the minimax expected regret of the expert $c$ is at most $(M_{\calP}(x_{1:T})+c+1)\bar{R}^{\mathbf{tr}}_{\calB}\left(\frac{T}{c+1}+1,\calF\right)$ for every $c \in [T-1]$, then by the guarantee of MWA (see \thmref{thm:mwa}), we have our desired upper bound taking $c = \lceil M_{\calP}(x_{1:T}) \rceil$.

Now, we fix a $c \in [T-1]$. Let $\tilde{t}_j=j\left\lceil\frac{T}{c+1}\right\rceil$ for each $j \in[c], \tilde{t}_0=0$, and $\tilde{t}_{c+1}=T$. Let $\calE$ denote the expert with input $c$ in \algref{alg:expert}, then we have
\[\Ex{\sum_{i=1}^T \ell(\calE(x_t),y_t)} = \Ex{\sum_{j=0}^c \sum_{t=\tilde{t}_j +1}^{\tilde{t}_{j+1}} \ell(\calA^j(x_t),y_t)},\]
where $\calA^j$ is the learner with time duration $[\tilde{t}_j +1,\tilde{t}_{j+1}]$.
Let $m_i$ be the number of mistakes that Predictor $\calP$ makes in $\calA^j$.
Then, by the bound in \lemref{lem:basic}, we have
\begin{align*}
\Ex{\sum_{i=1}^T \ell(\calE(x_t),y_t)} & \le \Ex{\sum_{j=1}^c(m_j+1)\bar{R}^{\mathbf{tr}}_{\calB}(\tilde{t}_{j+1}-\tilde{t}_j,\calF) + \inf_{f \in \calF} \sum_{t=\tilde{t}_j +1}^{\tilde{t}_{j+1}} \ell(f(x_t),y_t)} \\
& \le \Ex{\sum_{j=1}^c(m_j+1)\bar{R}^{\mathbf{tr}}_{\calB}(\tilde{t}_{j+1}-\tilde{t}_j,\calF)} + \inf_{f \in \calF} \sum_{t=1}^{T} \ell(f(x_t),y_t).
\end{align*}
Then, it suffices to bound the first term $\Ex{\sum_{j=1}^c(m_j+1)\bar{R}^{\mathbf{tr}}_{\calB}(\tilde{t}_{j+1}-\tilde{t}_j,\calF)}$ by $(M_{\calP}(x_{1:T})+c+1)\bar{R}^{\mathbf{tr}}_{\calB}(\frac{T}{c+1}+1,\calF)$.
Note that $\bar{R}^{\mathbf{tr}}_{\calB}(T,\calF)$ is a concave function in $T$ by our construction, then by Jensen's inequality, we have
\[\Ex{\sum_{j=1}^c(m_j+1)\bar{R}^{\mathbf{tr}}_{\calB}(\tilde{t}_{j+1}-\tilde{t}_j,\calF)} \le \Ex{\left(\sum_{j=1}^c(m_j+1)\right) \cdot \bar{R}^{\mathbf{tr}}_{\calB}\left(\frac{\sum_{j=1}^c(m_j+1)(\tilde{t}_{j+1}-\tilde{t}_j)}{\sum_{j=1}^c(m_j+1)},\calF\right)}.\]
Let $M = \sum_{j=1}^c m_j$ such that $\Ex{M} = M_{\calP}(x_{1:T})$, then we have
\[\frac{\sum_{j=1}^c(m_j+1)(\tilde{t}_{j+1}-\tilde{t}_j)}{\sum_{j=1}^c(m_j+1)} = \frac{T+\sum_{j=1}^c(m_j)(\tilde{t}_{j+1}-\tilde{t}_j)}{M+c+1} = \frac{T+\sum_{j=1}^c m_j \cdot \left\lceil\frac{T}{c+1}\right\rceil}{M+c+1},\]
where the last step follows from our definition of $\tilde{t}_j = \left\lceil\frac{T}{c+1}\right\rceil$.
Then, we have
\[\frac{\sum_{j=1}^c(m_j+1)(\tilde{t}_{j+1}-\tilde{t}_j)}{\sum_{j=1}^c(m_j+1)} = \frac{T+ M \cdot \left\lceil\frac{T}{c+1}\right\rceil}{M+c+1} \le \frac{T}{c+1}+1.\]
Therefore, we have
\begin{align*}\Ex{\sum_{j=1}^c(m_j+1)\bar{R}^{\mathbf{tr}}_{\calB}(\tilde{t}_{j+1}-\tilde{t}_j,\calF)} & \le \Ex{\left(M+c+1\right) \cdot \bar{R}^{\mathbf{tr}}_{\calB}\left(\frac{T}{c+1}+1,\calF\right)} \\
& = \left(M_{\calP}(x_{1:T})+c+1\right) \cdot \bar{R}^{\mathbf{tr}}_{\calB}\left(\frac{T}{c+1}+1,\calF\right).
\end{align*}
This proves our desired bound.
\end{proof}

\subsection{Online Learner under \texorpdfstring{$\eps$}{eps}-Ball Metric}
\seclab{sec:eps}

In this section, we quantify the performance of a Predictor $\calP$ as the expected number of times that its prediction ${\calP}$ is outside the $\eps$-ball of the real input $x_t$.
Consider a metric space $(\calX,\dist)$ of examples, the $\eps$-ball of a $x \in \calX$ is $B(x) := \{ x' \in \calX, \dist(x',x) < \eps \}$.
Then, our $\eps$-ball metric for the Predictor is defined as
\[M_{\calP}(\eps, x_{1:T}) :=
\Ex{\sum_{t=2}^T {\bf 1}_{\dist(\calP(x_{1:t-1})_t, x_t) \ge \eps}},\]
where the expectation is taken only over the randomness of $\calP$.
We extend the notion of laziness from the previous sections in the sense of $\eps$-ball.

\begin{definition}[Laziness]
\deflab{def:lazy:eps}
$\calP$ is consistent if its prediction satisfies the following property.
For every sequence $x_{1:T} \in \calX^T$ and for each time $t \in [T]$, if $\dist(\calP(x_{1:t-1})_t, x_t)\le \eps$, then $\calP(x_{1:t}) = \calP(x_{1:t-1})$. That is, $\calP$ does not change its prediction if it is inside the $\eps$-ball.
\end{definition}

We extend \algref{alg:basic} to construct an online learner under the $\eps$-ball metric, as shown in \algref{alg:basic:eps}.

\begin{algorithm}[!htb]
\caption{Online Learner with Prediction}
\alglab{alg:basic:eps}
\begin{algorithmic}[1]
\State{{\bf Input:} function class $\calF$, transductive online learner $\calB$, Predictor $\calP$, time interval $[T]$, sequence of examples and labels $(x,y)_{1:T}$ revealed by the adversary sequentially}
\State{{\bf Output:} Predictions to $y_{1:T}$}
\State{$i \gets 0$}
\For{$t \in [T]$}
\State{$\calP$ makes prediction $\calP(x_{1:t})$ such that $\dist(\calP(x_{1:t})_l, x_l)<\eps$ for each $l \in [t]$}
\If{$t=1$ or $\dist(\calP(x_{1:t})_{t+1}, x_{t+1}) \ge \eps$ (i.e. the prediction is outside the $\eps$-ball)}
\State{$i \gets i + 1$}
\State{Run a new transductive online learner $\calB^{i}$ initialized with the sequence $\calP(x_{1:t+1})_{t+1:T}$}
\EndIf
\State{{\bf Return:} Prediction $\widehat{y}_t$ by the current transductive online learner}
\State{Reveal the actual label $y_t$ and input into the current transductive online learner}
\EndFor
\end{algorithmic}
\end{algorithm}

We upper bound the minimax expected regret of \algref{alg:basic:eps} in the next lemma.
\begin{lemma}
\lemlab{lem:basic:eps}
Given a Predictor $\calP$ and an transductive online learner $\calB$, for any function class $\calF \subset \calY^{\calX}$ of $L_{\mathbf{hyp}}$-Lipschitz function, $L_{\mathbf{los}}$-Lipschitz loss function $\ell$, and data stream $(x_1,y_1),\ldots,(x_T,y_T)$, the minimax expected regret of \algref{alg:basic:eps} is bounded by $(M_{\calP}(\eps, x_{1:T})+1)R^{\mathbf{tr}}_{\calB}(T,\calF)+ \eps L_{\mathbf{los}} L_{\mathbf{hyp}}\cdot T$.
\end{lemma}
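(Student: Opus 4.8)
The plan is to mirror the proof of \lemref{lem:basic} for the zero-one metric, inserting one Lipschitz argument to absorb the gap between the predicted and true examples into the additive term $\eps L_{\mathbf{los}} L_{\mathbf{hyp}} T$. First I would let $c$ be the (random) number of rounds on which the prediction lands outside the $\eps$-ball, occurring at times $t_1, \ldots, t_c$, and partition $[T]$ into the intervals $(t_0, \ldots, t_1-1), \ldots, (t_c, \ldots, t_{c+1}-1)$ with $t_0 := 1$ and $t_{c+1}-1 := T$. By the laziness and consistency guarantees (\defref{def:lazy:eps}), on each interval $[t_i, t_{i+1}-1]$ the Predictor's output is frozen and each predicted example $\widehat{x}_t := \calP(x_{1:t_i})_t$ satisfies $\dist(\widehat{x}_t, x_t) < \eps$; thus the transductive learner $\calB^i$ is run on a fixed sequence that agrees with the true examples only up to distance $\eps$, and the loss our algorithm $\calA$ incurs in round $t$ is exactly $\ell(\calB^i(\widehat{x}_t), y_t)$.

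Next, exactly as in \lemref{lem:basic}, I would introduce a hypothetical label stream: fix $h^i \in \argmin_{f \in \calF} \sum_{t=t_i}^{t_{i+1}-1} \ell(f(\widehat{x}_t), y_t)$, set the labels to $y_t$ on the interval and to $h^i(\widehat{x}_t)$ on the tail $t \ge t_{i+1}$, so the comparator contributed by the tail vanishes. Applying the definition of $R^{\mathbf{tr}}_{\calB}$ on this stream and using nonnegativity of $\ell$ to discard the tail loss bounds the expected loss of $\calB^i$ over the interval by $R^{\mathbf{tr}}_{\calB}(T, \calF) + \inf_{f \in \calF} \sum_{t=t_i}^{t_{i+1}-1} \ell(f(\widehat{x}_t), y_t)$, where I use monotonicity of the transductive regret to replace the horizon $T-t_i+1$ by $T$.

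The one step that departs from the zero-one setting, and the place I expect the real work to sit, is converting the comparator evaluated at the predicted examples into the comparator at the true examples. Since every $f \in \calF$ is $L_{\mathbf{hyp}}$-Lipschitz and $\ell$ is $L_{\mathbf{los}}$-Lipschitz, for each $t$ in the interval $|\ell(f(\widehat{x}_t), y_t) - \ell(f(x_t), y_t)| \le L_{\mathbf{los}} |f(\widehat{x}_t) - f(x_t)| \le L_{\mathbf{los}} L_{\mathbf{hyp}} \dist(\widehat{x}_t, x_t) \le \eps L_{\mathbf{los}} L_{\mathbf{hyp}}$. Because this slack is uniform over $f$, it survives the infimum, yielding $\inf_f \sum_{t=t_i}^{t_{i+1}-1} \ell(f(\widehat{x}_t), y_t) \le \inf_f \sum_{t=t_i}^{t_{i+1}-1} \ell(f(x_t), y_t) + \eps L_{\mathbf{los}} L_{\mathbf{hyp}} (t_{i+1}-t_i)$.

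Finally I would sum over $i = 0, \ldots, c$. The transductive-regret terms give $(c+1) R^{\mathbf{tr}}_{\calB}(T, \calF)$; the comparator terms collapse through $\sum_i \inf_f \le \inf_f \sum_i$ into $\inf_{f \in \calF} \sum_{t=1}^T \ell(f(x_t), y_t)$; and the Lipschitz slacks telescope to $\eps L_{\mathbf{los}} L_{\mathbf{hyp}} \sum_i (t_{i+1}-t_i) = \eps L_{\mathbf{los}} L_{\mathbf{hyp}} T$. Subtracting the comparator and taking the outer expectation over the randomness of $\calP$, where $\Ex{c} = M_{\calP}(\eps, x_{1:T})$, produces the stated bound $(M_{\calP}(\eps, x_{1:T})+1) R^{\mathbf{tr}}_{\calB}(T, \calF) + \eps L_{\mathbf{los}} L_{\mathbf{hyp}} T$.
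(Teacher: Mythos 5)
Your proposal is correct and follows essentially the same route as the paper's proof: the same interval decomposition at the out-of-ball times, the same hypothetical label stream built from the per-interval minimizer $h^i$ evaluated at the predicted examples, and the same Lipschitz argument (uniform over $f$, hence surviving the infimum) to convert the comparator at the predicted examples into the comparator at the true examples at an additive cost of $\eps L_{\mathbf{los}} L_{\mathbf{hyp}}(t_{i+1}-t_i)$ per interval. The only cosmetic difference is that the paper treats $c=0$ as a separate base case, whereas you fold it into the general partition, which is fine.
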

\begin{proof}
This proof is extended from \lemref{lem:basic}.
Let $\calA$ be the learner in \algref{alg:basic:eps}. 
Let $c$ be the random variable denoting the total number of times that the prediction is outside the $\eps$-ball, and let $t_1, \ldots, t_c$ be the random time points at which these errors occur.

First, we consider the case that $c=0$, then due to the laziness of $\calP$ (see \defref{def:lazy:eps}), $|\calP(x_{1:1})_t - x_t| \le \eps$ for every $t \in [T]$.
Thus, we only run one transductive online learner $\calB^1$, and so we have
\[\Ex{\sum_{t=1}^T \ell(\calA_t,y_t)} = \Ex{\sum_{t=1}^T \ell(\calB^1(\calP(x_{1:1})_t),y_t)} \le \inf_{f \in \calF} \left(\sum_{t=1}^T \ell(f(\calP(x_{1:1})_t),y_t)\right) + R^{\mathbf{tr}}_{\calB}(T,\calF).\]
Since we assume that $\calF$ is a class of $L_{\mathbf{hyp}}$-Lipschitz function, we have for each $f \in \calF$ and $t \in [T]$, $|f(\calP(x_{1:1})_t) - f(x_t)| \le \eps L_{\mathbf{hyp}}$. 
Additionally, since we also assume that the loss function is $L_{\mathbf{los}}$-Lipschitz, we have for each $f \in \calF$ and $t \in [T]$, $|\ell(f(\calP(x_{1:1})_t),y_t) - \ell(f(x_t),y_t)| \le \eps L_{\mathbf{los}} L_{\mathbf{hyp}}$.
Therefore, we have
\[\inf_{f \in \calF} \left(\sum_{t=1}^T \ell(f(\calP(x_{1:1})_t),y_t)\right) \le \inf_{f \in \calF} \left(\sum_{t=1}^T \ell(f(x_t),y_t)\right) + \eps L_{\mathbf{los}} L_{\mathbf{hyp}} \cdot T.\]
Thus, the minimax expected regret of $\calA$ is at most $R^{\mathbf{tr}}_{\calB}(T,\calF) + \eps L_{\mathbf{los}} L_{\mathbf{hyp}} \cdot T$.

Next, we consider the case that $c>0$. We partition the sequence of time points into disjoint intervals $(t_0, \ldots, t_1 -1), (t_1, \ldots, t_2 -1), \ldots, (t_c, \ldots, t_{c+1} -1)$, where $t_0 :=1$ and $t_{c+1}-1 := T$.
Fix an arbitrary $i \in [c]$.
By our algorithm construction, the transductive online learner $\calB_i$ is applied in the example stream $\calP(x_{1:t_i})_{t_i}, \ldots, \calP(x_{1:t_i})_{t_{T}}$.
Let $h^i \in \argmin_{f \in \calF} \sum_{t=t_i}^{t_{i+1}-1} \ell(f(\calP(x_{1:t_i})_t),y_t)$ be an optimal function for duration $(t_i, \ldots, t_{i+1}-1)$.
Let $y_t^i = y_t$ for all $t_i \le t \le t_{i+1}-1$ and $y_t^i = h^i(\calP(x_{1:t_i})_t)$ for all $t \ge t_{i+1}$.
Then we observe that
\[\inf_{f \in \calF} \sum_{t_i}^T \ell(f(\calP(x_{1:t_i})_t), y_t^i) = \sum_{t=t_i}^{t_{i+1}-1} \ell(h^i(\calP(x_{1:t_i})_t),y_t) = \inf_{f \in \calF} \sum_{t=t_i}^{t_{i+1}-1} \ell(f(\calP(x_{1:t_i})_t),y_t).\]
Next, we consider the hypothetical labeled stream
\[\calS = (\calP(x_{1:t_i})_{t_i}, y_{t_{i+1}}^i) \ldots, (\calP(x_{1:t_i})_{t_{T}},y_{T}^i)\]
Then, from the definition of the minimax expected regret $R^{\mathbf{tr}}_{\calB}(T,\calF)$, the expected loss $\calB^i$ has in the stream $S$ is at most 
\[R_{\calB}^{\mathbf{tr}}(T-t_i+1,\calF) + \inf_{f \in \calF} \sum_{t_i}^T \ell(f(\calP(x_{1:t_i})_t), y_t^i) = R_{\calB}^{\mathbf{tr}}(T-t_i+1,\calF) + \inf_{f \in \calF} \sum_{t=t_i}^{t_{i+1}-1} \ell(f(\calP(x_{1:t_i})_t),y_t).\]
Now, since we use the same Predictor $\calP$ during $(t_i,t_{i+1}-1)$, which means that $\dist(\calP(x_{1:t_i})_t, x_t) \le \eps$ for every $t \in (t_i,t_{i+1}-1)$.
Then, by a similar Lipschitz argument, we have
\[\inf_{f \in \calF} \left(\sum_{t=t_i}^{t_{i+1}-1} \ell(f(\calP(x_{1:t_i})_t),y_t)\right) \le \inf_{f \in \calF} \left(\sum_{t=t_i}^{t_{i+1}-1} \ell(f(x_t),y_t)\right) + \eps L_{\mathbf{los}} L_{\mathbf{hyp}}\cdot(t_{i+1}-t_i).\]
Therefore, $\calA$ has loss at most $R^{\mathbf{tr}}_{\calB}(T,\calF) + \inf_{f \in \calF} \left(\sum_{t=t_i}^{t_{i+1}-1} \ell(f(x_t),y_t)\right) + \eps L_{\mathbf{los}} L_{\mathbf{hyp}}\cdot(t_{i+1}-t_i)$ during $(t_i, t_{i+1}-1)$ in expectation.
Then, we have
\begin{align*}
\Ex{\sum_{t=1}^T \ell(\calA_t,h^*(x_t))} &= \sum_{i=0}^c \left( \Ex{\sum_{t=t_i}^{t_{i+1}-1} \ell(\calA_t,h^*(x_t))}\right) \\
& \le \sum_{i=0}^c \left(R^{\mathbf{tr}}_{\calB}(T,\calF) + \inf_{f \in \calF} \left(\sum_{t=t_i}^{t_{i+1}-1} \ell(f(x_t),y_t)\right) + \eps L_{\mathbf{los}} L_{\mathbf{hyp}}\cdot(t_{i+1}-t_i)\right) \\
& \le (c+1)R^{\mathbf{tr}}_{\calB}(T,\calF) + \inf_{f \in \calF} \left(\sum_{t=1}^{T} \ell(f(x_t),y_t)\right) + \eps L_{\mathbf{los}} L_{\mathbf{hyp}} T,
\end{align*}
where the expectation is only on the randomness of each $\calB^i$.
Last, since $\Ex{c}=M_{\calP}(\eps, x_{1:T})$, taking an outer expectation of the randomness of $\calP$, we show that the minimax expected regret of $\calA$ is at most $(M_{\calP}(\eps, x_{1:T})+1)R^{\mathbf{tr}}_{\calB}(T,\calF)+ \eps L_{\mathbf{los}} L_{\mathbf{hyp}} T$.
\end{proof}

Next, we extend \algref{alg:rewa} under the notion of $\eps$-ball metric, where we construct each expert by the subroutine in \algref{alg:basic:eps}.
The algorithm is presented in \algref{alg:rewa:eps}.

\begin{algorithm}[!htb]
\caption{Online Learner with Prediction}
\alglab{alg:rewa:eps}
\begin{algorithmic}[1]
\State{{\bf Input:} function class $\calF$, time interval $[T]$, sequence of examples and labels $(x,y)_{1:T}$ revealed by the adversary sequentially}
\State{{\bf Output:} Predictions to $y_{1:T}$}
\State{For each $c \in [T-1]$, let $\text{Expert}(c)$ denote an instance of \algref{alg:expert} using the online learner in \algref{alg:basic:eps}}
\State{Obtain the prediction from MWA (see \thmref{thm:mwa}) using $\{\text{Expert}(c)\}_{c\in[T-1]}$ over $(x,y)_{1:T}$}
\end{algorithmic}
\end{algorithm}

The following statement bounds the minimax expected regret of \algref{alg:rewa:eps}.
\begin{lemma}
\lemlab{lem:bound:mwa:eps}
Given a Predictor $\calP$ and an transductive online learner $\calB$, for any function class $\calF \subset \calY^{\calX}$ of $L_{\mathbf{hyp}}$-Lipschitz function, $L_{\mathbf{los}}$-Lipschitz loss function $\ell$, and data stream $(x_1,y_1),\ldots,(x_T,y_T)$, the minimax expected regret of \algref{alg:rewa:eps} is bounded by 
\[2(M_{\calP}(\eps, x_{1:T})+1)\bar{R}^{\mathbf{tr}}_{\calB}\left(\frac{T}{M_{\calP}(\eps, x_{1:T})+1}+1,\calF\right) + \eps L_{\mathbf{los}} L_{\mathbf{hyp}} \cdot T + \sqrt{T \log T}.\]
\end{lemma}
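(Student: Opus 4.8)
The plan is to mirror the proof of \lemref{lem:bound:mwa} almost verbatim, substituting the $\eps$-ball building block \lemref{lem:basic:eps} in place of \lemref{lem:basic}. As in that proof, it suffices to bound the minimax expected regret of a single $\text{Expert}(c)$ from \algref{alg:expert} — now instantiated with \algref{alg:basic:eps} as its subroutine — for every fixed $c \in [T-1]$, and then invoke the guarantee of MWA over the $T-1$ experts. Concretely, I would first establish that the regret of $\text{Expert}(c)$ is at most $(M_{\calP}(\eps, x_{1:T})+c+1)\bar{R}^{\mathbf{tr}}_{\calB}\left(\frac{T}{c+1}+1,\calF\right) + \eps L_{\mathbf{los}} L_{\mathbf{hyp}} \cdot T$.

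To bound $\text{Expert}(c)$, I partition $[T]$ into the $c+1$ equi-length blocks $[\tilde t_j+1, \tilde t_{j+1}]$ with $\tilde t_j = j\lceil T/(c+1)\rceil$, and run a fresh copy of \algref{alg:basic:eps} on each block, feeding it the Predictor restricted to that block. Writing $m_j$ for the number of times the restricted Predictor lands outside the $\eps$-ball within block $j$, \lemref{lem:basic:eps} gives that the $j$-th copy incurs expected regret at most $(m_j+1)\bar{R}^{\mathbf{tr}}_{\calB}(\tilde t_{j+1}-\tilde t_j,\calF) + \eps L_{\mathbf{los}} L_{\mathbf{hyp}} \cdot (\tilde t_{j+1}-\tilde t_j)$. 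Summing over the blocks, the Lipschitz terms telescope to exactly $\eps L_{\mathbf{los}} L_{\mathbf{hyp}} \cdot T$, while the $\bar{R}$ terms are handled exactly as in \lemref{lem:bound:mwa}: since $\bar{R}^{\mathbf{tr}}_{\calB}(\cdot,\calF)$ is concave, Jensen's inequality collapses $\sum_j (m_j+1)\bar{R}^{\mathbf{tr}}_{\calB}(\tilde t_{j+1}-\tilde t_j,\calF)$ into $\left(\sum_j (m_j+1)\right)\bar{R}^{\mathbf{tr}}_{\calB}\!\left(\frac{\sum_j (m_j+1)(\tilde t_{j+1}-\tilde t_j)}{\sum_j (m_j+1)},\calF\right)$, and the same arithmetic that bounds the inner argument by $\frac{T}{c+1}+1$ and identifies $\Ex{\sum_j m_j}=M_{\calP}(\eps, x_{1:T})$ goes through unchanged.

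Finally, I would apply \thmref{thm:mwa} to \algref{alg:rewa:eps}, which runs MWA over the $K=T-1$ experts $\{\text{Expert}(c)\}_{c\in[T-1]}$. This adds at most $\sqrt{\tfrac{T\log(T-1)}{2}} \le \sqrt{T\log T}$ on top of the regret of the best expert, and selecting $c=\lceil M_{\calP}(\eps, x_{1:T})\rceil$ together with the same monotonicity and concavity manipulations as in \lemref{lem:bound:mwa} turns the leading term into $2(M_{\calP}(\eps, x_{1:T})+1)\bar{R}^{\mathbf{tr}}_{\calB}\left(\frac{T}{M_{\calP}(\eps, x_{1:T})+1}+1,\calF\right)$. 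Carrying the additive $\eps L_{\mathbf{los}} L_{\mathbf{hyp}} \cdot T$ term, which is identical for every expert, through MWA unchanged then gives the stated bound.

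The main thing to get right — and the only genuine departure from the zero-one proof — is the bookkeeping of the additive Lipschitz error across the partition. I must check that this error enters only once, scaling with the \emph{total} horizon $T$ rather than being multiplied by the number of blocks or amplified through the Jensen step; this is ensured because in \lemref{lem:basic:eps} the $\eps L_{\mathbf{los}} L_{\mathbf{hyp}}$ contribution is linear in the block length, so its pieces across disjoint blocks sum to $\eps L_{\mathbf{los}} L_{\mathbf{hyp}} \cdot T$ independently of the $m_j$, and MWA leaves it untouched. A secondary point is confirming that running $\calP$ on each restricted block does not increase the expected number of out-of-ball events beyond $M_{\calP}(\eps, x_{1:T})$, which is exactly the consistency and laziness argument (\defref{def:lazy:eps}) used in the zero-one case.
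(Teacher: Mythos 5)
Your proposal matches the paper's proof essentially step for step: reduce to bounding each $\text{Expert}(c)$ by $(M_{\calP}(\eps, x_{1:T})+c+1)\bar{R}^{\mathbf{tr}}_{\calB}\bigl(\frac{T}{c+1}+1,\calF\bigr)+\eps L_{\mathbf{los}} L_{\mathbf{hyp}}\cdot T$ via \lemref{lem:basic:eps} on each block, observe that the Lipschitz penalties sum to $\eps L_{\mathbf{los}} L_{\mathbf{hyp}}\cdot T$ over the disjoint blocks, reuse the Jensen/concavity arithmetic from \lemref{lem:bound:mwa} for the $\bar{R}$ terms, and finish with MWA at $c=\lceil M_{\calP}(\eps,x_{1:T})\rceil$. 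The proposal is correct and takes the same route as the paper, and your closing remarks on the bookkeeping of the additive Lipschitz term and the block-restricted Predictor are exactly the points the paper handles (the latter implicitly, via laziness).
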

\begin{proof}
This proof is an extension of the proof of \lemref{lem:bound:mwa}.
We note that it suffices to show that the minimax expected regret of the expert $c$ is at most $(M_{\calP}(\eps, x_{1:T})+c+1)\bar{R}^{\mathbf{tr}}_{\calB}\left(\frac{T}{c+1}+1,\calF\right) +\eps L_{\mathbf{los}} L_{\mathbf{hyp}} \cdot T$ for every $c \in [T-1]$, then by the guarantee of MWA (see \thmref{thm:mwa}), we have our desired upper bound taking $c = \lceil M_{\calP}(\eps, x_{1:T}) \rceil$.

Now, we fix a $c \in [T-1]$. Let $\tilde{t}_j=j\left\lceil\frac{T}{c+1}\right\rceil$ for each $j \in[c], \tilde{t}_0=0$, and $\tilde{t}_{c+1}=T$. Let $\calE$ denote the expert with input $c$ in \algref{alg:expert}, then we have
\[\Ex{\sum_{i=1}^T \ell(\calE(x_t),y_t)} = \Ex{\sum_{j=0}^c \sum_{t=\tilde{t}_j +1}^{\tilde{t}_{j+1}} \ell(\calA^j(x_t),y_t)},\]
where $\calA^j$ is the learner with time duration $[\tilde{t}_j +1,\tilde{t}_{j+1}]$.
Let $m_i$ be the number of mistakes that Predictor $\calP$ makes in $\calA^j$.
Then, by the bound in \lemref{lem:basic:eps}, we have the expected loss of $\calE$ is at most 
\begin{align*}
& \Ex{\sum_{j=1}^c \left((m_j+1)\bar{R}^{\mathbf{tr}}_{\calB}(\tilde{t}_{j+1}-\tilde{t}_j,\calF) + \inf_{f \in \calF} \left(\sum_{t=\tilde{t}_j +1}^{\tilde{t}_{j+1}} \ell(f(x_t),y_t)\right) + \eps L_{\mathbf{los}} L_{\mathbf{hyp}} \cdot (\tilde{t}_{j+1} - \tilde{t}_j)\right)} \\
\le ~ & \Ex{\sum_{j=1}^c (m_j+1)\bar{R}^{\mathbf{tr}}_{\calB}(\tilde{t}_{j+1}-\tilde{t}_j,\calF)} + \inf_{f \in \calF} \left(\sum_{t=1}^{T} \ell(f(x_t),y_t)\right) + \eps L_{\mathbf{los}} L_{\mathbf{hyp}} \cdot T.
\end{align*}
From the analysis of \lemref{lem:bound:mwa}, we have \[\Ex{\sum_{j=1}^c(m_j+1)\bar{R}^{\mathbf{tr}}_{\calB}(\tilde{t}_{j+1}-\tilde{t}_j,\calF)} \le (M_{\calP}(\eps, x_{1:T})+c+1)\bar{R}^{\mathbf{tr}}_{\calB}(\frac{T}{c+1}+1,\calF),\]
which proves our desired bound.
\end{proof}

\subsection{Explicit bounds on the Minimax Expected Regret} 
\seclab{sec:rate}
In this section,  we provide sufficient conditions on the mistake-bound of the Predictor to enable faster rates compared to online learning in the worst-case scenario.
As a result, we identify function classes that are online learnable with predictions but not online learnable otherwise.  

\paragraph{Minimax regret under zero-one metric}
We first consider the minimax expected regret under zero-one metric.
Assuming that the Predictor has a rate of $\tO{T^p}$ where $p<1$, the following theorem derives the upper bound on the minimax expected regret. 
\begin{theorem}[Zero-one metric]
\thmlab{thm:main:lds:0:1}
Let $x_{1:T}$ be a sequence of examples, let $y_{1:T}$ be a sequence of labels, and let $\ell$ be the loss function.
Suppose that there is a Predictor that satisfies $M_{\calP}(x_{1:T}) = \tO{T^p}$, then for any function class $\calF \subset [0,1]^{\calX}$, there is an online learner $\calA$ with minimax expected regret at most $\tO{T^p}R_{\calB}^{\mathbf{tr}}\left(T^{1-p}, \calF\right) + \sqrt{T \log^2 T}$.
\end{theorem}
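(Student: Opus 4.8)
The plan is to instantiate the online learner $\calA$ with \algref{alg:rewa} and then specialize the general guarantee of \lemref{lem:bound:mwa} to the assumed prediction rate. Concretely, \lemref{lem:bound:mwa} already certifies that this $\calA$ achieves minimax expected regret at most $2(M_{\calP}(x_{1:T})+1)\bar{R}^{\mathbf{tr}}_{\calB}\!\left(\frac{T}{M_{\calP}(x_{1:T})+1}+1,\calF\right) + \sqrt{T\log T}$, where $\bar{R}^{\mathbf{tr}}_{\calB}$ is the concave sublinear majorant of $R^{\mathbf{tr}}_{\calB}$; under the standing concavity assumption on $R^{\mathbf{tr}}_{\calB}$ we simply identify $\bar{R}^{\mathbf{tr}}_{\calB}=R^{\mathbf{tr}}_{\calB}$. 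So the entire content of the theorem is to simplify this expression once $M_{\calP}(x_{1:T})=\tO{T^p}$ with $p<1$.

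The one genuine subtlety is that $M_{\calP}(x_{1:T})=\tO{T^p}$ is only an \emph{upper} bound on the expected mistake count, so I cannot literally substitute $T^p$ for $M_{\calP}$ without first verifying that the regret expression is increasing in $M_{\calP}$. I would therefore establish that $m\mapsto (m+1)\bar{R}^{\mathbf{tr}}_{\calB}\!\left(\frac{T}{m+1}+1\right)$ is nondecreasing. Setting $s=m+1$ and $u=\frac{T}{s}+1$, a short computation gives derivative $\bar{R}^{\mathbf{tr}}_{\calB}(u)-(u-1)(\bar{R}^{\mathbf{tr}}_{\calB})'(u)$; concavity together with $\bar{R}^{\mathbf{tr}}_{\calB}(0)=0$ yields the tangent inequality $\bar{R}^{\mathbf{tr}}_{\calB}(u)\ge u\,(\bar{R}^{\mathbf{tr}}_{\calB})'(u)$, so the derivative is at least $(\bar{R}^{\mathbf{tr}}_{\calB})'(u)\ge 0$ and monotonicity follows. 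This is exactly the place where the concavity and sublinearity of $\bar{R}^{\mathbf{tr}}_{\calB}$, and hence the reason the majorant was introduced, enter the argument.

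With monotonicity in hand, I would substitute the bound $M_{\calP}(x_{1:T})\le C\,T^p\polylog(T)$. The prefactor becomes $2(M_{\calP}(x_{1:T})+1)=\tO{T^p}$, and the argument of the regret function satisfies $\frac{T}{M_{\calP}(x_{1:T})+1}+1\le \frac{T^{1-p}}{C\polylog(T)}+1\le T^{1-p}$ for $T$ large (using $p<1$); since $\bar{R}^{\mathbf{tr}}_{\calB}$ is nondecreasing this gives $\bar{R}^{\mathbf{tr}}_{\calB}\!\left(\frac{T}{M_{\calP}(x_{1:T})+1}+1\right)\le \bar{R}^{\mathbf{tr}}_{\calB}(T^{1-p})$. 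Multiplying the two factors and folding all constants and polylogarithmic terms into the $\tO{\cdot}$ yields a first term of $\tO{T^p}\,R^{\mathbf{tr}}_{\calB}(T^{1-p},\calF)$, while $\sqrt{T\log T}\le \sqrt{T\log^2 T}$ absorbs the additive term into the stated form. The main obstacle is precisely this bookkeeping with the direction of the inequalities: one must invoke monotonicity (not an exact cancellation) to control the argument of $R^{\mathbf{tr}}_{\calB}$ while separately absorbing into the prefactor the polylog factors hidden inside $M_{\calP}(x_{1:T})=\tO{T^p}$.
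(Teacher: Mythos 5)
Your proposal is correct and follows essentially the same route as the paper: instantiate $\calA$ as \algref{alg:rewa}, invoke \lemref{lem:bound:mwa}, and substitute $M_{\calP}(x_{1:T})=\tO{T^p}$. The only difference is that you explicitly verify monotonicity of $m\mapsto (m+1)\bar{R}^{\mathbf{tr}}_{\calB}(\tfrac{T}{m+1}+1)$ via concavity before substituting the upper bound on $M_{\calP}$ --- a step the paper's one-line substitution leaves implicit --- which is a welcome tightening rather than a departure.
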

\begin{proof}
By \lemref{lem:bound:mwa}, for any function class $\calF \subset \calY^{\calX}$, loss function $\ell$, and data stream $(x_{1:T},y_{1:T})$, the minimax expected regret of \algref{alg:rewa} is bounded by 
\[2(M_{\calP}(x_{1:T})+1)\bar{R}^{\mathbf{tr}}_{\calB}\left(\frac{T}{M_{\calP}(x_{1:T})+1}+1,\calF\right) + \sqrt{T \log^2 T}.\]
Inputting $M_{\calP}(x_{1:T}) = \tO{T^p}$ to the above bound gives us
\[R^{\mathbf{ol}}(T,\calF) = \tO{T^p}\bar{R}^{\mathbf{tr}}_{\calB}\left(T^{1-p}, \calF\right) + \sqrt{T \log^2 T}.\]
\end{proof}

We remark that a Predictor satisfying the mistake-bound conditions in \thmref{thm:main:lds:0:1} is possible if, for example, the sequence of examples $x_t \in \mathbb{R}^n$ are generated by a noise-free linear dynamical system (LDS) where system identification is possible in finite time. 
See \cite{van2012subspace, green1986persistence} for further discussion for sufficient conditions under which system identification is possible. 

As a Corollary, our next result shows that the minimax expected regret for the class of functions $\calF^*$ on $[0,1]$ with bounded variation is roughly $T^{\frac{1+p}{2}}$.

\begin{corollary}[Function class with bounded variation, zero-one metric]
\corlab{cor:lds:0:1}
Let $\calF^*$ be a set of functions $f: [0,1] \to [0,1]$ with total variation of at most $V$, let $x_{1:T} \subset [0,1]$ be a sequence of examples, let $y_{1:T}$ be a sequence of labels, and let $\ell$ be an $L_{\mathbf{los}}$-Lipschitz and convex loss function.
Suppose that there is a Predictor that satisfies $M_{\calP}(x_{1:T}) = \tO{T^p}$, then there is an online learner $\calA$ with minimax expected regret satisfying $R^{\mathbf{ol}}(T,\calF^*) = \tO{L_{\mathbf{los}} \cdot T^{\frac{1+p}{2}}}$.
That is, $\calF^*$ is online learnable with predictions if $p<1$.
\end{corollary}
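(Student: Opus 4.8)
The plan is to obtain this corollary as a direct specialization of \thmref{main:lds:0:1}, plugging in the transductive regret rate for bounded-variation classes already established in \corref{trans:bv}. The only work is bookkeeping on the exponents, so I do not expect a genuine obstacle here.

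First I would apply \thmref{main:lds:0:1} with the hypothesized mistake-bound $M_{\calP}(x_{1:T}) = \tO{T^p}$. Instantiating the generic transductive learner $\calB$ with a near-optimal transductive learner for $\calF^*$, this yields an online learner $\calA$ (namely \algref{rewa} run with this $\calB$) whose minimax expected regret is at most
\[\tO{T^p}\, R_{\calB}^{\mathbf{tr}}\!\left(T^{1-p},\calF^*\right) + \sqrt{T\log^2 T}.\]

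Next I would substitute the transductive bound for bounded variation. \corref{trans:bv} gives $R^{\mathbf{tr}}(m,\calF^*) = \tO{L_{\mathbf{los}}\sqrt{Vm}}$; since $\sqrt{\cdot}$ is concave, the concave sublinear majorant $\bar{R}^{\mathbf{tr}}_{\calB}$ used inside \algref{rewa} coincides with this rate up to constants, so $\calB$ can be taken to realize it. Evaluating at $m = T^{1-p}$ gives $R_{\calB}^{\mathbf{tr}}(T^{1-p},\calF^*) = \tO{L_{\mathbf{los}}\sqrt{V}\,T^{(1-p)/2}}$.

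Finally I would combine the exponents. Multiplying by the $\tO{T^p}$ prefactor yields a leading term $\tO{L_{\mathbf{los}}\sqrt{V}\,T^{\,p+(1-p)/2}} = \tO{L_{\mathbf{los}}\sqrt{V}\,T^{(1+p)/2}}$, using the identity $p + \tfrac{1-p}{2} = \tfrac{1+p}{2}$. Since $p \ge 0$ forces $\tfrac{1+p}{2} \ge \tfrac12$, this term dominates the additive $\sqrt{T\log^2 T} = \tO{T^{1/2}}$, so treating $V$ as a constant the total is $\tO{L_{\mathbf{los}}\,T^{(1+p)/2}}$, as claimed. Learnability then follows immediately, since $\tfrac{1+p}{2} < 1$ is equivalent to $p < 1$, making the regret $o(T)$ exactly in that regime. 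The only point meriting a line of care is confirming that the concave-majorant step preserves the $\sqrt{\cdot}$ rate, which it does precisely because the bound is already concave in the horizon.
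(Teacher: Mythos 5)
Your proposal is correct and follows exactly the paper's route: instantiate \thmref{thm:main:lds:0:1} with the mistake bound $\tO{T^p}$, substitute the transductive rate $\tO{L_{\mathbf{los}}\sqrt{VT}}$ from \corref{cor:trans:bv} at horizon $T^{1-p}$, and combine exponents to get $T^{(1+p)/2}$. The extra remark about the concave majorant preserving the $\sqrt{\cdot}$ rate is a fine (if implicit in the paper) piece of bookkeeping, but the argument is otherwise identical.
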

\begin{proof}
By \corref{cor:trans:bv}, the minimax expected regret of the transductive online regression for $\calF^*$ satisfies $R^{\mathbf{tr}}(T, \calF^*) = \tO{L_{\mathbf{los}} \cdot \sqrt{VT}}$. Combining with the bound in \thmref{thm:main:lds:0:1}, we upper bound the minimax expected regret by
\[\tO{T^p \cdot L_{\mathbf{los}} \cdot \sqrt{VT^{1-p}}} + \sqrt{T \log^2 T} = \tO{L_{\mathbf{los}} \cdot T^{\frac{1+p}{2}}},\]
which proves our desired bound. 
Additionally, we assume that the sequence of examples is predictable in our setting, i.e., $p<1$.
Thus, the minimax expected regret of our algorithm is $o(T)$.
\end{proof}

We assume that there is a Predictor $\calP$ that satisfies, for any sequence $x_{1:T} \subset \calX$, its mistake-bound is sublinear in $T$ under the zero-one metric. 
Then the class of functions with bounded variation is online learnable. 
This implies a gap between online regression with predictions and online regression in the worst-case scenario, since $\calF^*$ has an infinite sequential fat-shattering dimension, which characterizes online learnability.

\paragraph{Minimax regret under \texorpdfstring{$\eps$}{eps}-ball metric.}
Now, we compute the minimax expected regret under the $\eps$-ball metric.
In the following theorem, we assume that the rate of Predictor is $M_{\calP}(\eps, x_{1:T}) = \tO{\frac{T^p}{\eps^q}}$.
\begin{theorem}[$\eps$-ball metric]
\thmlab{thm:main:lds:eps}
Let $x_{1:T}$ be a sequence of examples, let $y_{1:T}$ be a sequence of labels, and let $\ell$ be an $L_{\mathbf{los}}$-Lipschitz and convex loss function.
Suppose that there is a Predictor that satisfies $M_{\calP}(\eps, x_{1:T}) = \tO{\frac{T^p}{\eps^q}}$, then for any $L_{\mathbf{hyp}}$-Lipschitz function class $\calF \subset [0,1]^{\calX}$, there is an online learner $\calA$ with minimax expected regret at most
\[\inf_{\eps>0}\left\{\tO{ \frac{T^p}{\eps^q}} R_{\calB}^{\mathbf{tr}}\left(\eps^q T^{1-p},\calF\right) + \eps L_{\mathbf{los}} L_{\mathbf{hyp}} \cdot T + \sqrt{T \log^2 T}\right\}.\]
\end{theorem}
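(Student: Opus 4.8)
The plan is to reduce the theorem directly to the per-$\eps$ guarantee of \algref{alg:rewa:eps} that was already established in \lemref{lem:bound:mwa:eps}, and then to dispose of the fact that $\eps$ must be committed to in advance. First I would invoke \lemref{lem:bound:mwa:eps}, which gives, for any \emph{fixed} input $\eps>0$, an online learner whose minimax expected regret is at most
\[2(M_{\calP}(\eps, x_{1:T})+1)\bar{R}^{\mathbf{tr}}_{\calB}\left(\frac{T}{M_{\calP}(\eps, x_{1:T})+1}+1,\calF\right) + \eps L_{\mathbf{los}} L_{\mathbf{hyp}} \cdot T + \sqrt{T \log T}.\]
Since the hypothesis supplies the explicit rate $M_{\calP}(\eps, x_{1:T}) = \tO{T^p/\eps^q}$, I would substitute this quantity into both occurrences of $M_\calP$.

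The substitution is a routine calculation. The leading factor $M_\calP(\eps,x_{1:T})+1$ becomes $\tO{T^p/\eps^q}$ (the $+1$ is absorbed when $T^p/\eps^q \gtrsim 1$ and is otherwise dominated), and for the argument of $\bar R^{\mathbf{tr}}_\calB$ I would use
\[\frac{T}{M_\calP(\eps,x_{1:T})+1}+1 = \tilde\Theta\!\left(\eps^q T^{1-p}\right),\]
so that monotonicity of $\bar R^{\mathbf{tr}}_\calB$ turns the first term into $\tO{T^p/\eps^q}\,\bar R^{\mathbf{tr}}_\calB(\eps^q T^{1-p},\calF)$. Replacing the concave majorant $\bar R^{\mathbf{tr}}_\calB$ by $R^{\mathbf{tr}}_\calB$ (they agree up to the constants and polylogarithmic factors hidden in $\tO{\cdot}$) and keeping the additive Lipschitz term $\eps L_{\mathbf{los}} L_{\mathbf{hyp}} T$ yields exactly the bracketed expression of the theorem, for each fixed $\eps$.

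It remains to pass to the infimum over $\eps$, and this is where the only genuine difficulty lies: the learner of \lemref{lem:bound:mwa:eps} consumes $\eps$ as an input, so a single run fixes one value. Because the rate $\tO{T^p/\eps^q}$ is assumed known here, the bracketed bound is an explicit deterministic function of $\eps$ (depending only on $T,p,q,L_{\mathbf{los}},L_{\mathbf{hyp}}$, not on the realized sequence), so I would simply precompute a minimizer $\eps^\star$ of the shape $a/\eps^q + b\,\eps$ and run the learner at $\eps^\star$, immediately attaining $\inf_{\eps>0}$. To make the argument robust to not knowing the exact constants, I would instead wrap the construction in an outer multiplicative-weights layer (\thmref{thm:mwa}) whose experts are the learners of \lemref{lem:bound:mwa:eps} instantiated at the geometric grid $\eps\in\{2^i : 2^{i}<\poly(T)\}$. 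This grid has only $\O{\log T}$ points, so the MWA overhead is $\sqrt{\tfrac{T}{2}\log\O{\log T}} = \tO{\sqrt{T}}$, absorbed into the $\sqrt{T\log^2 T}$ term, and since the dependence on $\eps$ is polynomial (the $T^p/\eps^q$ factor together with the linear $\eps L_{\mathbf{los}} L_{\mathbf{hyp}} T$), the nearest grid point to the true optimizer perturbs each summand only multiplicatively. The hard part is precisely verifying this last geometric-discretization step, i.e.\ that rounding $\eps$ to a power of two changes the whole bound by at most a constant factor so that the outer MWA matches $\inf_{\eps>0}$ up to the factors already hidden by $\tO{\cdot}$; everything preceding it is substitution into an already-proved lemma.
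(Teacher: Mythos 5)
Your proposal is correct and takes essentially the same route as the paper: the paper's proof likewise consists of invoking \lemref{lem:bound:mwa:eps} and substituting $M_{\calP}(\eps,x_{1:T})=\tO{T^p/\eps^q}$ into both occurrences of the mistake bound. Your extra care about actually realizing $\inf_{\eps>0}$ with a single algorithm (precomputing the minimizer from the known rate, or running an outer MWA over a geometric grid of $\eps$ values) is precisely the discussion the paper gives informally in its algorithm-overview paragraph rather than inside the proof, so you have if anything filled a small gap the paper leaves implicit.
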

\begin{proof}
By \lemref{lem:bound:mwa:eps}, for any function class $\calF \subset \calY^{\calX}$ of $L_{\mathbf{hyp}}$-Lipschitz function, $L_{\mathbf{los}}$-Lipschitz and convex loss function $\ell$, and data stream $(x_1,y_1),\ldots,(x_T,y_T)$, the expected loss of \algref{alg:rewa:eps} is bounded by 
\[2(M_{\calP}(\eps, x_{1:T})+1)\bar{R}^{\mathbf{tr}}_{\calB}\left(\frac{T}{M_{\calP}(\eps, x_{1:T})+1}+1,\calF\right) + \eps L_{\mathbf{los}} L_{\mathbf{hyp}} \cdot T + \sqrt{T \log^2 T}.\]
Inputting $M_{\calP}(\eps, x_{1:T}) = \tO{\frac{T^p}{\eps^q}}$ to the above bound gives
\[R^{\mathbf{ol}}(T,\calF) = \inf_{\eps>0}\left\{\tO{ \frac{T^p}{\eps^q}} \bar{R}^{\mathbf{tr}}_{\calB}\left(\eps^q T^{1-p},\calF\right) + \eps L_{\mathbf{los}} L_{\mathbf{hyp}} \cdot T + \sqrt{T \log^2 T}\right\}.\]
Thus, we finish the proof.
\end{proof}

Like before, we remark that a Predictor satisfying the conditions in \thmref{thm:main:lds:eps} can be constructed, if for example, the sequence of examples $x_t \in \mathbb{R}^n$ are generated by a noise-less dynamical system which need not be perfectly identifiable, but identifiable up to an error of $\eps$ in finite time \citep{jansson1998consistency, hazan2017learning}.  

The next statement shows the minimax expected regret for the class of functions $\calF^*$ on $[0,1]$ with bounded variation. 
\begin{corollary}[Function class with bounded variation, $\eps$-ball metric]
\corlab{cor:lds:eps}
Let $\calF^*$ be a set of $L_{\mathbf{hyp}}$-Lipschitz $f: [0,1] \to [0,1]$ with total variation of at most $V$, let $x_{1:T} \subset [0,1]$ be a sequence of examples, let $y_{1:T}$ be a sequence of labels, and let $\ell$ be an $L_{\mathbf{los}}$-Lipschitz and convex loss function.
Suppose that there is a Predictor that satisfies $M_{\calP}(\eps, x_{1:T}) = \tO{\frac{T^p}{\eps^q}}$, then there is an online learner $\calA$ with minimax expected regret satisfying 
\[R^{\mathbf{ol}}(T,\calF^*) = \tO{L_{\mathbf{los}} L_{\mathbf{hyp}}^{\frac{q}{q+2}} \cdot T^{\frac{p+q+1}{q+2}}}.\]
That is, if the length of sequence is chosen as $T^c \polylog(T) \ge L_{\mathbf{hyp}}$ for some constant $c$, then $\calF^*$ is online learnable with prediction if $p+cq<1$.
\end{corollary}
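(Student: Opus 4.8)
The plan is to instantiate \thmref{thm:main:lds:eps} with the transductive online learner $\calB$ guaranteed by \corref{cor:trans:bv}, whose regret on $\calF^*$ satisfies $R_{\calB}^{\mathbf{tr}}(n,\calF^*) = \tO{L_{\mathbf{los}}\sqrt{Vn}}$, and then to optimize the resulting expression over $\eps$. First I would substitute the argument $n = \eps^q T^{1-p}$ into this transductive bound to get $R_{\calB}^{\mathbf{tr}}(\eps^q T^{1-p},\calF^*) = \tO{L_{\mathbf{los}}\sqrt{V\eps^q T^{1-p}}}$. Plugging this into the first term of \thmref{thm:main:lds:eps} and collecting powers, the prefactor $\tO{T^p/\eps^q}$ multiplies $\eps^{q/2}T^{(1-p)/2}$ and produces $\tO{L_{\mathbf{los}}\sqrt{V}\,T^{(p+1)/2}\eps^{-q/2}}$. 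Thus, suppressing polylogarithmic and constant ($V$) factors as well as the $\eps$-independent additive $\sqrt{T\log^2 T}$, the quantity to minimize over $\eps$ reads
\[
\tO{L_{\mathbf{los}}\sqrt{V}\,T^{(p+1)/2}\eps^{-q/2}} + \eps\, L_{\mathbf{los}}L_{\mathbf{hyp}}T.
\]
Here I would note that $\sqrt{Vn}$ is already concave in $n$, so the concave majorant $\bar R^{\mathbf{tr}}_{\calB}$ used in \thmref{thm:main:lds:eps} has the same order and introduces no loss.

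The two $\eps$-dependent terms are monotone in opposite directions, so I would balance them by equating them up to constants (equivalently, by setting the $\eps$-derivative to zero). Solving $T^{(p+1)/2}\eps^{-q/2} \asymp \eps L_{\mathbf{hyp}}T$ yields the optimal scale $\eps^\star \asymp L_{\mathbf{hyp}}^{-2/(q+2)}\,T^{-(1-p)/(q+2)}$. Substituting $\eps^\star$ back into either term and simplifying the exponent of $T$ to $(p+q+1)/(q+2)$ and that of $L_{\mathbf{hyp}}$ to $q/(q+2)$ gives
\[
R^{\mathbf{ol}}(T,\calF^*) = \tO{L_{\mathbf{los}}L_{\mathbf{hyp}}^{q/(q+2)}\,T^{\frac{p+q+1}{q+2}}},
\]
which is exactly the claimed rate. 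I would then check that the additive $\sqrt{T\log^2 T}$ is dominated: since $(p+q+1)/(q+2)\ge 1/2$ whenever $2p+q\ge 0$ (always true), the displayed term is at least $\tO{\sqrt{T}}$ and absorbs the additive contribution.

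For the learnability consequence I would substitute $L_{\mathbf{hyp}} = \tO{T^c}$ into the rate, so the exponent of $T$ becomes $\frac{cq}{q+2} + \frac{p+q+1}{q+2} = \frac{cq+p+q+1}{q+2}$; this is strictly below $1$ precisely when $cq+p+q+1 < q+2$, i.e.\ when $p+cq<1$, in which case $R^{\mathbf{ol}}(T,\calF^*)=o(T)$ and $\calF^*$ is online learnable with predictions. I do not expect a genuine obstacle here beyond careful exponent bookkeeping, since the argument is just an optimization of the ready-made bound of \thmref{thm:main:lds:eps}. The one point deserving mild care is that the theorem's bound is already stated with $\inf_{\eps>0}$, so it suffices to exhibit the single admissible choice $\eps^\star$ rather than argue global optimality; I would confirm $\eps^\star$ is positive and polynomially bounded so that the geometric ``guessing'' of $\eps$ via MWA (as in the algorithm overview) actually realizes this rate.
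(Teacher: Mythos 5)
Your proposal is correct and follows essentially the same route as the paper: instantiate \thmref{thm:main:lds:eps} with the transductive regret bound $\tO{L_{\mathbf{los}}\sqrt{VT}}$ from \corref{cor:trans:bv}, reduce the first term to $\tO{L_{\mathbf{los}}\sqrt{V}\,T^{(1+p)/2}\eps^{-q/2}}$, and balance it against $\eps L_{\mathbf{los}}L_{\mathbf{hyp}}T$ to obtain the stated rate and the $p+cq<1$ learnability condition. Your additional remarks (concavity of $\sqrt{Vn}$, domination of the additive $\sqrt{T\log^2 T}$ term, and realizability of $\eps^\star$ via geometric guessing) are correct refinements the paper leaves implicit.
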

\begin{proof}
By \corref{cor:trans:bv}, the minimax expected regret of the transductive online regression for $\calF^*$ satisfies $R^{\mathbf{tr}}(T, \calF^*) = \tO{L_{\mathbf{los}} \cdot \sqrt{VT}}$.
Combining with the bound in \thmref{thm:main:lds:eps}, we upper bound the minimax expected regret by
\[\inf_{\eps>0}\left\{\tO{ \frac{T^{\frac{1+p}{2}}}{\eps^{\frac{q}{2}}} \cdot L_{\mathbf{los}} \sqrt{V}} + \eps L_{\mathbf{los}} L_{\mathbf{hyp}} \cdot T + \sqrt{T \log^2 T}\right\} = \tO{L_{\mathbf{los}} L_{\mathbf{hyp}}^{\frac{q}{q+2}} \cdot T^{\frac{p+q+1}{q+2}}}.\]
Suppose that $L_{\mathbf{hyp}} = \tO{T^c}$ for some constant $c$, then we have $R^{\mathbf{ol}}(T,\calF) = \tO{T^{\frac{p+(c+1)q+1}{q+2}}}$, so $\calF^*$ is learnable if $p+cq < 1$.
\end{proof}

This result implies that the minimax expected regret of our algorithm scales with the quality of the Predictor.
Now, we construct a class of functions with bounded variation, such that it is not online learnable in the worst case but online learnable given a good Predictor.
Unfortunately, our algorithm requires Lipschitzness of the function classes, and the sequential fat-shattering dimension is equivalence to the fat-shattering dimension for Lipschitz classes up to negligible factors, so we do not achieve better rates for general Lipschitz classes.

The key observation here is that the rate of the transductive online learner for functions with bounded variation has no dependence on the Lipschitz factors (see \corref{cor:trans:bv}).
Thus, if the error scale of the Predictor is sufficiently small, e.g., the extreme case when $M_\calP(\eps, x_{1:T}) = \O{1}$, we get rid of the Lipschitz dependence of the minimax expected regret of our algorithm.
Then, we observe a gap between online learning in the worst case and online learning with prediction for classes with large Lipschitz constants.
For instance, we consider the following class of ramp functions. 
\begin{definition}[Class of ramp functions]
\deflab{def:ramp}
We define the class of ramp functions to consist of all functions
\[
f_{a,b}(x) = 
\begin{cases} 
0 & \text{if } 0 < x < a, \\
\frac{x - a}{b - a} & \text{if } a \leq x \leq b, \\
1 & \text{if } b < x < 1,
\end{cases}
\]
with $0<a<b<1$ and $b = a + \frac{1}{M}$.
\end{definition}

Note that by \thmref{thm:ol} the minimax expected regret is roughly $\sqrt{L_{\mathbf{hyp}} T} = \sqrt{MT}$ in the online setting, then suppose that the length of sequence $T$ is chosen as $T = M$ by the adversary, we have no guarantee of the learnability of the function class in \defref{def:ramp} in the worst-case scenario.
However, suppose that we have a Predictor satisfying $p+q<1$, then by \corref{cor:lds:eps}, the function class is learnable with prediction using our online learners.

\section{Conclusions}
In this paper, we study the problem of online regression in both the transductive and learning-augmented settings. 
In the transductive setting, we establish near-tight bounds on the minimax expected regret under the $\ell_1$-loss, showing that it is characterized by the fat-shattering dimension rather than the more restrictive sequential fat-shattering dimension. This separates transductive online learnability with online learnability for several critical function classes. 
In the online regression with predictions setting, we design algorithms whose regret adapts smoothly to the quality of the Predictor, interpolating between the worst-case and the transductive regime.
We identify sufficient conditions on the Predictor that ensure the online learnability. 

Our results provide a unified theoretical framework that separates these settings for online regression and opens several directions for future work, including empirical validation of our methods and further exploration of Predictors under general metrics.
To complement our theoretical analysis, we included in the Appendix experiments on specific function classes. These illustrate that having prior information of the sequence of examples enables better empirical performance.
While our small-scale experiments highlight the potential practical gains of our framework, a more systematic and large-scale empirical validation remains an important direction for future work.
In addition, extending our framework to accommodate Predictors under more general metrics could yield deeper insights into the practical effectiveness of the learning-augmented online regression framework.

\allowdisplaybreaks

\def\shortbib{0}
\bibliographystyle{plainnat}
\bibliography{references,vinod_ref}

\appendix
\section{Experiments}
In this section, we present experiments to justify our theoretical results.
The experiments are conducted using an Apple M2 CPU, with 16 GB RAM and 8 cores. 

\paragraph{Experiment Setup.}
We generate the sequence of examples $x_{1:T} \subset \mathbb{R}^d$ by a linear dynamical system: $x_{t+1} = A x_{t}$, where $A \in \mathbb{R}^{d \times d}$ is a random stable matrix.
We restrict each example to having a support size $c < d$.
In particular, we randomly select $c$ indices from $[d]$ and set $x_t(i) = 0$ for all other indices and $t \in [T]$.
We also set the transition matrix $A$ to have the same support as $x_t$.
This construction captures the sparse nature of the sequence of examples in various applications.
For instance, in energy management (see \exref{ex:transductive}), a list of variables $x_t \in \mathbb{R}^d$ can affect the energy consumption $y_t$, however, in specific circumstances, only $c$ of these variables have a significant influence (e.g., the temperature may change by only a few during a month), so the others are set to $0$ in $x_t$.
In the experiment, we compare the accumulative loss in the transductive online setting, where the learner knows the sequence of examples $x_{1:T}$ in advance, and the online setting, where the learner does not have additional information.
We show that the learner achieves better performance in the transductive online setting.

We choose the ground-truth function class $\calF$ to be all $c$-junta hyperplanes in $\mathbb{R}^d$ with coefficients in $\{-1+0.4 \cdot i, i \in [5]\}$.
We consider the regression problem under the additive noise model, i.e., we choose the target function $f^*$ from $\calF$ randomly and assign the label $y_t$ to each example $x_t$ by $f^*(x_t) + g_t$, where $g_t \in \mathcal{N}(0,0.01)$ is random Gaussian noise, representing noisy measurements in real-life scenarios.
We evaluate the learner using the $\ell_1$-loss function: $\ell(y, \widehat{y}) = |y - \widehat{y}|$.
We choose the parameters $d = 8, c = 4$, and $T=1000$.

\paragraph{Methods.}
In the experiment, we implement the multiplicative weight algorithm (MWA), which randomly samples the advice of $K$ experts.
For the baseline method, we set the experts as the entire function class $\calF$, since the learner does not have information on $x_{1:T}$ in the online setting.
In contrast, in the transductive online setting, the learner observes $x_{1:T}$ in advance, and so it knows the support $\{i_1, \ldots, i_c\} \subset [d]$ of $x_t$. 
Thus, we implement MWA on the restricted function class, which is a subset of $\calF$ and has $c$-juntas being $\{i_1, \ldots, i_c\}$.
We compute the total $\ell_1$-loss of both methods at all times $t \in [T]$.
We run $10$ repetitions and plot the mean loss curve.

\paragraph{Results.}
As shown in \figref{fig:restricted:vs:full}, MWA with the restricted net (red line) exhibits a much steeper initial decline in cumulative loss compared to MWA on the entire net (blue line), indicating faster convergence toward the ground-truth function. 
In addition, the restricted net consistently maintains a lower error throughout all rounds, with the gap widening over time.

The experiment demonstrates that using the support information of the input sequence of examples significantly improves learning outcomes. 
The result highlights the empirical separation between the standard online setting and the transductive online setting: access to the example sequence allows the learner to focus on a refined function class, thereby achieving lower regret in practice.

\begin{figure}[!ht]
    \centering
    \includegraphics[width=0.8\linewidth]{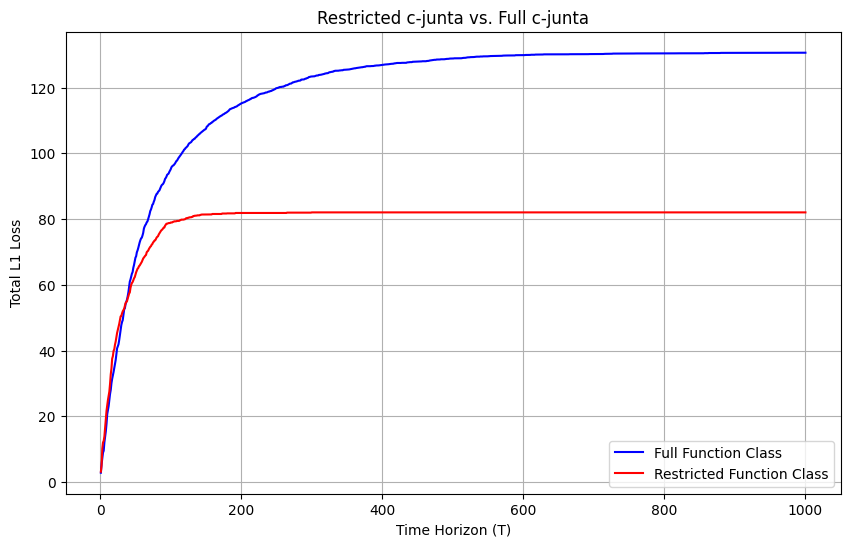}
    \caption{Comparison between the performance of MWA on the entire net and the restricted net. The blue line is the entire net, and the red line is the restricted net.}
    \figlab{fig:restricted:vs:full}
\end{figure}

\end{document}